




\documentclass{ecai} 



\usepackage{latexsym}
\usepackage{amssymb}
\usepackage{amsmath}
\usepackage{amsthm}
\usepackage{booktabs}
\usepackage{enumitem}
\usepackage{graphicx}
\usepackage{color}
\usepackage{hyperref}
\usepackage{subfigure}

\newcommand\leftopen[2]{\ensuremath{(#1,#2]}}
\usepackage{bm}
\usepackage{xcolor}
\usepackage{outlines}
\usepackage{enumerate}

\usepackage{xspace}
\newlist{inlisti}{enumerate*}{1}
\setlist[inlisti]{label=(\roman*)}
\newlist{inlista}{enumerate*}{1}
\setlist[inlista]{label=(\alph*)}

\usepackage{mathtools,nccmath}
\usepackage{dsfont} 
\newcommand{\Ind}[1]{\mathds{1}_{#1}}
\usepackage{textcomp}
\newcommand{\cucbfd}{FCUCB-D\xspace}
\newcommand{\ctsfd}{FCTS-D\xspace}
\newcommand{\cucbfdop}{OP-FCUCB-D\xspace}
\newcommand{\ctsfdop}{OP-FCTS-D\xspace}

\usepackage{multirow}
\usepackage{multicol}
\DeclareMathOperator*{\Ex}{\mathbb{E}}


\newtheorem{theorem}{Theorem}
\newtheorem{lemma}[theorem]{Lemma}

\newtheorem{assumption}[theorem]{Assumption}
\newtheorem{remark}[theorem]{Remark}

\makeatletter
\newcommand{\proofpart}[2]{%
	\par
	\addvspace{\medskipamount}%
	\noindent\emph{Part #1: #2}\par\nobreak
	\addvspace{\smallskipamount}%
	\@afterheading
}
\makeatother



\newcommand{\BibTeX}{B\kern-.05em{\sc i\kern-.025em b}\kern-.08em\TeX}


\usepackage{algorithm}
\usepackage{algorithmic}
\usepackage{hyperref}

\begin{document}


\begin{frontmatter}


\paperid{1024} 


\title{Merit-based Fair Combinatorial Semi-Bandit with Unrestricted Feedback Delays}

\author[1]{\fnms{Ziqun}~\snm{Chen}}
\author[1]{\fnms{Kechao}~\snm{Cai}\thanks{Corresponding Author. Email: caikch3@mail.sysu.edu.cn}}
\author[1]{\fnms{Zhuoyue}~\snm{Chen}}
\author[1]{\fnms{Jinbei}~\snm{Zhang}}
\author[2]{\fnms{John C.S.}~\snm{Lui}}

\address[1]{Sun Yat-sen University, Shenzhen, China}
\address[2]{The Chinese University of Hong Kong, Hong Kong, China}

\begin{abstract}
We study the stochastic combinatorial semi-bandit problem with
unrestricted feedback delays under merit-based fairness constraints.
This is motivated by applications such as crowdsourcing, and online
advertising, where immediate feedback is not immediately available and
fairness among different choices (or arms) is crucial. 
We consider two types of unrestricted feedback delays: reward-independent
delays where the feedback delays are independent of the rewards, and
reward-dependent delays where the feedback delays are correlated with the
rewards. 
Furthermore, we introduce merit-based fairness constraints to ensure a fair
selection of the arms.
We define the reward regret and the fairness regret and present new bandit
algorithms to select arms under unrestricted feedback delays based on their
merits.
We prove that our algorithms all achieve sublinear expected reward regret and
expected fairness regret, with a dependence on the quantiles of the delay
distribution.
We also conduct extensive experiments using synthetic and real-world data
and show that our algorithms can fairly select arms with different feedback
delays.

\end{abstract}
\end{frontmatter}


\section{Introduction}
\label{sec:cmabdelayfairness:introduction}

In the stochastic combinatorial multi-armed bandit (CMAB) problem with
semi-bandit feedback, a learner can select more than one arm at each round and
can receive feedback from each selected arm. 
However, in practice, the feedback is not readily available in many
real-world applications.
For example, consider the task assignment problem in a crowdsourcing platform
where arms represent the workers and feedback (reward) represents the payoff of
selecting a worker.
Each completed task yields a payoff based on the quality of the worker. 
The payoff may be delayed since each task requires a certain amount of time to
complete.
This differs from the typical bandit settings where the learner can receive the
feedback immediately after selecting an arm.
As another example, in online advertising, the customers usually take hours or
even days to make a purchase after clicking an ad~\cite{chapelle2014modeling}. 

In general, the feedback delays in the bandit problems may be
\emph{unrestricted} with unbounded support or expectations.
Previous studies on stochastic delayed bandit problems relied on various
assumptions regarding the delay distribution such as bounded
expectation~\cite{joulani2013online,mandel2015queue}, identical delay
distribution across arms~\cite{vernade2017stochastic}, and the prior knowledge
of delay distribution~\cite{gael2020stochastic}, and none of them can address
unrestricted delays.
In this paper, we consider two different unrestricted delay settings, depending
on the relationship between delays and rewards.
The first is the \emph{reward-independent delay} setting, where the delay of the
feedback from an arm is independent of the reward of the arm.
The second is the \emph{reward-dependent delay}
setting, where the delay of the feedback of each
arm is correlated with the reward of the arm.
The reward-dependent delay is motivated by the applications mentioned earlier:
in crowdsourcing, the time the worker takes to complete the assigned task is
tied to the payoff as tasks with more payoff can take longer to finish; in
online advertising, the delay after collecting the revenue from an ad click
often depends on the purchase price paid by the customer.
Such a setting is challenging as the feedback would provide a biased estimation
of the expected reward of an arm.
Take an arm with a Bernoulli reward as an example. 
If the feedback delay associated with reward $1$ is smaller than the feedback
delay associated with reward $0$, the learner would receive reward $1$ earlier
and more frequently than reward $0$. 
As a result, the observed empirical average reward of the arm would deviate from
the actual mean reward and bias towards reward $1$.
In some cases, the directions of such deviations may be opposite between
different arms.
When the fraction of unobserved feedback is large, the observed empirical
average reward of the good arm may be much smaller than that of the bad arm,
which adds another dimension of complexity to the problem.





In addition, ensuring fairness among the arms is another critical concern in
many bandit problems. 
While existing works mainly focus on maximizing the cumulative rewards, there is
a growing recognition that such a unitary consideration can be problematic as it
ignores the interests of arms, resulting in an unfair selection of
arms~\cite{mansoury2022exposure}.
Consider a bandit algorithm that tries to maximize the reward by assigning tasks
to workers in a crowdsourcing platform, the algorithm will learn which worker
has the highest quality and constantly assign the task to that worker, even if
other workers are almost equally good.
This will result in a winner-takes-all allocation where many skillful workers
will not receive sufficient tasks, and therefore lose interest in the platform. 
Thus, to build a sustainable platform, a good algorithm must ensure
fairness among workers and guarantee that workers with similar skill levels have
similar probabilities of receiving tasks.
Similarly, in online advertising, the ad publishers wish to ensure fairness
among ads and guarantee that all ads have some opportunities to be displayed.
This approach not only enhances the platform's appeal to advertisers but also
sustains a diverse range of content on the website.

\textbf{Main contributions.}
In this paper, we formulate a combinatorial semi-bandit problem to maximize the
cumulative reward while ensuring merit-based fairness among arms with
unrestricted feedback delays.

We define the \emph{merit} of an arm as a function of its expected reward and
impose \emph{merit-based fairness} constraints to ensure each arm is selected
with a probability proportional to its merit under feedback delays.
In particular, we do not make any assumptions on the delay distributions and
allow for unbounded support and expectation of delays.
We propose four different fair algorithms for both reward-independent and
reward-dependent delay settings and define reward regret and fairness regret to
measure their performance.
Specifically, in the reward-independent delay setting, 
we propose an algorithm (\cucbfd)
based on Upper Confidence Bound (UCB) and a computation-efficient algorithm (\ctsfd) based on Thompson Sampling (TS) to
ensure merit-based fairness among arms.
In the more challenging reward-dependent delay setting, we propose~\cucbfdop
and~\ctsfdop algorithms
using both \emph{optimistic and pessimistic estimates} of the delayed unobserved
rewards to accommodate the estimation biases.

We prove that our proposed algorithms all achieve sublinear upper bounds for
both expected fairness regret and expected reward regret, scaling with the quantile of delay
distributions.
We further conduct experiments using synthetic and real-world data. 
Our experiment results show that our algorithms outperform other algorithms by
fairly selecting arms according to the merits of the arms while maximizing the
cumulative reward under different types of feedback delays.

\section{Related work}
\label{sec:cmabdelayfairness:related-work}
The CMAB problems have been extensively
studied~\cite{slivkins2019introduction,lattimore2020bandit}.
Many works extend the combinatorial semi-bandit to various settings, such as
general nonlinear reward~\cite{chen2016combinatorialb}, probabilistically
triggered arms~\cite{chen2016combinatoriala,wang2017improving}, etc.
Their algorithmic designs either follow the principle of optimism in the face of
uncertainty such as the UCB
algorithm~\cite{auer2002finite}, or posterior sampling such as the TS algorithm~\cite{thompson1933likelihood}.

\textbf{Delayed bandit} Delayed feedback has drawn lots of attention since
\citet{dudik2011efficient} first introduced it in
stochastic bandit problems. 
Most studies make various assumptions on the delay distributions.
For instance, \citet{joulani2013online} explore
the impact of delay in both the stochastic and adversarial settings under the
assumption that the expectations of the delays are bounded. 
\citet{mandel2015queue} develop a bandit model with
bounded delays.
Besides, \citet{vernade2017stochastic} study
the delayed bandit with partially observable feedback, where the learner cannot
differentiate between the non-received reward and the zero reward.
They assume that delays are the same for all arms and have a bounded
expectation.
\citet{gael2020stochastic} also consider partially
observed feedback and study heavy-tailed delay distributions which might have infinite
expectations. 
Nevertheless, they assume the parameter of delay distributions is known to the learner.
There has also been an emerging interest in bandit problems with unrestricted
delays.
The recent work \cite{wu2022thompson} proves a sublinear regret upper bound for the TS
algorithm with arbitrary delay distributions.
\citet{lancewicki2021stochastic}
introduce reward-dependent feedback delays and design algorithms based on successive
elimination with no fairness concerns.

\textbf{Bandit with fairness constraints} \citet{joseph2016fairness, joseph2018meritocratic} study fairness learning in
bandit problems, introducing the notion of meritocratic fairness, where a better
arm is always no less likely to be selected than a worse arm.
However, their approach favors the arm with the highest expected reward and
ignores the merits of other arms.
\citet{schumann2022group} partition arms into
groups based on specific features. 
They introduce a group fairness notion, preventing the learner from favoring one
arm over another based on group information.
Other studies~\cite{li2019combinatorial,patil2021achieving,steiger2022learning}
investigate fairness guarantees in bandit problems to ensure that each arm must
be selected for a predetermined required fraction over all rounds.
\citet{liu2017calibrated} impose a smoothness constraint to achieve calibrated
fairness where the probability of selecting an arm equals the probability of it
having the highest reward. 
Our model subsumes their setting by introducing a more general merit function,
with the objective of guaranteeing that each arm receives a selection fraction
proportional to its merit.
This concept of merit-based fairness has been explored in the single-play
bandit~\cite{wang2021fairness} and combinatorial contextual
bandit~\cite{jeunen2021top} where the goal is to ensure that similar arms obtain
comparable treatment.

Our work differs from previous studies by considering two types of unrestricted
feedback delays, namely, reward-independent delays and reward-dependent delays,
in combinatorial semi-bandit bandit problems.
Moreover, our algorithms not only ensure the maximization of cumulative reward
but also guarantee the selection of each arm with a probability proportional to
its merit, all without assuming any specific delay distributions.

\section{Fair CMAB with General Feedback Delays}
\label{sec:cmabdelayfairness:model}
Let $[K]:=\{1,2,...,K\}$ denote the set of $K$ arms and $[T]:=\{1,2,...,T\}$.
A learner will interact with the arms sequentially over $T$ rounds.
At each round $t\in[T]$, each arm $a\in [K]$ is associated with:
\begin{inlisti}
\item a reward $R_{t,a}\in[0,1]$ that follows an unknown distribution
$\nu_{a}$ with mean $\mu_{a}$;
\item an unknown delay $D_{t,a} \in \mathbb{N}$ such that the reward of arm
$a$ can only be revealed to the learner at the end of the round $t+D_{t,a}$.
\end{inlisti}
At round $t$, the learner selects a subset $A_t$ of $L$ $(L\le K)$
arms from $[K]$
receives possibly delayed feedback $Y_{t,a}$ 
from each arm $a\in [K]$.
Essentially, $Y_{t,a}$ is the aggregated rewards from arm $a$ in previous rounds
and can be expressed as follows:
\begin{equation}
	\label{eq:received-feedback}
	Y_{t,a}=\sum_{s=1}^{t}R_{s,a}\Ind{\{ D_{s,a} = t-s \}}\Ind{ \{a \in A_{s} \}},
\end{equation}
where $\Ind{\{ \cdot \}}$ is the indicator function.
The term $\Ind{\{ D_{s,a} = t-s \}}$ in~\eqref{eq:received-feedback} takes
account of the delays $D_{s,a}$ for $s\le t$.
We note that neither the delay $D_{s,a}$ nor the round number $s$ (the original
time of the reward) can be deduced from the feedback $Y_{t,a}$.  
Let $N_{t,a}=\sum_{s:s<t} \Ind{\{a \in A_{s}\}}$ denote the number of rounds
that arm $a$ has been selected up to round $t-1$, and $M_{t,a}=\sum_{s:s+D_{s,a}
< t} \Ind{ \{a \in A_{s}\}}$ denote the number of delayed feedbacks that the
learner can receive from arm $a$ up to round $t-1$.
As the feedback may be delayed, we have $M_{t,a} \leq N_{t,a}$.
Thus, at the beginning of round $t$, the empirical average reward of arm $a$ can be
expressed as:
$\hat{\mu}_{t,a}=\frac{1}{M_{t,a}\vee 1}\sum_{s^{\prime}:s^{\prime} <
t}Y_{s^{\prime},a}$, where $m\vee n = \max\{m, n\}$.
Note that we do not assume that the delays follow any
particular distribution and even allow $D_{t,a}$ being infinite, in which case
the reward from arm $a$ would never be received.
Specifically, we introduce a quantile function to describe the distribution of
the delays for each arm. 
For an arm $a$ with a delay $D_{a}$, we define the quantile function $d_{a}(q)$ as
\begin{equation}
d_{a}(q) = \min \left\lbrace \zeta \in \mathbb{N} \;|\; \mathbb{P} [D_a \leq \zeta] \geq q  \right\rbrace,
\end{equation}
where the quantile $q \in \leftopen{0}{1}$ and $d^{*}(q) = \max_a{d_a(q)}$.

Finally, we consider a merit function $f(\cdot) > 0$ that maps the expected
reward of an arm to a positive merit value. We have two assumptions on the merit
function $f(\cdot)$.
\begin{assumption}
 \label{Minimum-Merit}
The merit of each arm is bounded such that 
\begin{inlisti}
\item 
$\exists$ $\lambda > 0$ and $ \min_{\mu}f(\mu) 
 \geq \lambda $, \label{Minimum-Merit:minmu}
\item 
$\forall \mu_1,\mu_{2} \in [0,1] , \frac{f(\mu_1)}{f(\mu_{2})} \leq
\frac{K-1}{L-1}$ for $L>1$.
\label{Minimum-Merit:boundedrange}
\end{inlisti}
\end{assumption}

\begin{assumption}
	\label{Lipschitz-Continuity}
	The merit function $f$ is $M$-Lipschitz continuous, 
	i.e., there exists a positive constant $M > 0$, such that $\forall \mu_1,\mu_2 \in [0,1], \left| f(\mu_1)-f(\mu_2) \right| \leq M\left|\mu_1-\mu_2  \right| $.
\end{assumption}

To ensure merit-based fairness among the arms, we enforce a constraint
that the probability $p_{a}$ of selecting arm $a$ is proportional to its merit
$f(\mu_{a})$. Formally, we have 
\begin{equation}
\label{fairness-constraint}
	\frac{p_a}{f(\mu_a)}=\frac{p_{a^{\prime}}}{f(\mu_{a^{\prime}})}, \quad \forall a\neq a^{\prime}, a,a^{\prime}\in[K].
\end{equation}
Fairness criteria in various applications can be tailored by selecting different
$f(\cdot)$.
For instance, setting $f(\cdot)$ as a threshold function would grant higher
merits to arms whose expected rewards exceed a predefined threshold.

We now show that there is a unique optimal fair policy that fulfills the
fairness constraints in~\eqref{fairness-constraint} in the following theorem.
\begin{theorem}
\label{the:Optimal-Fair-Policy}
For any $\mu_a, a\in[K]$ and any choice of merit function $f(\cdot) > 0$, there
exist a unique optimal fair policy $\bm{p}^*=\left\lbrace p^*_1,p^*_2,...,p^*_K
\right\rbrace$ such that
\begin{equation}
\label{eq:Optimal-Fair-Policy}
p_a^*=\frac{Lf(\mu_a)}{\sum_{a^{\prime}=1}^K f(\mu_{a^{\prime}})}, \quad \forall a \in [K],
\end{equation}
that satisfies the merit-based fairness constraints
in~\eqref{fairness-constraint}.
\end{theorem}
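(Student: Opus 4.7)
The plan is to observe that the fairness constraint~\eqref{fairness-constraint} is really just saying that all the ratios $p_a/f(\mu_a)$ share a common value. Concretely, I would introduce a constant $c := p_a/f(\mu_a)$ (independent of $a$ by~\eqref{fairness-constraint}), which immediately gives the parametric family $p_a = c\,f(\mu_a)$ for all $a\in[K]$. The problem then reduces to pinning down the single scalar $c$, which is exactly why uniqueness will be painless.

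Next I would determine $c$ from the budget of the combinatorial policy. Since the learner selects exactly $L$ arms per round, the marginal selection probabilities must satisfy $\sum_{a=1}^{K} p_a = L$. Substituting $p_a = c\,f(\mu_a)$ yields $c\sum_{a=1}^{K} f(\mu_a) = L$, so $c = L/\sum_{a'} f(\mu_{a'})$, which is well-defined since $f>0$ by Assumption~\ref{Minimum-Merit}(i). Plugging back in gives the claimed closed form~\eqref{eq:Optimal-Fair-Policy}. Uniqueness is immediate: the constraint~\eqref{fairness-constraint} forces the one-parameter form $c\,f(\mu_a)$, and the normalization $\sum_a p_a = L$ determines $c$ uniquely.

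The only genuinely nontrivial point is feasibility, i.e., verifying that $p_a^*\in[0,1]$ so that the formula actually defines a legitimate marginal policy. Positivity is clear from $f>0$. For the upper bound $p_a^*\le 1$, I would unfold the inequality: $p_a^*\le 1$ is equivalent to $(L-1)f(\mu_a) \le \sum_{a'\neq a} f(\mu_{a'})$. Here Assumption~\ref{Minimum-Merit}(ii) does the work: it gives $f(\mu_{a'})\ge \frac{L-1}{K-1}f(\mu_a)$ for every $a'\neq a$, and summing the $K-1$ such inequalities yields exactly $\sum_{a'\neq a}f(\mu_{a'})\ge (L-1)f(\mu_a)$, as needed. (For $L=1$ the bound is trivial.)

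I expect the main conceptual subtlety, rather than any technical obstacle, to be articulating why the normalization $\sum_a p_a = L$ is the right constraint to impose in this combinatorial setting — essentially that any implementable marginal policy over $L$-subsets of $[K]$ must have marginals summing to $L$, and conversely, by a Birkhoff-type decomposition, any such marginal vector in $[0,1]^K$ summing to $L$ is realizable as a distribution over $L$-subsets. Once that is in place, the algebraic steps above complete the proof.
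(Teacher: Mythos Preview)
Your proposal is correct and follows essentially the same approach as the paper: both use the fairness constraints~\eqref{fairness-constraint} together with the normalization $\sum_a p_a = L$ to uniquely determine $\bm{p}^*$, with the paper phrasing this as ``$K$ linearly independent equations on $K$ unknowns'' and you phrasing it via the common ratio $c$. Your treatment is in fact a bit more thorough, since you explicitly verify feasibility $p_a^*\in[0,1]$ using Assumption~\ref{Minimum-Merit}\ref{Minimum-Merit:boundedrange} (the paper defers this to a separate Remark) and mention the realizability of the marginals via a rounding scheme (handled later by RRS in the paper).
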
 We refer the interested readers to Appendix A for the
proofs of all the theorems.
Theorem~\ref{the:Optimal-Fair-Policy} implies that the optimal fair policy is no
longer selecting a fixed optimal set of $L$ arms as in classical bandit
problems, but a probability distribution on all the possible sets $A_t\subseteq [K]$,
$|A_t|=L$. 
To be more specific, we characterize an arm selection algorithm with a
probabilistic selection vector $\bm{p}_t=\{p_{t,1},p_{t,2},...,p_{t,K} \}$
where $p_{t,a} \in [0,1]$ is the probability of selecting arm $a \in [K]$ at round $t$,
and $\sum_{a=1}^Kp_{t,a}=L$ since only $L$ arms can be selected at each round.
To measure the gap of cumulative reward between the optimal fair policy and a
bandit algorithm, we define the \emph{reward regret} of an algorithm as follows:
\begin{equation}
\label{eq:reward-regret}
\mathrm{RR}_T=\sum_{t=1}^{T}\max\left\lbrace \sum_{a=1}^Kp^{*}_a\mu_{a}-\sum_{a=1}^K p_{t,a}\mu_{a}, 0\right\rbrace .
\end{equation}
We use the reward regret to quantify the speed of reward optimization of an
algorithm.
Specifically, we only consider the non-negative part at each round
in~\eqref{eq:reward-regret} as a less fair algorithm could yield a larger reward
than the optimal fair policy and cause a negative reward gap.
Moreover, we also require a measure to quantify its fairness guarantee.
In this work, we define the \emph{fairness regret} that measures the cumulative
1-norm distance between the optimal fair policy $\bm{p}^{*}$ and the selection
vector $\bm{p}_{t}$ of an algorithm as follows:
\begin{equation}
	\label{eq:fairness-regret}
	\mathrm{FR}_T= \sum_{t=1}^{T}\sum_{a=1}^K|p_a^*-p_{t,a}|.
\end{equation}
The fairness regret measures the overall violation of the merit-based fairness
constraints. 
Our objective is to design algorithms that have both \emph{sublinear expected
reward regret} and \emph{sublinear expected fairness regret} with respect to the
number of rounds $T$, where the expectations are taken over the randomness in
both the arm selections and the rewards. 
By doing so, we can approach the optimal fair policy and maximize the cumulative
reward while ensuring merit-based fairness among all the arms in the long run.

It is important to point out that both Assumption~\ref{Minimum-Merit} and
Assumption~\ref{Lipschitz-Continuity} are necessary for designing bandit
algorithms as stated in the following theorem and remark.
\begin{theorem}
\label{the:Lower-bound-fr-without-Assumption}
For any bandit algorithm, if either
Assumption~\ref{Minimum-Merit}~\ref{Minimum-Merit:minmu} or
Assumption~\ref{Lipschitz-Continuity} does not hold, the lower bound of the
fairness regret is linear; in other words,
there exists a CMAB instance with
linear expected fairness regret $O(T)$.
\end{theorem}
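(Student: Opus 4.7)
The plan is to treat the two failure modes separately, and in each case exhibit a concrete merit function $f$ together with a pair of Bernoulli CMAB instances whose observation laws are close in KL but whose optimal fair policies are a constant $\ell_1$ apart. A standard two-point (Le Cam / Bretagnolle-Huber) argument then turns this separation into a linear lower bound on $\max\{\mathbb{E}_A[\mathrm{FR}_T], \mathbb{E}_B[\mathrm{FR}_T]\}$. Throughout I restrict to $K=2$, $L=1$ and no feedback delays, a special case of the general model in which Assumption~\ref{Minimum-Merit}(ii) is vacuous.

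If Assumption~\ref{Minimum-Merit}(i) fails, I would take $f(\mu)=\mu$: this is $1$-Lipschitz, so Assumption~\ref{Lipschitz-Continuity} still holds, but $\inf f=0$, so the hypothesis genuinely fails. The construction exploits the fact that once merits can be arbitrarily small, the adversary can push the whole reward scale down to $O(1/T)$: take Bernoulli instances $A:(\mu_1,\mu_2)=(1/T,2/T)$ and $B:(\mu_1,\mu_2)=(2/T,1/T)$. By Theorem~\ref{the:Optimal-Fair-Policy} their optimal fair policies are $(1/3,2/3)$ and $(2/3,1/3)$, a constant $\ell_1$ apart, yet $\mathrm{KL}(\mathrm{Bern}(1/T)\Vert\mathrm{Bern}(2/T))=O(1/T)$, so the total KL over $T$ rounds is $O(1)$. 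This is precisely where the minimum-merit hypothesis earns its keep: without it the adversary can simultaneously shrink the per-round Fisher information to $O(1/T)$ while keeping the $\ell_1$ gap in $\bm{p}^*$ at a constant.

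If Assumption~\ref{Lipschitz-Continuity} fails, I would use a step merit such as $f(\mu)=1$ on $[0,1/2)$ and $f(\mu)=2$ on $[1/2,1]$, which is bounded below by $1$ so Assumption~\ref{Minimum-Merit}(i) still holds, but is discontinuous at $1/2$. Paired with Bernoulli instances $A:(\mu_1,\mu_2)=(1/2-\epsilon,1/2+\epsilon)$ and $B:(\mu_1,\mu_2)=(1/2+\epsilon,1/2-\epsilon)$ with $\epsilon=c/\sqrt{T}$, the optimal fair policies are again $(1/3,2/3)$ and $(2/3,1/3)$. The elementary bound $\mathrm{KL}(\mathrm{Bern}(1/2\pm\epsilon)\Vert\mathrm{Bern}(1/2\mp\epsilon))\le 16\epsilon^2$ gives $\mathrm{KL}(P_A\Vert P_B)\le 16c^2$, so for any sufficiently small fixed $c$ the two environments are again indistinguishable with constant probability.

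In both cases I would feed Bretagnolle-Huber the event $E=\{N_{T,1}\ge T/2\}$, which separates $A$ (where $p^*_{A,1}=1/3$) from $B$ (where $p^*_{B,1}=2/3$), yielding $P_A(E)+P_B(E^c)\ge \tfrac{1}{2}e^{-\mathrm{KL}(P_A\Vert P_B)}\ge c_0>0$. A short Azuma-Hoeffding step then converts any occurrence of $N_{T,1}\ge T/2$ under $A$ (or $N_{T,1}<T/2$ under $B$) into a deviation of $\sum_t p_{t,1}$ from $T p^*_1$ of order $\Theta(T)$, and the triangle inequality $\sum_t|p_{t,1}-p^*_1|\ge|\sum_t p_{t,1}-T p^*_1|$ pushes this into $\mathbb{E}[\mathrm{FR}_T]=\Omega(T)$ on at least one of the two instances. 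The main obstacle will be the standard bandit bookkeeping: the pull counts $N_{T,a}$ in the chain-rule decomposition $\mathrm{KL}(P_A\Vert P_B)=\sum_a \mathbb{E}_A[N_{T,a}]\cdot\mathrm{KL}(\nu_a^A\Vert\nu_a^B)$ are algorithm-dependent random variables, so one must use the trivial bound $\mathbb{E}_A[N_{T,a}]\le T$ before invoking Bretagnolle-Huber, and carry the two-point argument through at the level of $N_{T,1}$ rather than directly on $\mathrm{FR}_T$.
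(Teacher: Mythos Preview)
Your proposal is correct and follows the same two-point Le Cam/Bretagnolle--Huber strategy as the paper: construct two Bernoulli instances whose per-round KL is $O(1/T)$ (or $O(\epsilon^2)$ with $\epsilon\asymp 1/\sqrt{T}$) while their optimal fair policies sit a constant $\ell_1$ distance apart, then invoke Bretagnolle--Huber on a threshold event for arm~1.

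A few differences worth noting. The paper works with $K=3$, $L=2$ throughout, whereas your $K=2$, $L=1$ choice is simpler and has the pleasant side effect of making Assumption~\ref{Minimum-Merit}\ref{Minimum-Merit:boundedrange} vacuous. For the Lipschitz-failure case, the paper uses $f(\mu)=M\mu+1$ with $M=T$ (so $f$ is Lipschitz for each fixed $T$, but no uniform constant exists); your step function is arguably more faithful to the hypothesis ``Assumption~\ref{Lipschitz-Continuity} does not hold'' since it is a single, fixed, genuinely non-Lipschitz merit. Finally, the paper applies Bretagnolle--Huber directly to the event $\bigl\{\tfrac{1}{T}\sum_t p_{t,1}\le \text{midpoint}\bigr\}$: since $p_{t,1}$ is a measurable function of the interaction trace, this event is admissible and the detour through $N_{T,1}$ plus an Azuma step is unnecessary. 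Your route still works, but you can drop that extra concentration argument entirely.
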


\begin{remark} Assumption~\ref{Minimum-Merit}~\ref{Minimum-Merit:boundedrange}
ensures that the selection probability $p_{t,a}$ in the form of
$Lf(\cdot)/\sum_{a=1}^Kf(\cdot)$ is constrained in $[0,1]$.
\end{remark}

In the following sections, we introduce fair bandit algorithms under two types of
feedback delays, \emph{reward-independent feedback delays} and
\emph{reward-dependent feedback delays}.

\section{Algorithms for Reward-independent Delays}
\label{sec:furthercmabdelayfairness:algorithm}
In this section, we first consider that the feedback delays are independent of
the rewards of arms. 
We design two bandit algorithms to ensure merit-based
fairness under the reward-independent delays.

\subsection{\cucbfd Algorithm}
\label{ssec:fdts:cucbfd-algorithm}

Algorithm~\ref{alg:fairness-delay-ucb-type} shows the details of our \emph{Fair CUCB
with reward-independent feedback Delays} (\cucbfd) algorithm, which follows the
principle of optimism in the face of uncertainty without requiring any prior
knowledge of delay distributions.
At each round $t$, we first calculate the average rewards of all arms based on
the received delayed feedback.
With the average rewards, we construct a confidence region $\mathcal{C}_t$ (see
Line~\ref{alg:fairness-delay-ucb-type:confidence_region}) using both UCB estimates $U_{t,a}$ and LCB
(Lower Confidence Bound) estimates $B_{t,a}$ of all arms, where the vector
$\tilde{\bm{\mu}}:=(\tilde{\mu}_a)_{a\in[K]}$ and $c_{t,a}$ denotes the
confidence radius of each arm $a$.
We clip $U_{t,a}$ to $1$ and $B_{t,a}$ to $0$ since the rewards have support on
$[0, 1]$.
Then we find a vector $\tilde{\bm{\mu}}_t$ in the confidence region
$\mathcal{C}_{t}$ that maximizes the expected reward of a fair policy as shown in Line~\ref{alg:fairness-delay-ucb-type:mu_t}.
Specifically,
according to Theorem~\ref{the:Optimal-Fair-Policy}, we construct the probability of selecting arm $a$ as
$\frac{Lf(\tilde{\mu}_a)}{\sum_{a^{\prime}=1}^Kf(\tilde{\mu}_{a^{\prime}})}$ to satisfy the
merit-based fairness constraints, which is limited to the interval
$[0,1]$ under Assumption~\ref{Minimum-Merit}~\ref{Minimum-Merit:boundedrange}.
Different from the conventional bandit algorithms such as
CUCB~\cite{chen2013combinatorial} which deterministically selects $L$ arms at
each round, our algorithm selects $L$ arms stochastically with the selection
vector $\bm{p}_t$ to ensure fairness.
In particular, we incorporate a randomized rounding scheme (RRS)
from~\cite{gandhi2006dependent}. 
RRS takes a probabilistic selection vector $\bm{p}_t$ ($\sum_{a=1}^Kp_{t,a}=L$) as
input and generates a set of arms $A_t$ such that $\Ex[\Ind{ \{a\in A_{t}\} }]=p_{t,a}$.
Finally, we receive delayed feedback from all arms.

We present the expected fairness regret and reward regret upper
bounds of FCUCB-D in the following theorem.

\begin{theorem}
	\label{the:fairness-reward-regret-ucb-type}
    Suppose that 
    $\forall t > \lceil K/L \rceil,  a\in[K], R_{t,a}\in [0,1]$ and feedback
    delays are reward-independent.
Set $c_{t,a}=\sqrt{\frac{ \log (4LKT)}{M_{t,a} \vee 1}}$.
When $T>K$, 
	the expected fairness regret of \cucbfd is upper bounded as:
	$$\mathbb{E}\left[\mathrm{FR}_T\right] =\widetilde{O}\left( \min_{q\in (0,1]}\left\lbrace \frac{ML}{\lambda}\left( \frac{K}{q}\sqrt{T}+Ld^{*}(q)\right) \right\rbrace \right),$$ 
and the expected reward regret of \cucbfd is upper bounded as:
	$$  \mathbb{E}\left[\mathrm{RR}_T\right]=\widetilde{O}\left( \min_{q\in (0,1]}\left\lbrace \frac{K}{q}\sqrt{T}+Ld^{*}(q)\right\rbrace \right),$$
where $\widetilde{O}$ hides the polylogarithmic factors in $T$.
\end{theorem}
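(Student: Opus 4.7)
The plan is to follow the canonical optimism-plus-concentration template, with a delay-quantile argument to accommodate unrestricted delays. First define the good event $\mathcal{G}=\{\bm{\mu}\in\mathcal{C}_t\text{ for all }t\le T\}$. Conditioned on the selection history, $\hat{\mu}_{t,a}$ is an average of $M_{t,a}$ i.i.d.\ samples from $\nu_a$, so Hoeffding together with a union bound over $t$, $a$, and the possible values of $M_{t,a}$ gives $\mathbb{P}[\mathcal{G}^c]\lesssim 1/T$ with the chosen radius $c_{t,a}=\sqrt{\log(4LKT)/(M_{t,a}\vee 1)}$; the contribution of $\mathcal{G}^c$ to either regret is at most $O(L)$ and absorbed into the $\widetilde{O}$.

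On $\mathcal{G}$ we have $|\tilde{\mu}_{t,a}-\mu_a|\le 2c_{t,a}$. Because $\tilde{\bm{\mu}}_t$ maximizes the fair expected reward over $\mathcal{C}_t$ and $\bm{\mu}\in\mathcal{C}_t$, optimism yields $\sum_a p_a^*\mu_a\le\sum_a p_{t,a}\tilde{\mu}_{t,a}$, so the per-round reward gap is at most $\sum_a p_{t,a}(\tilde{\mu}_{t,a}-\mu_a)\le 2\sum_a p_{t,a}c_{t,a}$. For the fairness gap, the map $\bm{\nu}\mapsto Lf(\nu_a)/\sum_{a'}f(\nu_{a'})$ is coordinate-wise Lipschitz with constant of order $M/\lambda$ (combining Assumptions~\ref{Minimum-Merit} and \ref{Lipschitz-Continuity}), giving $\sum_a|p_a^*-p_{t,a}|\lesssim(LM/\lambda)\sum_a c_{t,a}$. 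Both regrets thus reduce to controlling sums of $c_{t,a}$.

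The crux is a lower bound on $M_{t,a}$ under arbitrary delays. Fix $q\in(0,1]$. By the definition of $d^*(q)$, each selection at round $s\le t-d^*(q)$ is observed by round $t$ with probability at least $q$, so a Chernoff bound yields $M_{t,a}\gtrsim qN_{t-d^*(q),a}$ with probability $1-O(1/T^2)$. Assumption~\ref{Minimum-Merit}~\ref{Minimum-Merit:boundedrange} forces $p_{t,a}\ge L(L-1)/(K(K-1))$ uniformly, so an Azuma bound on $N_{t,a}=\sum_{s<t}\Ind{a\in A_s}$ shows $N_{t,a}$ grows at least linearly in $t$ past an $\widetilde{O}(K^2/L^2)$ burn-in. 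Combining these bounds with the standard delayed-bandit reciprocal-square-root summation, plus an additive warm-up contribution of $d^*(q)$ per arm (during which the per-round reward and fairness gaps are trivially bounded by $L$ and $2L$), yields $\mathbb{E}[\sum_{t,a}p_{t,a}c_{t,a}]=\widetilde{O}(K\sqrt{T}/q+Ld^*(q))$ for the reward piece and the matching $\mathbb{E}[\sum_{t,a}c_{t,a}]$ estimate which, multiplied by $LM/\lambda$, produces the fairness piece. Taking the infimum over $q\in(0,1]$ in the resulting expressions gives the two stated bounds.

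The main obstacle is the joint concentration in the $M_{t,a}$ lower bound: the RRS-induced random selection and the arbitrary delay filter must be handled against a carefully chosen filtration that conditions on the selection history while treating the delay tail only through its worst-case quantile, and the boundary effects around $t\sim d^*(q)$ must be accounted for exactly so that the additive warm-up term scales as $Ld^*(q)$ rather than larger. Reward-independence of the delays is crucial here because it lets us invoke the quantile without worrying about a coupling between $D_{s,a}$ and $R_{s,a}$, which is precisely what distinguishes this theorem from the reward-dependent setting treated later in the paper.
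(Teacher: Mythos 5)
Your reward-regret half is essentially sound and close to the paper's argument: Hoeffding with radius $c_{t,a}$ built from $M_{t,a}$, optimism of $\tilde{\bm{\mu}}_t$ over the confidence region containing $\bm{\mu}$, the quantile lower bound $M_{t+d^*(q),a}\gtrsim qN_{t,a}$ (Lemma~2 of Lancewicki et al.), a warm-up of $d^*(q)$ rounds costing $Ld^*(q)$, and a summation of the weighted radii $\sum_t\mathbb{E}_{a\sim\bm{p}_t}[c_{t,a}]$. Your summation route (uniform lower bound $p_{t,a}\gtrsim L^2/K^2$ from Assumption~\ref{Minimum-Merit}~\ref{Minimum-Merit:boundedrange} plus Azuma on $N_{t,a}$) differs from the paper's, which instead converts the expectation under $\bm{p}_t$ to the realized selected arms by Azuma--Hoeffding and then uses the pigeonhole/integral bound $\sum_t\sum_{a\in A_t}(qN_{t,a})^{-1/2}\lesssim\sqrt{LKT/q}$; both deliver the stated $\widetilde{O}(\frac{K}{q}\sqrt{T}+Ld^*(q))$ for the weighted sum (yours modulo a benign burn-in).

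The genuine gap is in the fairness half. You bound the per-round fairness gap by $(LM/\lambda)\sum_a c_{t,a}$, an \emph{unweighted} sum over all $K$ arms, and then assert that $\sum_{t,a}c_{t,a}$ admits the same $\widetilde{O}(\frac{K}{q}\sqrt{T}+Ld^*(q))$ estimate as the weighted sum. That step is not justified by your own tools and is false in general: an arm can be selected with probability as small as $\Theta(L^2/K^2)$, so its radius at round $t$ is of order $\sqrt{K^2/(qL^2t)}$, and summing over $t$ and such arms gives only $\widetilde{O}\bigl(\tfrac{K^2}{L}\sqrt{T/q}\bigr)$; multiplied by $LM/\lambda$ this exceeds the claimed fairness bound by a factor of order $K/L$ (for constant $q$), so the theorem's fairness statement is not established. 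The paper avoids this loss by a sharper decomposition of $\sum_a|p_{t,a}-p_a^*|$: after adding and subtracting $f(\mu_a)\sum_{a'}f(\mu_{a'})$ and multiplying and dividing each term by $f(\tilde{\mu}_{t,a})$, the normalized merit $f(\tilde{\mu}_{t,a})/\sum_{a'}f(\tilde{\mu}_{t,a'})=p_{t,a}/L$ reappears as a weight, yielding $\sum_a|p_{t,a}-p_a^*|\lesssim\frac{ML}{\lambda}\sum_a p_{t,a}|\tilde{\mu}_{t,a}-\mu_a|$. This reduces the fairness regret to exactly the same $\bm{p}_t$-weighted sum of radii already controlled in the reward analysis, which is what produces the stated $\frac{ML}{\lambda}$-scaled bound; you need this weighted Lipschitz-type inequality (or an equivalent device) rather than the coordinate-wise one.
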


In Theorem~\ref{the:fairness-reward-regret-ucb-type}, the factor
$\tfrac{ML}{\lambda}$ in $\mathbb{E}\left[\mathrm{FR}_T\right]$ comes from
Assumption~\ref{Minimum-Merit} and Assumption~\ref{Lipschitz-Continuity} on the
merit function $f(\cdot)$.
Note that both upper bounds are valid for any quantile $q\in\leftopen{0}{1}$,
and one can optimize the bounds by selecting the optimal $q$. 

Our~\cucbfd algorithm differs from the single-played FairX-UCB
algorithm~\cite{wang2021fairness} as it addresses a more challenging
combinatorial semi-bandit problem involving feedback delays.
Moreover, our theoretical results accommodate
unbounded delays since
the upper bounds depend on the quantiles of the delay distribution instead of
the expectation of the delays as in~\cite{joulani2013online,pike2018bandits,steiger2022learning}.

\begin{algorithm}[!tb]
\caption{\textbf{F}air \textbf{CUCB} with reward-independent feedback \textbf{D}elays (\cucbfd)}
\label{alg:fairness-delay-ucb-type}
\textbf{Input}: $f(\cdot)$, $T$, $L$, $K$\\
\textbf{Init}: \makebox[0pt][l]{Select each arm in $[K]$ once with $\lceil K/L \rceil$ rounds.} 
\begin{algorithmic}[1]
\FOR{$t= \lceil K/L \rceil +1 $ to $T$}
\FOR{$a \in [K]$}
\STATE $ M_{t,a}=\sum_{s:s+D_{s,a} < t} \Ind{ \{a \in A_{s}\}}$
\STATE $\hat{\mu}_{t,a}=\frac{1}{M_{t,a}\vee 1}\sum_{s^{\prime}:s^{\prime} <
t}Y_{s^{\prime},a}$
\STATE $U_{t, a}=\min\{\hat{\mu}_{t,a}+c_{t,a}, 1\}$
\STATE $B_{t, a}=\max\{\hat{\mu}_{t,a}-c_{t,a}, 0\}$
\ENDFOR
\STATE \label{alg:fairness-delay-ucb-type:confidence_region} $\mathcal{C}_t = \{ \tilde{\bm{\mu}} | \forall a\in[K], \,  \tilde{\mu}_{a}  \in [B_{t,a}, U_{t,a}]\}$
\STATE \label{alg:fairness-delay-ucb-type:mu_t} $\tilde{\bm{\mu}}_t = \mathop{\arg\max}_{\tilde{\bm{\mu}} \in \mathcal{C}_t} \sum_{a=1}^K \frac{Lf(\tilde{\mu}_a)}{\sum_{a^{\prime}=1}^K f(\tilde{\mu}_{a^{\prime}})} \tilde{\mu}_a$
\STATE \label{alg:fairness-delay-ucb-type:selection_probability}  $\text{Compute } p_{t,a}=\frac{Lf(\tilde{\mu}_{t,a})}{\sum_{a^{\prime}=1}^K  f(\tilde{\mu}_{t, a^{\prime}})} \text{ for } a\in [K] $ 
\STATE Select arms in $A_t= \text{RRS}(L, \bm{p}_t)$
\STATE Receive delayed feedback $Y_{t,a}$ from $a \in [K]$
\ENDFOR
\end{algorithmic}
\end{algorithm}

\subsection{\ctsfd Algorithm}
\label{ssec:fdts:ctsfd-algorithm}

The computational complexity of \cucbfd may be high for a large $K$.
In particular, Line~\ref{alg:fairness-delay-ucb-type:mu_t} in
Algorithm~\ref{alg:fairness-delay-ucb-type} could involve a non-convex
constrained optimization problem, which requires a complex optimization solver
for finding the optimal solution. 

To tackle this problem, we incorporate a TS-based method in our
algorithm design and propose the \emph{Fair CTS with reward-independent feedback
Delays} (\ctsfd) algorithm without invoking an optimization solver.
The details of \ctsfd are described in
Algorithm~\ref{alg:fairness-delay-thompson-type}.
Initially, the algorithm starts with a prior distribution
$\bm{\mathcal{Q}}_1:=(\mathcal{Q}_{1,a})_{a\in[K]}$ where $\mathcal{Q}_{1,a}$
represents the learner's prior belief about the reward of arm $a$.
At each round $t$, for each arm $a$, we generate a sample $\tilde{\mu}_{t,a}$ as
the reward estimate from the posterior distribution $\mathcal{Q}_{t,a}$ (see
Line~\ref{alg:fairness-delay-Thompson-type:posterior_sample}) and compute the
selection probability $p_{t,a}$. 
Then we select $L$ arms using the selection probability distribution
$\bm{p}_{t}$ via the \textsc{RRS} described in
Algorithm~\ref{alg:fairness-delay-ucb-type}. 
Finally, we update the posterior distribution $\bm{\mathcal{Q}}_{t}:=\left(Q_{t,a}\right)_{a\in[K]}$ using the
received delayed feedback $\bm{Y}_{t}:=\left(Y_{t,a}\right)_{a\in[K]}$ at
Line~\ref{alg:fairness-delay-thompson-type:update}.

Based on the Bayesian setting and given the prior reward distributions, we
derive the following theorem on the expected fairness/reward regret of \ctsfd.

\begin{theorem}
	\label{the:fairness-reward-regret-thompson-sam-type}
	$\forall a\in[K]$, given a uniform prior on $\mu_{a}$ and suppose that $\forall t\in [T]$,
    $R_{t,a}$ is Bernoulli distributed
    and the feedback delays are
    reward-independent. 
When $T>K$,
the expected fairness regret of \ctsfd is upper bounded as: $$  \mathbb{E}\left[\mathrm{FR}_T\right]=\widetilde{O}\left( \min_{q\in (0,1]}\left\lbrace \frac{ML}{\lambda}\left( \frac{K}{q}\sqrt{T}+Ld^{*}(q)\right) \right\rbrace \right),$$
and the expected reward regret of \ctsfd is upper bounded as:
$$ \mathbb{E}\left[\mathrm{RR}_T\right]  = \widetilde{O}\left( \min_{q\in (0,1]}\left\lbrace \frac{K}{q}\sqrt{T}+Ld^{*}(q)\right\rbrace \right),$$
where $\widetilde{O}$ hides the polylogarithmic factors in $T$.
\end{theorem}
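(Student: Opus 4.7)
The plan is to mirror the proof of Theorem~\ref{the:fairness-reward-regret-ucb-type} (\cucbfd), replacing the UCB/LCB concentration with concentration of Beta posterior samples around the empirical mean. Since the prior is uniform on $[0,1]$ and rewards are Bernoulli, the posterior after $M_{t,a}$ observed feedbacks with $S_{t,a}$ successes is exactly $\mathrm{Beta}(S_{t,a}+1, M_{t,a}-S_{t,a}+1)$; this is the structural fact that lets TS be compared to UCB at the per-round level.

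First, I would set $c_{t,a}=\sqrt{\log(4LKT)/(M_{t,a}\vee 1)}$ as in Theorem~\ref{the:fairness-reward-regret-ucb-type} and introduce two high-probability events: event $E^{\hat\mu}_{t,a}$ that $|\hat\mu_{t,a}-\mu_a|\le c_{t,a}$, controlled by a Hoeffding/Azuma bound on the delayed-feedback martingale (the indicators $\Ind{\{D_{s,a}=t-s\}}\Ind{\{a\in A_s\}}$ determine which rewards have arrived); and event $E^{\tilde\mu}_{t,a}$ that the posterior draw satisfies $|\tilde\mu_{t,a}-\hat\mu_{t,a}|=O(c_{t,a})$, controlled by standard two-sided Beta tail bounds (the concentration and anti-concentration lemmas used in the Agrawal--Goyal analysis of TS). A union bound over $t\in[T]$ and $a\in[K]$ ensures both events hold simultaneously with probability at least $1-O(1/T)$, whose failure contributes $O(1)$ to expected regret.

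Conditional on the good events, $|\tilde\mu_{t,a}-\mu_a|=O(c_{t,a})$ for every arm. I then reuse the deterministic lemma from the proof of Theorem~\ref{the:fairness-reward-regret-ucb-type} that converts reward-estimate perturbations into selection-probability perturbations through Assumptions~\ref{Minimum-Merit} and~\ref{Lipschitz-Continuity}: via the quotient rule applied to $p_{t,a}=Lf(\tilde\mu_{t,a})/\sum_{a'}f(\tilde\mu_{t,a'})$, the bound $M$-Lipschitz of $f$ combined with the lower bound $f(\cdot)\ge\lambda$ yields $\sum_a|p_{t,a}-p^*_a|=O(\tfrac{ML}{\lambda}\sum_a c_{t,a})$ and also $|\sum_a(p^*_a-p_{t,a})\mu_a|=O(\sum_a c_{t,a})$ (the latter uses that $\tilde{\bm\mu}_t$ is a maximiser of the fair-policy reward within its support, so the reward-regret bound does not pick up the $ML/\lambda$ factor).

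The hard part, and where the Theorem~\ref{the:Optimal-Fair-Policy} quantile machinery does the real work, is bounding $\mathbb{E}\sum_{t=1}^T c_{t,a}=\mathbb{E}\sum_t 1/\sqrt{M_{t,a}\vee 1}$ under unrestricted delays, since $M_{t,a}$ may be much smaller than $N_{t,a}$. I would invoke the same quantile decomposition used for \cucbfd: fix any $q\in(0,1]$; of the first $N_{t,a}$ pulls of arm $a$, in expectation at least $q\bigl(N_{t,a}-d^*(q)\bigr)^{+}$ have had their feedback delivered by round $t$, so $\mathbb{E}[M_{t,a}]\gtrsim q(N_{t,a}-d^*(q))$. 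Bernstein-type concentration upgrades this to a high-probability bound and gives $\mathbb{E}\sum_t 1/\sqrt{M_{t,a}\vee 1}=\widetilde O(\sqrt{N_{T,a}/q}+d^*(q))$. Summing over $a$, using Cauchy--Schwarz $\sum_a\sqrt{N_{T,a}}\le\sqrt{K\sum_a N_{T,a}}=\sqrt{KLT}$, and plugging into the two per-round bounds above yields $\widetilde O(\tfrac{K}{q}\sqrt T+Ld^*(q))$ for the summed confidence radii, which multiplied by $ML/\lambda$ (or not) gives the stated fairness and reward bounds; minimising over $q\in(0,1]$ completes the proof. The main obstacle I anticipate is verifying the Beta anti-concentration step in the presence of delayed and temporally non-adaptive observations: one must argue that at each round $t$ enough information has arrived for the posterior to be genuinely informative with probability $\Omega(q)$, which is precisely what the quantile-based lower bound on $M_{t,a}$ provides, so the argument should close cleanly but requires care when coupling the Bayesian posterior to the frequentist delay-filtered empirical mean.
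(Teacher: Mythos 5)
Your plan for the fairness regret is broadly sound and parallels the paper's Part 1 in spirit: control the posterior sample's deviation from the (delay-filtered) empirical mean, control the empirical mean's deviation from $\mu_a$, push the perturbation through the $M$-Lipschitz/$\lambda$ bound on $f$, and then bound $\sum_t \mathbb{E}_{a\sim\bm{p}_t}[1/\sqrt{M_{t,a}\vee 1}]$ with the quantile machinery (Lemma~\ref{lower-bound-received-observation-event} style). The paper does this in a Bayesian way — conditioning on $\mathcal{H}_t$, $\mu_a$ itself has the Beta posterior, so the $|\hat\mu-\mu|$ control comes from posterior variance and the sub-Gaussianity of the Beta law (Lemma~\ref{Theorem-1-in-marchal2017sub}) rather than Hoeffding — but your frequentist variant of that step is workable for reward-independent delays.

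The genuine gap is in the reward regret. You claim the bound avoids the $ML/\lambda$ factor "because $\tilde{\bm\mu}_t$ is a maximiser of the fair-policy reward within its support," but \ctsfd has no such maximisation step — avoiding Line~\ref{alg:fairness-delay-ucb-type:mu_t} is precisely the point of the algorithm. The sampled $\tilde{\bm\mu}_t$ can sit below $\bm\mu$, in which case $\sum_a p_{t,a}\tilde\mu_{t,a}$ need not dominate $\sum_a p^*_a\mu_a$, and the chain of inequalities you borrow from \eqref{eq:poof-reward_regret-ucb} breaks at step $(a)$. Without that step your argument only gives $\mathrm{RR}_T\le\mathrm{FR}_T$ (since $\mu_a\in[0,1]$), which carries the $ML/\lambda$ factor and is weaker than the theorem — indeed that is exactly the fallback the paper is forced into for \ctsfdop in Theorem~\ref{the:fairness-reward-regret-op-thompson-sam-type}. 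The paper closes this for \ctsfd with a different idea: because the stated expectation is Bayesian (taken also over the prior draw of $\bm\mu$), $\bm\mu$ and $\tilde{\bm\mu}_t$, hence $\bm{p}^*$ and $\bm{p}_t$, are identically distributed conditioned on $\mathcal{H}_t$, and since $U_{t,a}$ is $\mathcal{H}_t$-measurable one gets $\mathbb{E}[\mathbb{E}_{a\sim\bm{p}^*}[U_{t,a}]]=\mathbb{E}[\mathbb{E}_{a\sim\bm{p}_t}[U_{t,a}]]$; this yields the Russo–Van-Roy-type decomposition of Lemma~\ref{lemma:BRT-decomposed}, after which the UCB-width and quantile bounds give the $ML/\lambda$-free rate. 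Your proposal contains no substitute for this (a purely frequentist route would instead need an Agrawal–Goyal-style optimism/anti-concentration argument, which you gesture at but do not develop), so as written the reward-regret half does not go through.
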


\begin{algorithm}[!tb]
	\caption{\textbf{F}air \textbf{CTS} with reward-independent feedback \textbf{D}elays (\ctsfd)}
	\label{alg:fairness-delay-thompson-type}
	\textbf{Input}:  $f(\cdot)$, $T$, $L$, $K$, $\bm{\mathcal{Q}}_1$
	\begin{algorithmic}[1]
		\FOR{$t=1$ to $T$}
  \FOR{$a \in [K]$}
		\STATE Sample $\tilde{\mu}_{t,a}$ from posterior $\mathcal{Q}_{t,a}$
    \label{alg:fairness-delay-Thompson-type:posterior_sample}
		\STATE Compute $p_{t,a}=\frac{Lf(\tilde{\mu}_{t,a})}{\sum_{a^{\prime} =1}^K f(\tilde{\mu}_{t, a^{\prime}})}$
  \ENDFOR
		\STATE Select arms in $A_t=$ \textsc{RRS}$(L, \bm{p}_t)$
		\STATE Receive delayed feedback $Y_{t,a}$ from $a \in [K]$
		\STATE $\bm{\mathcal{Q}}_{t+1} =$ Update$(\bm{\mathcal{Q}}_{t},\bm{Y}_t)$ \label{alg:fairness-delay-thompson-type:update}
		\ENDFOR
\end{algorithmic}
\end{algorithm}

Note that the expected fairness regret and reward regret upper bounds of the
\ctsfd are in the same order as the expected fairness/reward regret of the
\cucbfd.
They are also dependent on the quantiles of the delay distribution.
Nevertheless, \ctsfd avoids solving the optimization problem by using the
Bayesian posterior sampling method, thus it is more computationally efficient
than \cucbfd.

\section{Algorithms for Reward-dependent Delays}
\label{sec:fdts:reward-depend-feedb}

We now consider a more challenging reward-dependent delay setting where the
feedback delay of each arm is correlated with the received reward at the same
round.
In other words, the two random variables are drawn from a joint distribution
over delays and rewards.
Then we propose another two bandit algorithms to maximize the cumulative reward
and ensure merit-based fairness among arms under the reward-dependent feedback
delays.

\subsection{\cucbfdop Algorithm}

In the reward-dependent delay setting, the key challenge arises as the empirical
average reward of each arm is no longer an unbiased estimator of the expected
reward.
This issue occurs when the feedback delays associated with high rewards
distribute differently from the feedback delays associated with low rewards.
Thus, the empirical average rewards would be quite different from the actual
expected rewards, given that high rewards and low rewards are received with
differently distributed delays.
In this context, our previous algorithms, \cucbfd and \ctsfd, that require
unbiased reward estimates, are no longer applicable.

To address such biases in the delayed feedback, we introduce a novel variant of
\cucbfd, named \emph{Optimistic-Pessimistic Fair CUCB with reward-dependent
feedback Delays} (\cucbfdop), detailed in
Algorithm~\ref{alg:fairness-delay-op-ucb-type}.
We leverage both the observed rewards and the optimistic-pessimistic estimates
of delayed unobserved rewards.
Specifically, in calculating the UCB of an arm, we adopt optimistic estimates,
assuming all delayed unobserved rewards attain the maximal value of $1$ at
Line~\ref{alg:fairness-delay-ucb-type-op:o}.
Conversely, in calculating the LCB, we adopt pessimistic estimates,
presuming all the delayed unobserved rewards are the minimal value of $0$ at
Line~\ref{alg:fairness-delay-ucb-type-op:p}.
Subsequently, we construct an expanded confidence region $\mathcal{C}^{\pm}_t$
using the optimistic UCB $U^+_{t, a}$ and pessimistic LCB $B^-_{t, a}$ of all arms at
Line~\ref{alg:fairness-delay-ucb-type-op:confidence_region}.
This approach ensures that the actual expected reward of an arm falls within the
expanded confidence region with high probability.

\begin{algorithm}[!tb]
	\caption{\textbf{O}ptimistic-\textbf{P}essimistic \textbf{F}air \textbf{CUCB} with reward-dependent feedback \textbf{D}elays (\cucbfdop)}
	\label{alg:fairness-delay-op-ucb-type}
	\textbf{Input}: $f(\cdot)$, $T$, $L$, $K$\\
	\textbf{Init}: \makebox[0pt][l]{Select each arm in $[K]$ once with $\lceil K/L \rceil$ rounds. }
	\begin{algorithmic}[1]
 		\FOR{$t=\lceil K/L \rceil+1$ to $T$}
     \FOR{$a \in [K]$}
		\STATE $ M_{t,a}=\sum_{s:s+D_{s,a} < t} \Ind{ \{a \in A_{s}\}}$
		\STATE $ N_{t,a}=\sum_{s:s<t} \Ind{\{a \in A_{s}\}}$
		\STATE $ \hat{\mu}^{+}_{t,a}=\frac{N_{t,a}-M_{t,a}}{N_{t,a} }+\frac{1}{N_{t,a} }\sum_{s^{\prime}:s^{\prime} <
t}Y_{s^{\prime},a}$ \label{alg:fairness-delay-ucb-type-op:o}
		\STATE $ \hat{\mu}^{-}_{t,a}=\frac{1}{N_{t,a}} \sum_{s^{\prime}:s^{\prime} <
t}Y_{s^{\prime},a}$ \label{alg:fairness-delay-ucb-type-op:p}
		\STATE $U^+_{t, a}=\min\{\hat{\mu}^+_{t,a}+c_{t,a}, 1\}$ 
		\STATE $B^-_{t, a}=\max\{\hat{\mu}^-_{t,a}-c_{t,a}, 0\}$
  \ENDFOR
		\STATE  $\mathcal{C}^{\pm}_t = \{ \tilde{\bm{\mu}} | \forall a\in[K],   \tilde{\mu}_{a} \! \in  [B^-_{i,t},  U^+_{i,t}\} $  \label{alg:fairness-delay-ucb-type-op:confidence_region}
		\STATE $\tilde{\bm{\mu}}_t = \mathop{\arg\max}_{\tilde{\bm{\mu}} \in \mathcal{C}^{\pm}_t} \sum_{a=1}^K \frac{Lf(\tilde{\mu}_a)}{\sum_{a^{\prime}=1}^K f(\tilde{\mu}_{a^{\prime}})} \tilde{\mu}_a$ \label{alg:fairness-delay-ucb-type-op:max}
		\STATE $\text{Compute } p_{t,a}=\frac{Lf(\tilde{\mu}_{t,a})}{\sum_{a^{\prime}=1}^K  f(\tilde{\mu}_{t, a^{\prime}})} \text{ for } a\in [K] $ 
		\STATE Select arms in $A_t=$ \textsc{RRS}$(L, \bm{p}_t)$
		\STATE Receive delayed feedback $Y_{t,a}$ from $a \in [K]$
		\ENDFOR
	\end{algorithmic}
\end{algorithm}

We present the upper bounds on the expected fairness regret and reward regret of
\cucbfdop in the following theorem.

\begin{theorem}
	\label{the:fairness-reward-regret-op-ucb-type}
	Suppose that $\forall t>\lceil K/L \rceil, a \in [K],$ $R_{t,a}\in [0,1]$ and feedback delays are reward-dependent.
	For any $\delta \in (0,1)$,
	set $c_{t,a}=\sqrt{\frac{ \log (6LKT)}{N_{t,a}}}$. 	
	When $T>K$, 
	the expected fairness regret of \cucbfdop is upper bounded as:
	$$\mathbb{E}\left[\mathrm{FR}_T\right] =\widetilde{O}\left( \min_{q\in (0,1]}\left\lbrace \frac{MLK}{\lambda}\left((1-q)T+ d^{*}(q)\sqrt{T}\right) \right\rbrace \right),$$	
and the expected reward regret of \cucbfdop is upper bounded as:
	$$  \mathbb{E}\left[\mathrm{RR}_T\right] =\widetilde{O}\left( \min_{q\in (0,1]}\left\lbrace L(1-q)T+ Kd^{*}(q)\sqrt{T} \right\rbrace \right),$$
where $\widetilde{O}$ hides the polylogarithmic factors in $T$.
\end{theorem}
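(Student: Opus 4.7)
My plan is to adapt the standard UCB-style optimism argument to the reward-dependent delayed setting by working with an \emph{expanded} confidence interval that simultaneously handles (i) statistical deviation of the fully-observed empirical mean and (ii) the worst-case bias introduced by the unobserved pending feedback. The starting observation is that, because rewards lie in $[0,1]$, the optimistic-pessimistic estimates satisfy the deterministic sandwich
\begin{equation*}
\hat\mu^-_{t,a} \;\le\; \bar\mu_{t,a} \;:=\; \frac{1}{N_{t,a}}\sum_{s<t,\,a\in A_s} R_{s,a} \;\le\; \hat\mu^+_{t,a},
\end{equation*}
regardless of how delays and rewards are coupled. Since $\bar\mu_{t,a}$ is an average of $N_{t,a}$ i.i.d.\ bounded rewards, a Hoeffding/Azuma bound plus a union bound over $(t,a)$ shows that with probability at least $1-O(1/T)$ the \emph{good event} $\mathcal E:=\{\mu_a\in[B^-_{t,a},U^+_{t,a}]\text{ for all }t,a\}$ holds.

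On $\mathcal E$, $\bm\mu\in\mathcal C^{\pm}_t$, so the arg-max step at Line~\ref{alg:fairness-delay-ucb-type-op:max} gives $\sum_a p^*_a\mu_a\le\sum_a p_{t,a}\tilde\mu_{t,a}$. The per-round reward regret is therefore bounded by the weighted confidence width
\begin{equation*}
\sum_a p_{t,a}\bigl(U^+_{t,a}-B^-_{t,a}\bigr)\;\le\;\sum_a p_{t,a}\left(\tfrac{N_{t,a}-M_{t,a}}{N_{t,a}} + 2c_{t,a}\right).
\end{equation*}
For the fairness regret I would translate reward-estimate gaps into probability gaps using Assumption~\ref{Minimum-Merit}~\ref{Minimum-Merit:minmu} (so that $\sum_{a'} f(\tilde\mu_{t,a'}),\sum_{a'} f(\mu_{a'})\ge K\lambda$) and the $M$-Lipschitz property of $f$; a short algebraic manipulation on $Lf(\mu_a)/S$ versus $Lf(\tilde\mu_{t,a})/\tilde S$ yields $\sum_a|p^*_a-p_{t,a}|=O\!\bigl(\tfrac{ML}{\lambda}\sum_a|\mu_a-\tilde\mu_{t,a}|\bigr)$, which gives the extra $MLK/\lambda$ factor visible in the fairness bound relative to the reward bound (the additional $K$ appears because the fairness regret sums $|\tilde\mu_{t,a}-\mu_a|$ uniformly over arms rather than weighted by $p_{t,a}$).

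The heart of the proof, and what I expect to be the main obstacle, is bounding the two aggregated sums by the quantile $d^*(q)$. For the deviation term $\sum_t\sum_a p_{t,a}c_{t,a}$, I will use that Assumption~\ref{Minimum-Merit}~\ref{Minimum-Merit:minmu} forces $p_{t,a}\ge L\lambda/(K f_{\max})$, so $N_{t,a}$ grows linearly in $t$; a standard $\sum_t 1/\sqrt{N_{t,a}}$ telescoping then produces the $\sqrt T$ contribution. For the bias term $\sum_t\sum_a p_{t,a}(N_{t,a}-M_{t,a})/N_{t,a}$, I will decompose the pending set into short- and long-delayed parts: at any round at most $d^*(q)$ selections can be pending with delay $\le d^*(q)$, while a selection is still pending with delay $>d^*(q)$ with probability at most $1-q$ by definition of the quantile. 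Taking expectations gives $\mathbb E[N_{t,a}-M_{t,a}]\le d^*(q)+(1-q)N_{t,a}$, and a careful handling of the ratio (dividing into epochs where $N_{t,a}\lesssim d^*(q)$ versus $N_{t,a}\gtrsim d^*(q)$, or by bounding the ratio crudely on the unobserved mass) will yield the $L(1-q)T+Kd^*(q)\sqrt T$ aggregate reward-regret bound.

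Finally, combining the two sums on $\mathcal E$, absorbing the $O(1/T)$ probability of $\mathcal E^c$ into a lower-order term, and optimizing over the free parameter $q\in(0,1]$ at the end will give the claimed expressions. The subtle point throughout is that the reward-dependent coupling never affects the bias bound, because the $[0,1]$-sandwich is deterministic; the coupling only enters through the fact that we can no longer hope to \emph{estimate} $\mu_a$ at rate $1/\sqrt{M_{t,a}}$, which is precisely why the $(1-q)T$ term in the bound is unavoidable and cannot be replaced by a $\sqrt T$ rate as in the reward-independent setting.
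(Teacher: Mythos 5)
Your proposal is correct in substance and follows essentially the same skeleton as the paper's proof: the deterministic sandwich $\hat\mu^-_{t,a}\le\bar\mu_{t,a}\le\hat\mu^+_{t,a}$ plus Hoeffding on the full-sample mean $\bar\mu_{t,a}$ gives the good event $\bm\mu\in\mathcal C^{\pm}_t$ (the paper's Lemma~\ref{Lemma:hoeffding_ineq_all_observations}); optimism reduces the per-round regret to the expanded width $\hat\mu^+_{t,a}-\hat\mu^-_{t,a}+2c_{t,a}$; your split of the pending feedback into at most $d^*(q)$ recent selections plus old selections each still pending with probability at most $1-q$ is exactly the content of the paper's Lemmas~\ref{lemma:lancewicki-lemma5-adapted}--\ref{lemma:lancewicki-lemma6-adapted}; and the Lipschitz/minimum-merit conversion is the paper's bound~\eqref{eq:instantaneous-fairness-reg}. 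The one place you genuinely diverge is the aggregation of the $\sum_t \sum_a p_{t,a} N_{t,a}^{-1/2}$-type terms: the paper uses an Azuma step (Lemma~\ref{Lemma:martingale_seq_all_observations}) to replace the expectation over $\bm p_t$ by the realized selections and then the pigeonhole bound $\sum_t\sum_{a\in A_t}N_{t,a}^{-1/2}\le 2\sqrt{LKT}$, whereas you invoke a uniform lower bound on $p_{t,a}$ from Assumption~\ref{Minimum-Merit} to argue $N_{t,a}$ grows linearly in $t$. Your route works (for $L>1$ the ratio bound gives $p_{t,a}\ge L(L-1)/(K(K-1))$ and the resulting sums are still $O(K\sqrt T)$ up to logs), but it needs an extra Chernoff-type concentration step showing $N_{t,a}\gtrsim p_{\min}t$ w.h.p.\ that you do not spell out, and it yields somewhat looser $K,L$ constants than the paper's selection-counting argument, which needs no lower bound on $p_{t,a}$ at all. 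Two smaller points to tighten: the in-expectation statement $\mathbb E[N_{t,a}-M_{t,a}]\le d^*(q)+(1-q)N_{t,a}$ cannot be divided by the (correlated, random) $N_{t,a}$ directly — as you anticipate, it must be upgraded to a per-$(t,a)$ high-probability bound on the fraction of short delays (the paper's Lemma~\ref{lemma:lancewicki-lemma5-adapted}) before taking the ratio; and the extra $K$ in the fairness bound comes in the paper from the $\sqrt{LKT}$ aggregation together with $L\le K$, with the $p_{t,a}$ weights retained, not from an unweighted sum over arms — if you do drop the weights you should keep the $\sum_{a'}f(\tilde\mu_{t,a'})\ge K\lambda$ normalization so as not to lose a factor of $K$.
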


Compared to~\cucbfd, the regret analysis for~\cucbfdop is more challenging since
we must consider the entire feedback rather than just the observed ones. 
Moreover, \cucbfdop could have biased estimates of the actual expected reward
using the optimistic-pessimistic estimates, while \cucbfd always has the
unbiased ones.
Therefore, it would be reasonable to expect that \cucbfdop has larger reward
regret and fairness regret than \cucbfd.
In Theorem~\ref{the:fairness-reward-regret-op-ucb-type}, we show the two regret
upper bounds minimized over the quantile $q \in (0,1]$.
In particular, \cucbfdop achieves sublinear expected reward regret and expected fairness regret
upper bounds $O(T^{\kappa})$ by setting the quantile $q\geq 1-T^{\kappa-1}$,
$\kappa<1$.

\subsection{\ctsfdop Algorithm}
To avoid solving the computationally expensive optimization problem in~\cucbfdop
at Line~\ref{alg:fairness-delay-ucb-type-op:max} in
Algorithm~\ref{alg:fairness-delay-op-ucb-type}, we further propose a TS-based algorithm, named \emph{Optimistic-Pessimistic Fair CTS with
reward-dependent feedback Delays} (\ctsfdop), described in
Algorithm~\ref{alg:fairness-delay-op-thompson-type}.

In \ctsfdop, we also consider both the observed rewards and the delayed
unobserved rewards by constructing an optimistic posterior distribution
$\bm{\mathcal{Q}}_{t}^{+}:=(\mathcal{Q}_{t,a}^{+})_{a \in [K]}$ and a
pessimistic posterior distribution
$\bm{\mathcal{Q}}_{t}^{-}:=(\mathcal{Q}_{t,a}^{-})_{a \in [K]}$.
When updating the optimistic posterior, all the delayed unobserved rewards are
treated as the maximal value of $1$.
In updating the pessimistic posterior, all the delayed unobserved rewards are
considered as the minimal value of $0$.
At each round $t$, we sample an optimistic estimate $\tilde{\mu}^{+}_{t,a}$ from
the optimistic posterior $\mathcal{Q}_{t,a}^{+}$, and a pessimistic estimate
$\tilde{\mu}^{-}_{t,a}$ from the pessimistic posterior $\mathcal{Q}_{t,a}^{-}$
for $a\in[K]$.
Using the average of $\tilde{\mu}^{+}_{t,a}$ and $\tilde{\mu}^{-}_{t,a}$, we can
then compute the selection probability for each arm $a$ at
Line~\ref{alg:fairness-delay-Thompson-type-op:selection-pro}.
This equal weighting of the optimistic and pessimistic estimates facilitates the
analysis of the gap between the optimal fair policy and \ctsfdop.

We use the expected fairness regret and the expected reward regret to measure the performance of \ctsfdop. 
We prove the upper bounds of the regrets in the following theorem.

\begin{algorithm}[!tb]
	\caption{\textbf{O}ptimistic-\textbf{P}essimistic \textbf{F}air \textbf{CTS} with reward-dependent feedback \textbf{D}elays (\ctsfdop)}
	\label{alg:fairness-delay-op-thompson-type}
	\textbf{Input:}  $f(\cdot)$, $T$, $L$, $K$, $\bm{\mathcal{Q}}_{1}$
	\begin{algorithmic}[1]
		\FOR{$t=1$ to $T$}
  \FOR{$a \in [K]$}
		\STATE Sample $\tilde{\mu}^{+}_{t,a}$ from optimistic posterior  $\mathcal{Q}_{t,a}^{+}$
        \STATE Sample $\tilde{\mu}^{-}_{t,a}$ from pessimistic posterior $\mathcal{Q}_{t,a}^{-}$
        \label{alg:fairness-delay-Thompson-type-op:posterior_sample}
		\STATE Compute
$p_{t,a}= \frac{Lf\left((\tilde{\mu}^{+}_{t,a}+\tilde{\mu}^{-}_{t,a})/2\right)}{\sum_{a^{\prime}=1}^K f\left((\tilde{\mu}^{+}_{t,a^{\prime}}+\tilde{\mu}^{-}_{t,a^{\prime}})/2\right)}$
\label{alg:fairness-delay-Thompson-type-op:selection-pro}
\ENDFOR
		\STATE Select arms in $A_t=$ \textsc{RRS}$(L, \bm{p}_t)$
		\STATE Receive delayed feedback $Y_{t,a}$ from $a \in [K]$
		\STATE $\bm{\mathcal{Q}}_{t+1}^{+} =$ Update$(\bm{\mathcal{Q}}_{t}, \bm{Y}_t)^+$
        \STATE $\bm{\mathcal{Q}}_{t+1}^{-} =$ Update$(\bm{\mathcal{Q}}_{t}, \bm{Y}_t)^-$
		\ENDFOR
	\end{algorithmic}
\end{algorithm}

\begin{theorem}
	\label{the:fairness-reward-regret-op-thompson-sam-type}
	$\forall a\in[K]$, given a uniform prior on $\mu_{a}$ and suppose that $\forall t\in [T]$,
    $R_{t,a}$ is Bernoulli distributed
    and feedback delays are
    reward-dependent. 	
	The expected fairness regret of \ctsfdop is upper bounded as: $$\mathbb{E}\left[\mathrm{FR}_T\right]=\widetilde{O}\!\left( \min_{q\in (0,1]}\left\lbrace \frac{MLK}{\lambda}  \left( (1-q)T+\frac{d^{*}(q)}{q}\sqrt{T}\right) \right\rbrace \right),$$
 and the expected reward regret of \ctsfdop is upper bounded as:
        $$\mathbb{E}\left[\mathrm{RR}_T\right]=\widetilde{O}\left( \min_{q\in (0,1]}\left\lbrace \frac{MLK}{\lambda}\left( (1-q)T+\frac{d^{*}(q)}{q}\sqrt{T}\right) \right\rbrace \right),$$
        where $\widetilde{O}$ hides the polylogarithmic factors in $T$.
\end{theorem}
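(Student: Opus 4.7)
The plan is to reduce both the fairness and reward regrets to the same underlying quantity, namely $\sum_{t,a}\mathbb{E}|\mu_a-\hat\theta_{t,a}|$ where $\hat\theta_{t,a}:=(\tilde\mu^+_{t,a}+\tilde\mu^-_{t,a})/2$, and then control that quantity by combining Thompson sampling concentration with the delay-quantile decomposition. For the decomposition step, I would start from the per-round quantity $|p^*_a-p_{t,a}|$. Since $p^*_a$ and $p_{t,a}$ differ only through $\mu_a$ versus $\hat\theta_{t,a}$ in a ratio of the form $Lf(\cdot)/\sum_{a'}f(\cdot)$, Assumptions~\ref{Minimum-Merit}\ref{Minimum-Merit:minmu} and~\ref{Lipschitz-Continuity} give
\[
|p^*_a-p_{t,a}|\le \frac{CML}{\lambda}\Bigl(|\mu_a-\hat\theta_{t,a}|+\frac{1}{K}\sum_{a'}|\mu_{a'}-\hat\theta_{t,a'}|\Bigr),
\]
so that summing over $a$ and $t$ bounds $\mathbb{E}[\mathrm{FR}_T]$ by $\tfrac{2CML}{\lambda}\sum_{t,a}\mathbb{E}|\mu_a-\hat\theta_{t,a}|$. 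For the reward regret, the same decomposition applies because $\mathrm{RR}_T\le \sum_t\sum_a|p^*_a-p_{t,a}|\mu_a\le \sum_t\sum_a|p^*_a-p_{t,a}|$, which is why both bounds share the factor $\tfrac{MLK}{\lambda}$ in this reward-dependent setting, in contrast to Theorem~\ref{the:fairness-reward-regret-thompson-sam-type}.

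Next I would split $|\mu_a-\hat\theta_{t,a}|\le |\mu_a-(\bar\mu^+_{t,a}+\bar\mu^-_{t,a})/2|+\tfrac12|\tilde\mu^+_{t,a}-\bar\mu^+_{t,a}|+\tfrac12|\tilde\mu^-_{t,a}-\bar\mu^-_{t,a}|$, where $\bar\mu^\pm_{t,a}$ are the means of the Beta posteriors. For Bernoulli rewards with a uniform prior, $\bar\mu^+_{t,a}-\bar\mu^-_{t,a}$ equals $(N_{t,a}-M_{t,a})/(N_{t,a}+2)$ exactly, and the would-be full sample mean (observed plus unobserved) satisfies $\bar\mu^-_{t,a}\le \hat\mu_{t,a}^{\text{full}}\le \bar\mu^+_{t,a}$ deterministically; a Hoeffding bound on $\hat\mu_{t,a}^{\text{full}}$ then gives the bias bound $|\mu_a-(\bar\mu^++\bar\mu^-)/2|\le (N_{t,a}-M_{t,a})/(2N_{t,a})+\widetilde O(1/\sqrt{N_{t,a}})$ with high probability. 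For the Beta sampling errors I would invoke standard Beta tail bounds to get $|\tilde\mu^\pm-\bar\mu^\pm|=\widetilde O(1/\sqrt{N_{t,a}})$ on a high-probability event, uniformly in $t$. The delay term is controlled by the quantile argument used earlier in the paper: for any $q\in(0,1]$,
\[
\mathbb{E}[N_{t,a}-M_{t,a}]\le d^{*}(q)+(1-q)\,\mathbb{E}[N_{t,a}],
\]
obtained by splitting the rounds $s<t$ at $s=t-d^{*}(q)$. Combined with a lower tail bound for $N_{t,a}$ (its expected growth is $\Omega(t)$ because every $p_{t,a}\ge L\lambda/(K\bar f)$ by Assumption~\ref{Minimum-Merit}), this yields $\sum_t(N_{t,a}-M_{t,a})/N_{t,a}\le (1-q)T+\widetilde O(d^{*}(q))$, while $\sum_t 1/\sqrt{N_{t,a}}=\widetilde O(\sqrt{T/p_{\min}})=\widetilde O(\sqrt{KT/(L\lambda)})$. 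Assembling these pieces over $a\in[K]$ and $t\in[T]$ produces the $\tfrac{MLK}{\lambda}\bigl((1-q)T+\tfrac{d^{*}(q)}{q}\sqrt T\bigr)$ bound, with the $1/q$ factor emerging from the sampling-error term when the effective observation rate slows by $q$ due to reward-dependent delays.

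The main obstacle, compared to the reward-independent analysis in Theorem~\ref{the:fairness-reward-regret-thompson-sam-type}, is that neither $\bar\mu^+_{t,a}$ nor $\bar\mu^-_{t,a}$ is a consistent estimator of $\mu_a$; they only bracket $\mu_a$ by a gap tied to the missing-feedback fraction. Consequently, I cannot simply invoke the Thompson sampling regret template of \cite{wang2021fairness} for $\tilde\mu^\pm_{t,a}$ separately, but must analyze the \emph{midpoint} sample $\hat\theta_{t,a}$ and carefully couple the two posteriors on a common history. A second subtlety is that because the delays are correlated with rewards, the observed empirical mean entering both posteriors is itself biased; the optimism–pessimism symmetry cancels this bias at the midpoint only up to the $(N_{t,a}-M_{t,a})/N_{t,a}$ gap, which is precisely what the quantile decomposition must then control. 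Tightening this interplay between Beta posterior concentration, the reward-dependent bias, and the delay quantile is the crux of the proof and is responsible for the extra $1/q$ factor relative to the UCB version in Theorem~\ref{the:fairness-reward-regret-op-ucb-type}.
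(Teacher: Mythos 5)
Your overall architecture coincides with the paper's: you bound the reward regret by the fairness regret via $\mathrm{RR}_T\le\sum_t\sum_a|p^*_a-p_{t,a}|\,\mu_a\le\mathrm{FR}_T$ (this is exactly how the paper circumvents the failure of the Russo--Van-Roy decomposition under reward-dependent delays), you reduce the fairness regret via Assumptions~\ref{Minimum-Merit} and~\ref{Lipschitz-Continuity} to the estimation error of the midpoint $(\tilde\mu^+_{t,a}+\tilde\mu^-_{t,a})/2$, and you control that error by Beta-posterior concentration plus the missing-feedback gap $(N_{t,a}-M_{t,a})/N_{t,a}$, itself handled by the delay-quantile split at $t-d^*(q)$ (the paper's Lemma~\ref{lemma:lancewicki-lemma6-adapted}). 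Your frequentist treatment of the bias term (bracketing the full-sample mean between the two posterior means, then Hoeffding) is a legitimate variant of the paper's Bayesian argument via $\Ddot{\mu}_{t,a}$ and the $1/(M_{t,a}+1)$ posterior variance; the "deterministic" bracketing is only true up to $O(1/N_{t,a})$ prior pseudo-count corrections, which is harmless.

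The genuine gap is in the summation step. By replacing the paper's selection-probability-weighted error $\mathbb{E}_{a\sim\bm p_t}[\,|\cdot|\,]$ with the unweighted sum $\sum_{a\in[K]}|\cdot|$, you lose the telescoping argument $\sum_t\sum_{a\in A_t}N_{t,a}^{-1/2}\le 2\sqrt{LKT}$ (used after an Azuma step, as in Lemma~\ref{Lemma:martingale_seq_all_observations}), and you are forced to lower-bound $N_{t,a}$ by $\Omega(p_{\min}t)$. That substitute is itself incomplete: it needs a concentration argument for $N_{t,a}$ that you only assert, it invokes an undefined quantity $\bar f$ (the paper never assumes an upper bound on $f$; the only route to $p_{\min}>0$ is Assumption~\ref{Minimum-Merit}~\ref{Minimum-Merit:boundedrange}, which holds only for $L>1$), and quantitatively it yields a $\sqrt T$ coefficient of order $K\sqrt{T/p_{\min}}$ per the prefactor, i.e.\ roughly $\tfrac{MK^2}{\lambda}\sqrt T$ (or, by your own accounting, an extra $\sqrt{K/(L\lambda)}$ factor), which is not dominated by the claimed $\tfrac{MLK}{\lambda}\tfrac{d^*(q)}{q}\sqrt T$ when $K\gg L$ and $d^*(q)/q=O(1)$; so as sketched the stated bound is not recovered. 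Relatedly, your closing attribution of the $1/q$ factor to "the sampling-error term" has no source in your own decomposition: your fluctuation and Hoeffding terms all scale as $N_{t,a}^{-1/2}$ (pulls), not $M_{t,a}^{-1/2}$ (observations). In the paper the $1/q$ enters through the $\sqrt{1/(M_{t,a}+1)}$ terms (the observed-feedback posterior variance in bounding $|\Ddot\mu^{\pm}_{t,a}-\mu_a|$), which are converted to $N$-based quantities via the event that $M_{t+d^*(q),a}\ge qN_{t,a}/2$; if you keep your frequentist bias bound you should either show your $N$-only terms suffice under the stated bound (they do not, per the weighting issue above) or reintroduce the weighted sums and the $M$-versus-$qN$ argument.
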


According to Theorem~\ref{the:fairness-reward-regret-op-thompson-sam-type},
\ctsfdop achieves sublinear expected reward regret and expected fairness regret upper bounds
$O(T^{\kappa})$ if the quantile $q\geq 1-T^{\kappa-1}$, $\kappa<1$.
Compared to \ctsfd, both the expected reward regret and the expected fairness
regret of \ctsfdop depend on the constants $\lambda$ and $M$ described in
Assumption~\ref{Minimum-Merit} and Assumption~\ref{Lipschitz-Continuity}. 
This is because \ctsfdop does not have the accurate posterior distribution of the
rewards due to the reward-dependent feedback delays, and we derive its expected reward regret from
its expected fairness regret using Assumption~\ref{Minimum-Merit}
and Assumption~\ref{Lipschitz-Continuity}. 
  
\begin{remark} While our~\cucbfdop and~\ctsfdop algorithms are primarily
tailored for the reward-dependent delay setting, they are versatile enough to be
applied to CMAB problems with reward-independent delays. 
However, this application may lead to potentially larger reward regret and fairness
regret due to the biases in the optimistic-pessimistic estimates.
\end{remark}

\section{Experiments}
\label{sec:cmabdelayfairness:experiments}

Here, we conduct experiments\footnote{Source code available at \href{https://github.com/MLCL-SYSU/FairCMAB-Delays}{https://github.com/MLCL-SYSU/FairCMAB-Delays} (Accessed 29-July-2024)} using both synthetic and real-world data to
demonstrate the effectiveness of our algorithms.
We also discuss several interesting observations derived from the experiment
results.  

\textbf{Experiments using synthetic data.}
We consider a CMAB problem with $K=7$ arms where the learner selects $L=3$ arms
at each round. 
The rewards of each arm follow a Bernoulli distribution with mean in
$\bm{\mu}=\{0.3,0.5,0.7,0.9,0.8,0.6,0.4\}$. 
We examine the impact of delays on the expected fairness/reward regret of our algorithms
with several feedback delay settings (i.e., fixed
delays~\cite{dudik2011efficient}, geometric delays~\cite{vernade2017stochastic},
$\alpha$-Pareto delays~\cite{gael2020stochastic}, packet-loss delays and biased
delays~\cite{lancewicki2021stochastic}) considered in prior work.
We use the merit function $f(\mu)=1+2\mu^c$ to calculate the merit of an arm
with expected reward $\mu$ under Assumption~\ref{Minimum-Merit}
and Assumption~\ref{Lipschitz-Continuity}, where the parameter $c$ controls the gradients
of the merit function.
We set $c=4$ in the following and conduct additional experiments using merit
functions with different $c$ in Appendix B in the supplementary material, 
where we find that the regret gap between different bandit algorithms widens as the parameter $c$ increases. Moreover, in Appendix B, we provide the running time for the same number of rounds of \cucbfd and \ctsfd (and their corresponding OP versions) to demonstrate the effectiveness of TS-type algorithms in avoiding solving optimization problems.
All results are averaged over $100$ runs.

We first examine the fairness of different algorithms under the geometric delays
with the success probability parameter equal to $0.05$.
In this case, the feedback delays can be arbitrarily long but the expectation of
the delays is finite.

For comparison, we implement three other CMAB algorithms, CUCB-D, MP-TS-D, and
FGreedy-D which are adapted from CUCB~\cite{chen2013combinatorial},
MP-TS~\cite{komiyama2015optimal} and $\epsilon$-Greedy, respectively, to account
for feedback delays.
In particular, FGreedy-D selects $L$ arms uniformly at random in the exploration
phase and selects $L$ arms with probability
$\tfrac{Lf(\hat{\mu}_{t,a})}{\sum_{a^{\prime}=1}^K f(\hat{\mu}_{t, a^{\prime}})}$ via
\textsc{RRS} in the exploitation phase. 
FairX-UCB and FairX-TS proposed in~\cite{wang2021fairness} are limited in
applicability to our setting since they can only select a single arm at each round without
accounting for feedback delays. 
Additionally, other fair bandit algorithms use different fairness metrics,
making them unsuitable for direct comparison with our algorithms.

\begin{figure}[!t]
	\centering
\subfigure[Arm selection fractions]{
	\centering
	\includegraphics[width=0.47\textwidth]{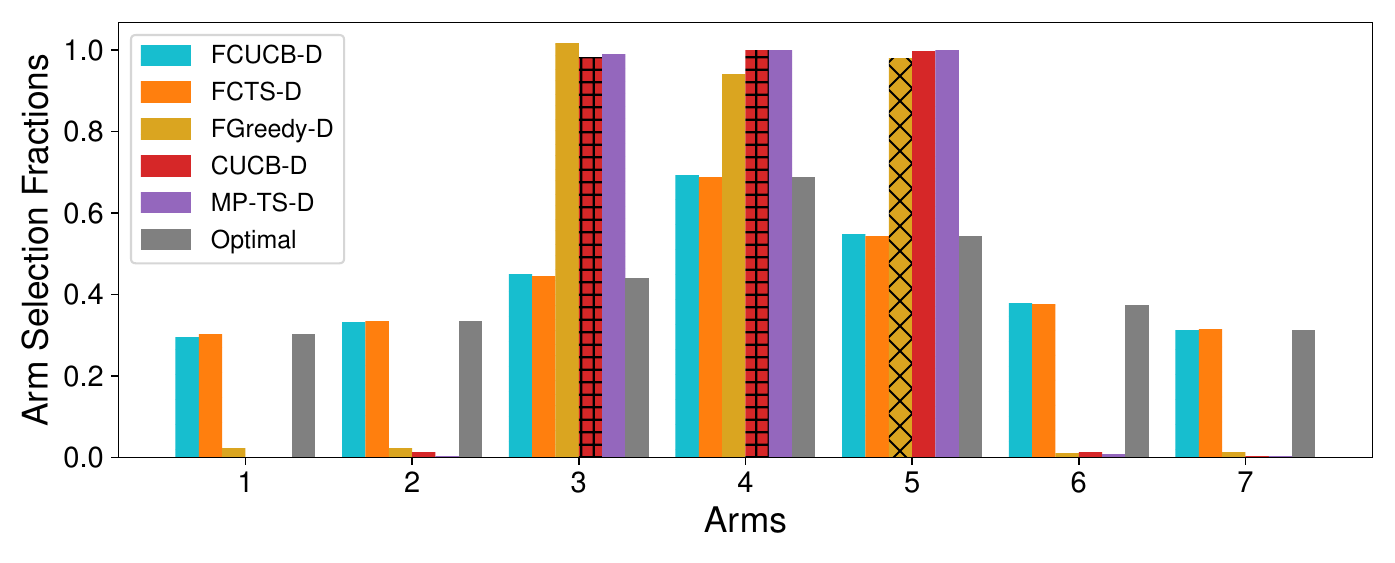}
	\label{fig:average_selection}
	}
	\subfigure[Reward Regret]{
	\centering
	\includegraphics[width=0.23\textwidth]{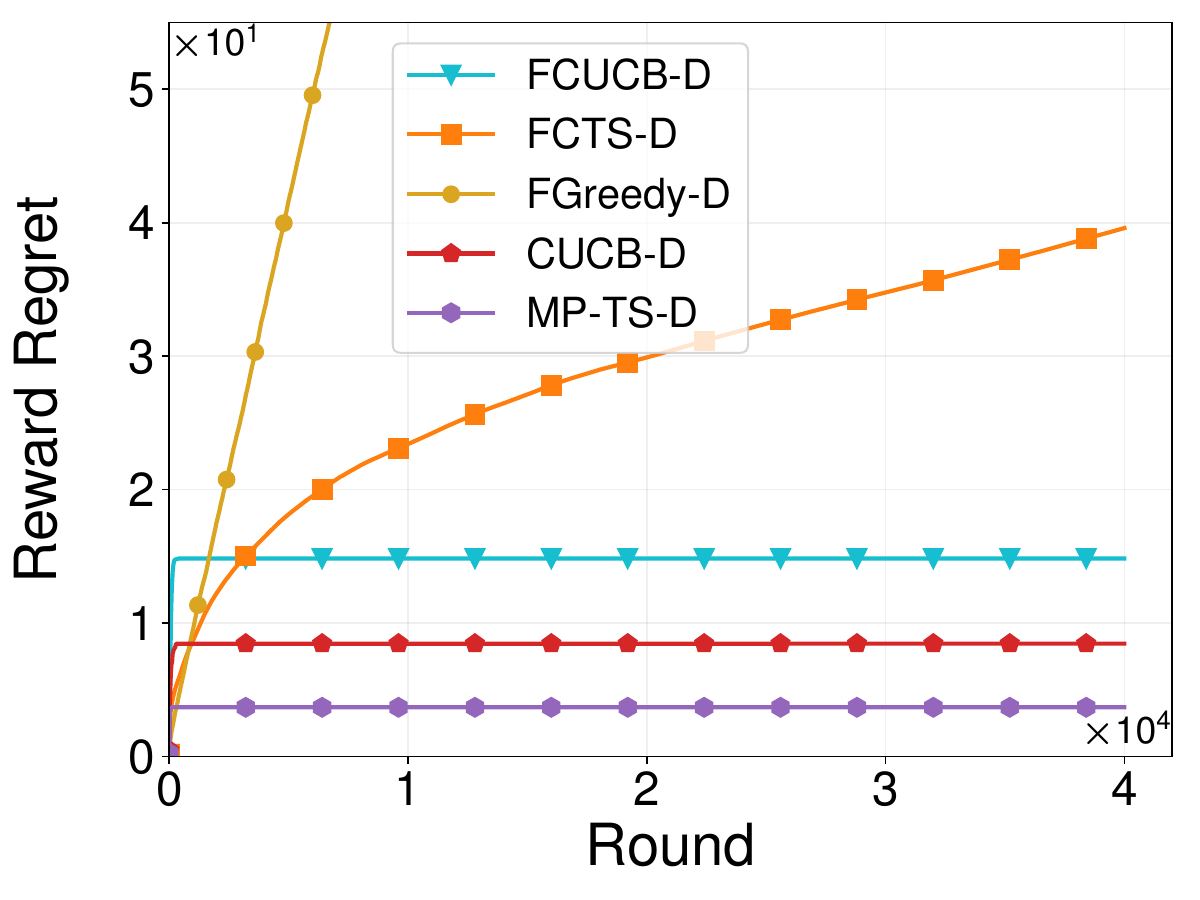}
	\label{fig:RR_GD}
	}
	\subfigure[Fairness Regret]{
		\centering
  		\includegraphics[width=0.23\textwidth]{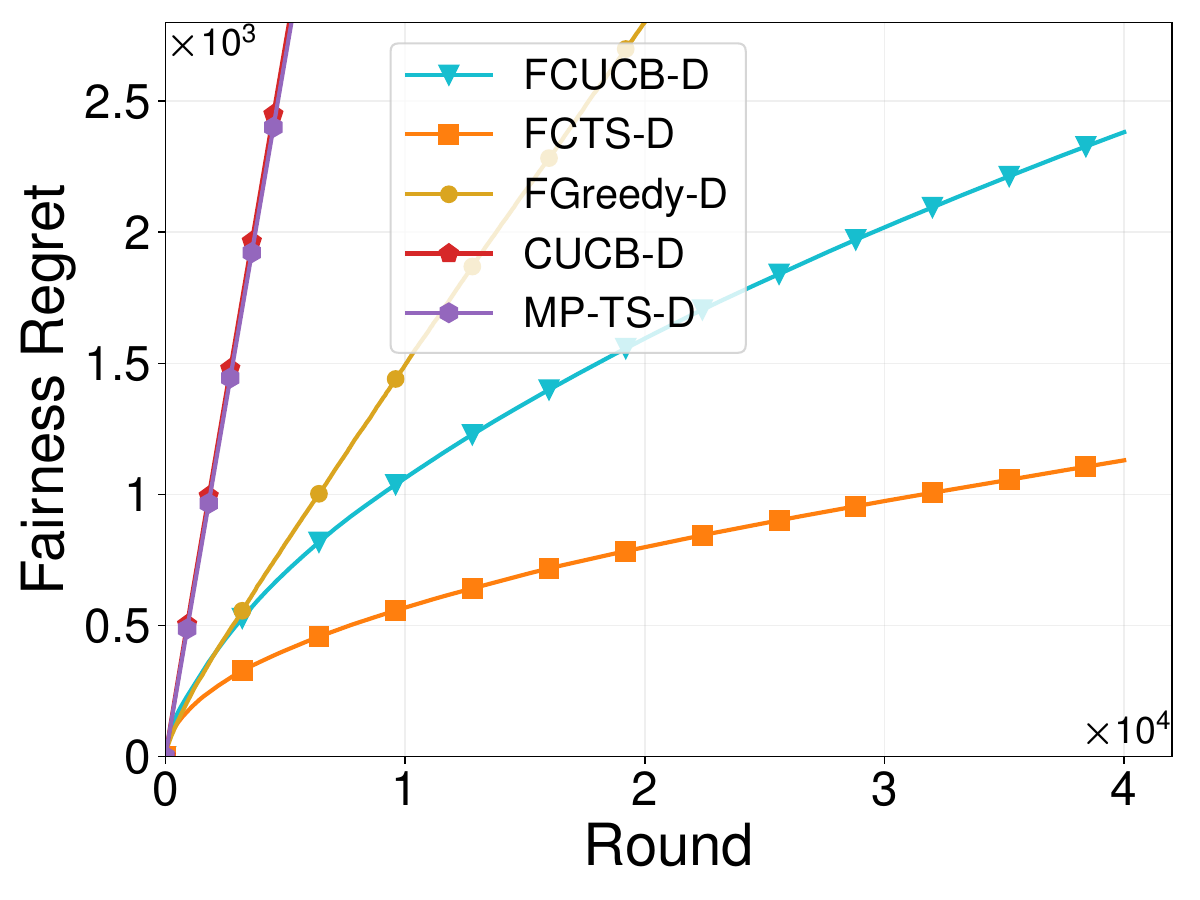}
		\label{fig:FR_GD}
	}
	\caption{Comparison of different bandit algorithms under geometric feedback delays.}
    	\label{fig:experiments_on_geometric_delays}
    \vspace{16pt}
\end{figure}

\begin{figure*}[!t]
	\centering
	\subfigure[Fixed delays]{
		\centering
		\includegraphics[width=0.232\textwidth]{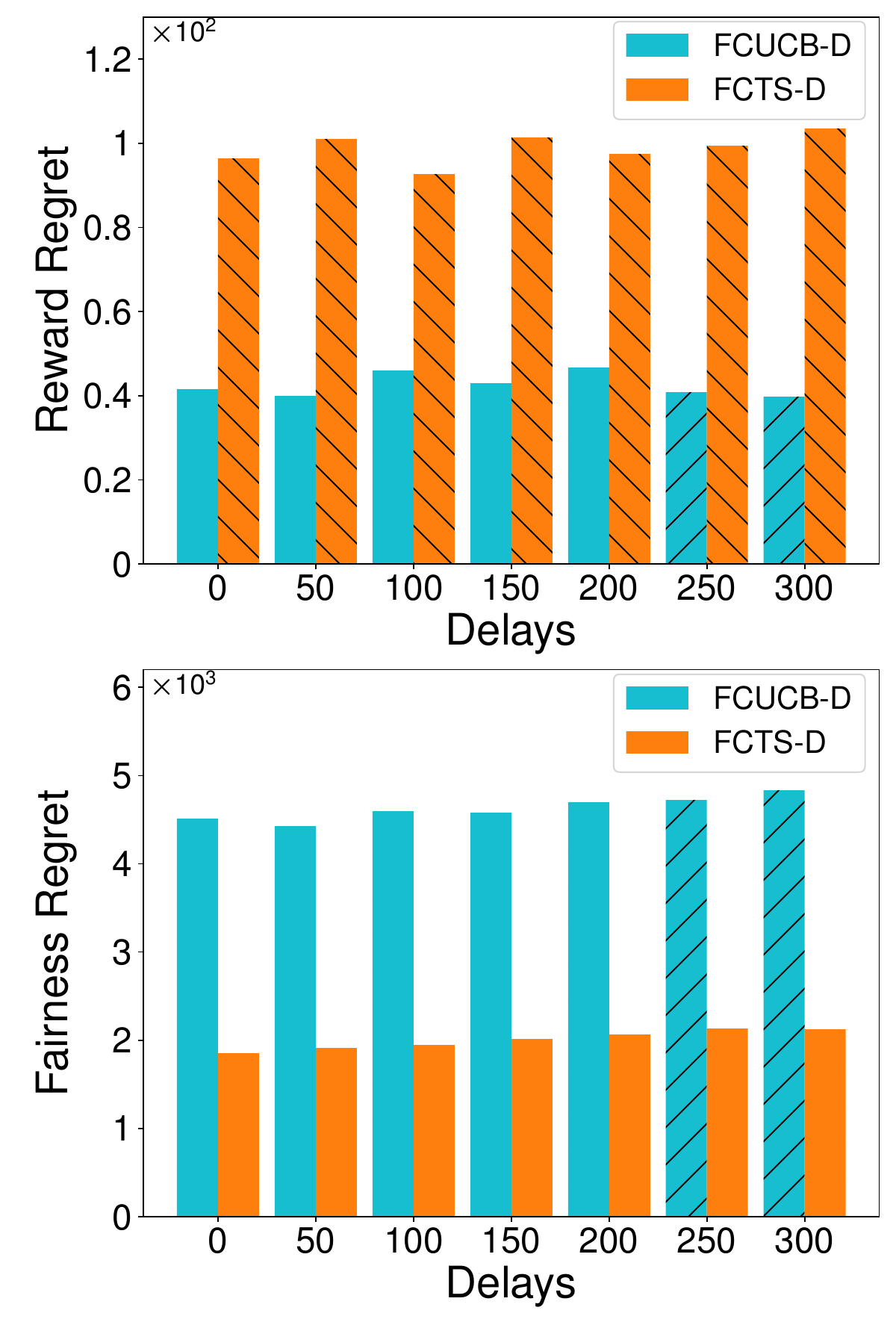}
		\label{fig:fixed_delays}
	}
	\subfigure[$\alpha$-Pareto delays]{
		\centering
		\includegraphics[width=0.232\textwidth]{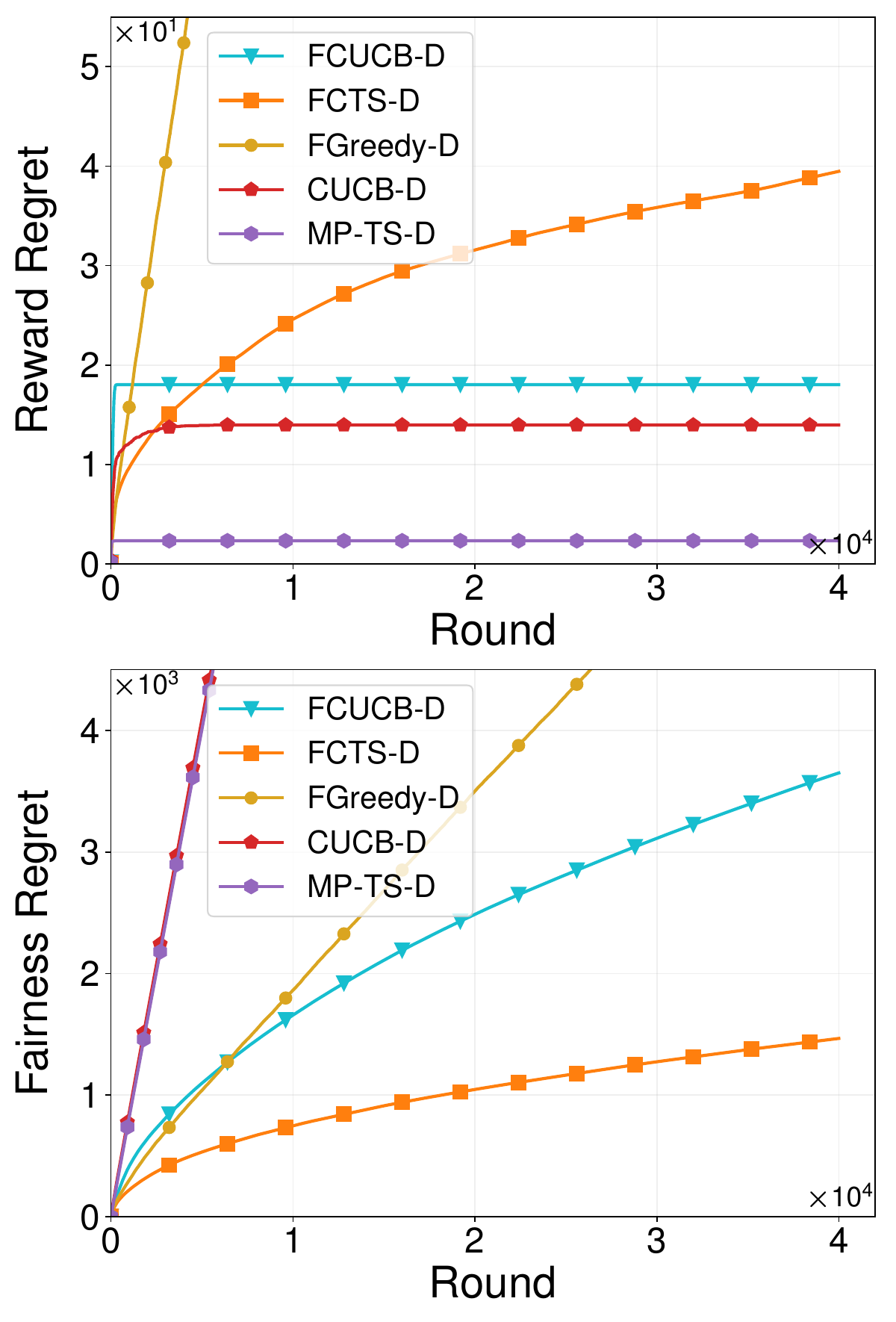}
		\label{fig:pareto_delays}
	}
	\subfigure[Packet-loss delays]{
		\centering
		\includegraphics[width=0.232\textwidth]{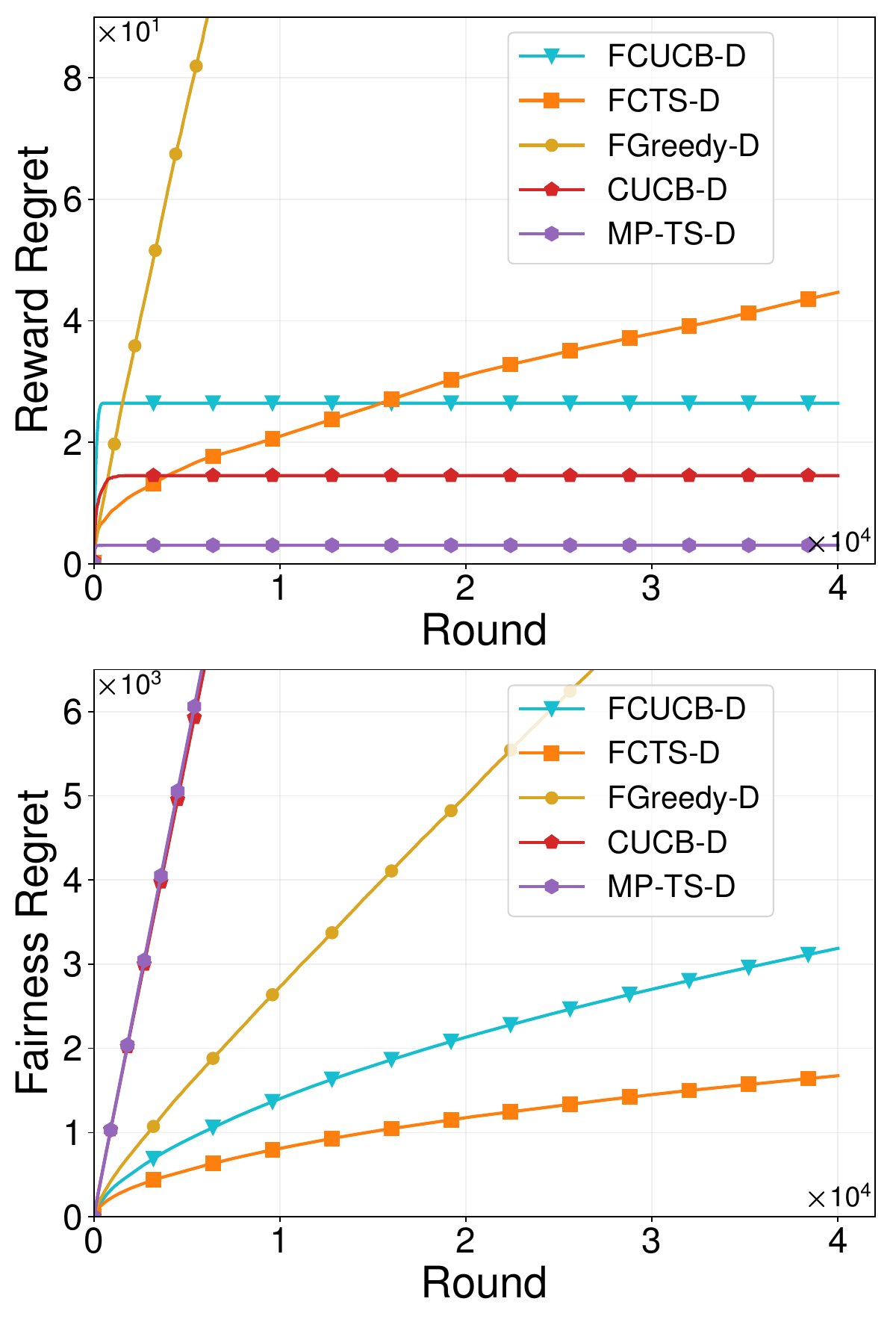}
		\label{fig:packet_loss_delays}
	}	
	\subfigure[Biased delays]{
		\centering
		\includegraphics[width=0.232\textwidth]{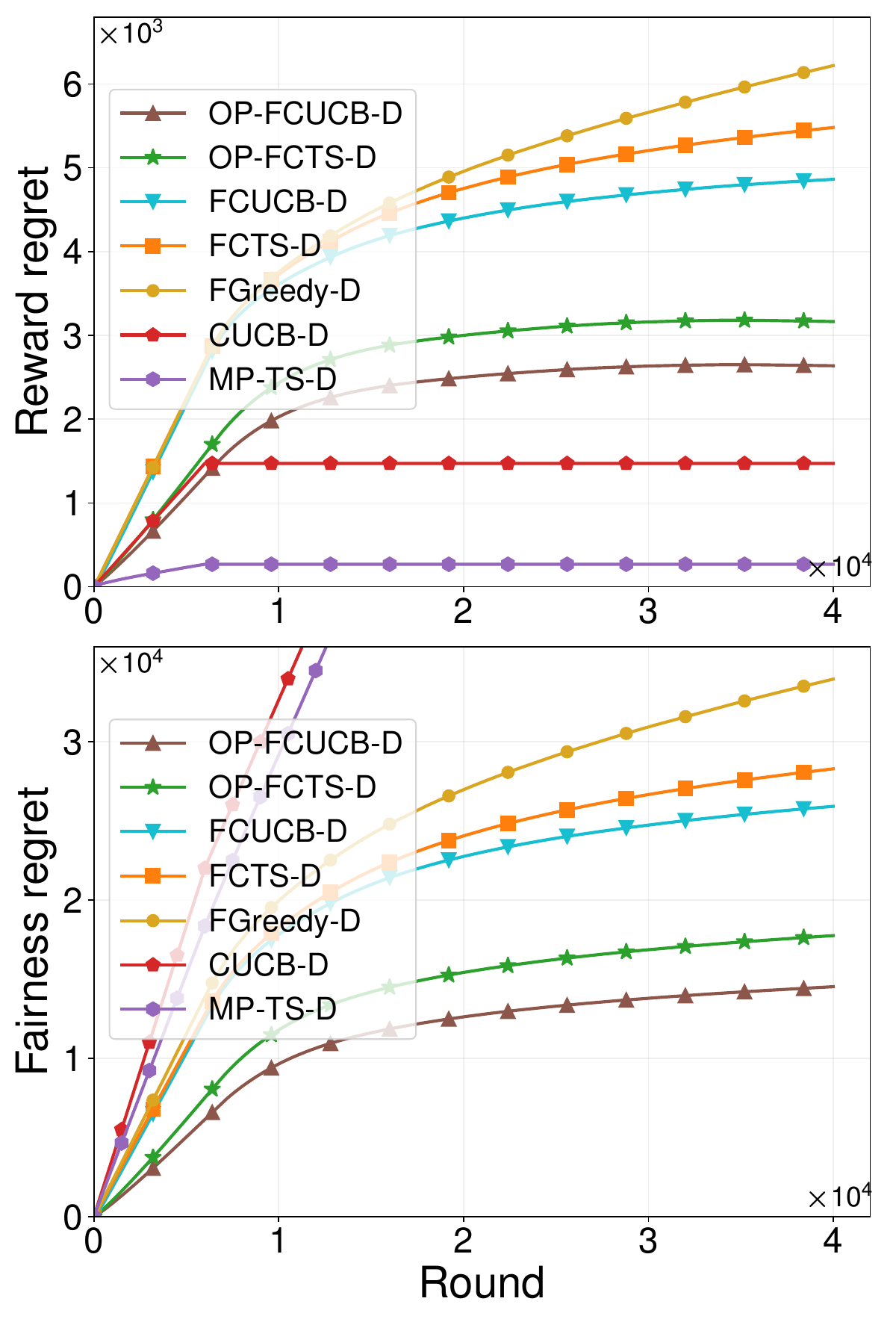}
		\label{fig:baised_delays}              
	}
	\caption{Experiment results of different bandit algorithms under different
		types of feedback delays.}
	\label{fig:experiments_on_different_delays}
     \vspace{10pt}
\end{figure*}

\begin{figure}[!t]
\centering
\subfigure[Reward Regret]{
\centering
\includegraphics[width=0.2302\textwidth]{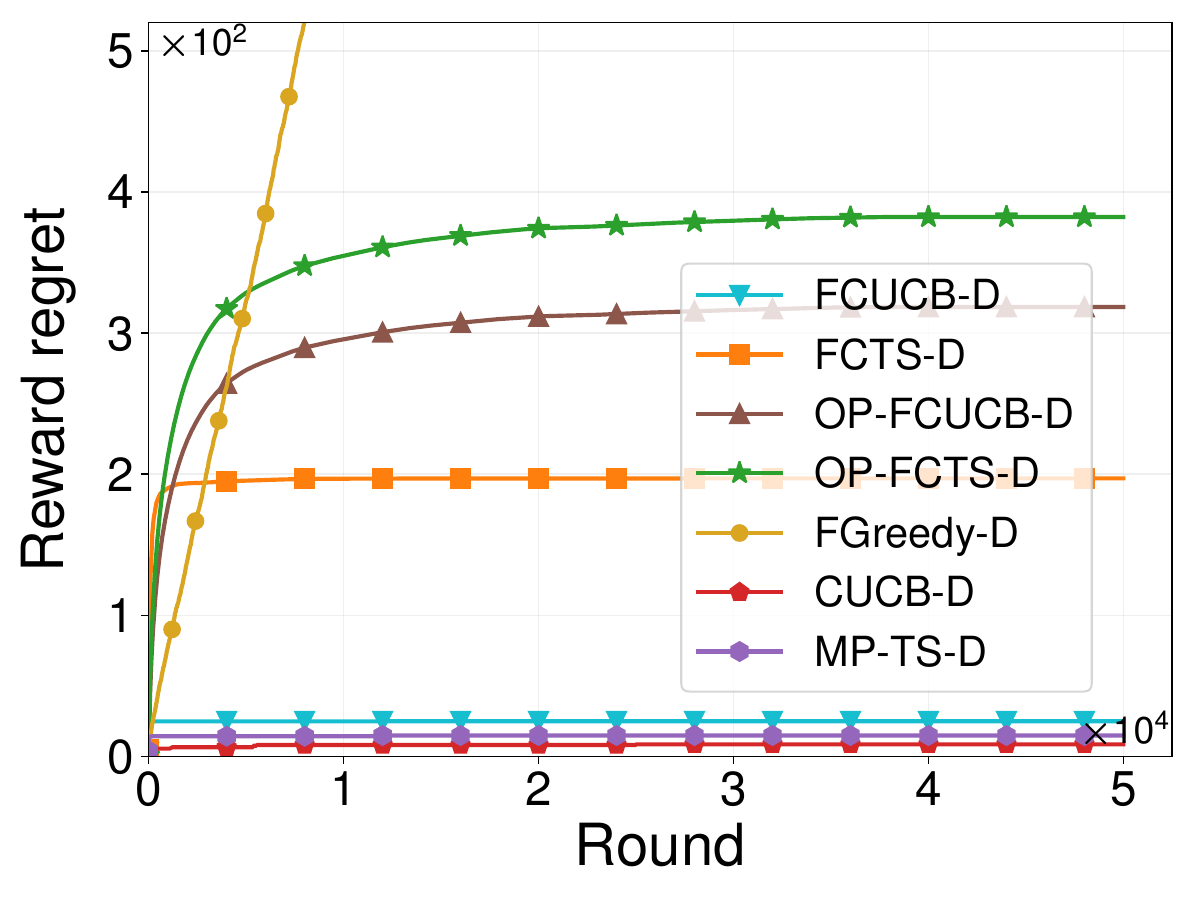}
		\label{fig:RR_RD}
	}
	\subfigure[Fairness Regret]{
		\centering
		\includegraphics[width=0.2302\textwidth]{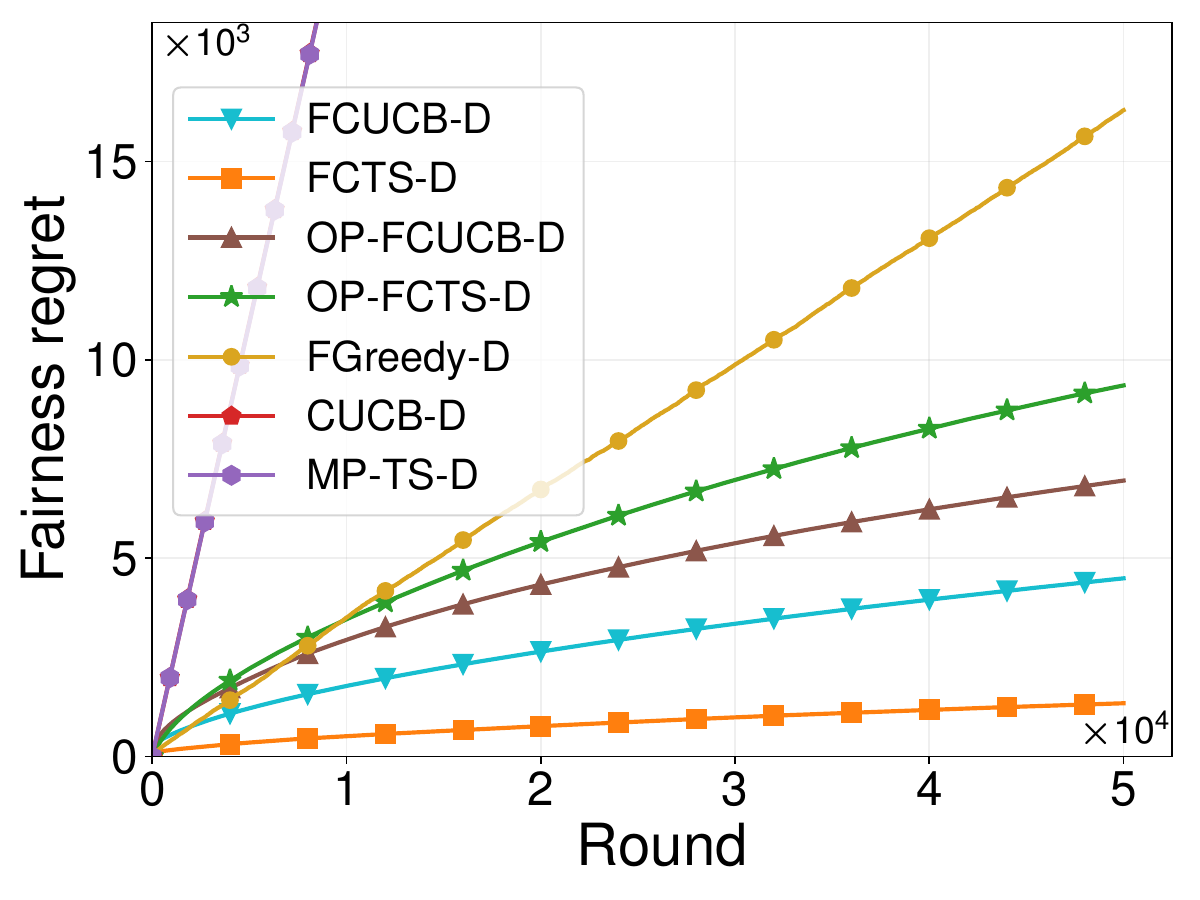}
		\label{fig:FR_RD}
	}
	\caption{Experiment results using the real-world conversion log dataset.}
    \label{fig:experiments_on_dataset}
    \vspace{20pt}
\end{figure}

Figure~\ref{fig:average_selection} illustrates the average arm selection
fractions of CUCB-D, MP-TS-D, FGreedy-D, the optimal fair policy, and our
\cucbfd and \ctsfd.
Each bar corresponds to the fraction of times an arm is chosen over $T=4 \times
10^4$ rounds by a specific algorithm.
As shown in Figure~\ref{fig:average_selection}, CUCB-D and MP-TS-D are
unfair by mainly selecting the arms (arm $3,4,5$) with high rewards, neglecting
the potential merits of other arms. 
FGreedy-D tends to select arms uniformly randomly since it randomly explores the
arms in the exploration phase.
In contrast, both \cucbfd and \ctsfd can converge to the optimally fair policy. 
This observation shows the effectiveness of our algorithms in achieving
merit-based fairness, ensuring that each arm receives a selection allocation
proportional to its merit.

In Figure~\ref{fig:RR_GD} and~\ref{fig:FR_GD}, it is evident that CUCB-D and MP-TS-D consistently
exhibit smaller reward regret and larger fairness regret when
compared to \cucbfd and \ctsfd.
This observation suggests that CUCB-D and MP-TS-D attain high rewards but substantially violate the merit-based fairness constraints.
Moreover, the reward/fairness regrets of our algorithms are smaller than
FGreedy-D and increase sublinearly in $T$, aligning with the bounds we derived
in Theorem~\ref{the:fairness-reward-regret-ucb-type}
and Theorem~\ref{the:fairness-reward-regret-thompson-sam-type}.
In particular, \cucbfd outperforms \ctsfd in reward regret.
However, this advantage comes at the cost of incurring higher fairness regret.

Then we evaluate the performance of different algorithms under different feedback
delay settings by changing the delay distributions.
Figure~\ref{fig:fixed_delays} shows the reward regret and fairness regret of
\cucbfd and \ctsfd under different fixed delays after $T=10^5$ rounds.
As shown in Figure~\ref{fig:fixed_delays}, the reward regrets and the fairness
regrets of our algorithms are quite close under different fixed delays.
The reason is that the algorithms can increase exploration of the merits of all
the arms by receiving the possible delayed rewards from the wrongly selected
arms at each round, and thus they do not incur much regret.
This indicates that our algorithms are not sensitive to fixed delays that are
not excessively large.

Next, we show the fairness/reward regret of different bandit algorithms under $\alpha$-Pareto
delays and packet-loss delays in Figure~\ref{fig:pareto_delays} and
Figure~\ref{fig:packet_loss_delays}, respectively.
These types of delays pose additional challenges as their expected values may be
infinite.
In the case of $\alpha$-Pareto delays, the delays of each arm $a$ follow the
Pareto Type I distribution with the tail index $\alpha_a$. 
A smaller $\alpha_a$ indicates a heavier tail of the delay distribution, and
when $\alpha_a \leq 1$, the delays have an infinite expectation. 
We uniformly sample $\alpha_a$ from the interval $\leftopen{0}{1}$ for each arm
$a$ to model delays with infinite expectations.
In the packet-loss delays, the delay is $0$ with probability $p$ and infinite
otherwise.
We uniformly sample the probabilities $p$ from interval $\leftopen{0.3}{0.8}$
for each arm.
Remarkably, compared to other algorithms, \cucbfd and \ctsfd can achieve both sublinear fairness and reward regret upper bounds across various delay distributions with
infinite expectations.

Finally, we examine the performance of different algorithms under the reward-dependent
(biased) delays.
We note that OPSE~\cite{lancewicki2021stochastic} is also tailored to handle
reward-dependent delays; however, we refrain from comparing it with our
algorithms as it eliminates the bad arms, resulting in substantial fairness
regret. 
We set the reward-dependent delays as follows: the good arms (arm $3,4,5$) have
a fixed delay of $6,000$ rounds for reward $1$ and $0$ round for reward $0$, and
the bad arms (arm $1,2,6,7$) have a fixed delay of $6,000$ rounds for reward $0$
and $0$ round for reward $1$.
In this setting, as the reward $1$ from a bad arm could be received earlier than
the reward $1$ from a good arm, the empirical average reward of a bad arm would
be larger than that of a good arm at the beginning.
In Figure~\ref{fig:baised_delays}, we observe that \cucbfdop and \ctsfdop
significantly outperform \cucbfd and \ctsfd in both reward regret and fairness
regret since \cucbfd and \ctsfd are not aware of the biases in the empirical
average rewards.
This shows the effectiveness of the optimistic-pessimistic estimates in
\cucbfdop and \ctsfdop.

\textbf{Experiments using real-world data.}
We conduct additional experiments on our algorithms using the conversion log
dataset~\cite{tallis2018reacting} that contains data on users' interactions with
a small sample of ads.
Each row in the dataset corresponds to a user clicking on an ad, including a
conversion indicator denoting whether the user makes a purchase after clicking
the ad, as well as the time between the click and the purchase.

We select the top-10 ($K = 10$) clicked ads from the dataset and allocate them
to three regions for ad placement ($L = 3$). 
Each ad is treated as an arm, where a user's click and purchase represent the reward of an arm, and the delay between the click and the purchase serves as the
feedback delay. 
We determine the conversion rate of each ad by normalizing the number of conversions using min-max scaling.
Then we generate the reward for each arm using a Bernoulli distribution, with the
mean given by the corresponding conversion rate.
Since the dataset lacks information on the click rate of ads,
we assumed a click rate of $5\%$ for each ad.
We compute the time between page visits based on this assumed click rate and the
number of ad clicks in the last week provided in the dataset. 
Then we can derive the delay (in page visits) of the purchase by dividing the
time between the click and the purchase by the time between page visits.
We use the merit function of the form $f(\mu) = 1+3.5\mu^c$ with parameter $c=4$ and run
simulations for $T=5 \times 10^4$ page visits. 
All results are averaged over $100$ runs.

Figure~\ref{fig:experiments_on_dataset} shows the experiment results on
fairness/reward regret for different algorithms.
We observe that our algorithms achieve sublinear bounds on fairness/reward
regret and exhibit a better tradeoff between reward and fairness on the
conversion log dataset, in comparison to other algorithms.
In particular, \cucbfd and \ctsfd outperform \cucbfdop and \ctsfdop in terms of
reward regret and fairness regret.
The rationale behind this is that: the dataset only provides the delay of
ads with successful conversions (click and then purchase, reward 1); For an ad
with no successful conversion (click without purchase, reward 0), we determine
its feedback delay by randomly sampling from the delays of the ads with
successful conversions.
This approach makes the 
 ads' feedback delays independent of the ads' rewards.
Thus, in such a reward-independent delay setting, \cucbfdop and \ctsfdop still
take the unobserved feedback of the ads into account and incur larger reward
regret and fairness than \cucbfd and \ctsfd.


\section{Conclusion \& Future Work}
\label{sec:cmabdelayfairness:conclusion}
In this paper, we propose a novel combinatorial semi-bandit setting with
merit-based fairness constraints and two types of unrestricted feedback delays:
reward-independent delays and reward-dependent delays.
We employ UCB, Thompson Sampling, and optimistic-pessimistic estimates and
design novel algorithms that 
achieve both sublinear expected reward regret and sublinear expected
fairness regret.
Our extensive simulation results using both synthetic dataset and real-world
dataset show that our algorithms fairly select arms according to the merits of
the arms under different feedback delays.

For future research, it is interesting to eliminate the assumption that the
learner is aware of the independence/dependence between rewards and delays. 
The goal would be to design a single algorithm capable of accommodating both
reward-independent and reward-dependent delays.
Another interesting direction is to derive the matching lower bounds of reward regret and
fairness regret for our algorithms.


\begin{ack}
This work was supported in part by National Key R\&D
Program of China under Grant 2022YFB2902700, NSF China (Grant No. 
62202508, 62071501), and Shenzhen Science and Technology Program (Grant
20220817094427001, JCYJ20220818102011023, ZDSYS20210623091807023).
\end{ack}



\bibliography{mybibfile}
\onecolumn

\appendix

\section{Proofs of the Theorems}
\label{appendix:proofs-theorems}

\subsection{Proof of Theorem~\ref{the:Optimal-Fair-Policy}}
\label{Proof_of_DF-CUCB_RR}

\begin{proof}
	According to~\eqref{fairness-constraint}, the optimal fair policy $\bm{p}^*$ satisfies the following merit-based fairness constraints:
	\begin{equation}
		\frac{p_a^*}{f(\mu_a)}=\frac{p_{a^{\prime}}^*}{f(\mu_{a^{\prime}})}, \quad \forall a\neq a^{\prime}, a,a^{\prime}\in[K],
	\end{equation}
	which correspond to $K-1$ linearly independent equations of $\bm{p}^*$. 
	Moreover, because only $L$ arms can be selected at each round, there is an additional linear equation 
	$\sum_{a} p^*_a=L$ that is linearly independent of the other $K-1$ ones. Then we have $K$ linearly independent equations on $K$ unknowns in $\bm{p}^* =\left \lbrace p^*_1,p^*_2,...,p^*_K \right \rbrace$.
	Therefore, the optimal fair policy $\bm{p}^*$ is unique. By solving this
    system of linear equations, we have
 \begin{equation}
     p_a^*=\frac{Lf(\mu_a)}{\sum_{a^{\prime}=1}^K f(\mu_{a^{\prime}})}, \quad \forall a \in [K].
 \end{equation}
 This completes the proof of Theorem~\ref{the:Optimal-Fair-Policy}.
\end{proof}

\subsection{Proof of~Theorem \ref{the:Lower-bound-fr-without-Assumption}}
\label{Proof_of_Lower-bound-fr-without-Assumption} 
\begin{proof} 
We first prove that \emph{the lower bound on fairness regret is linear
without Assumption~\ref{Minimum-Merit}~\ref{Minimum-Merit:minmu}} by constructing
two CMAB instances and a $1$-Lipschitz merit function $f(\cdot)$. 
For any bandit algorithm, we show that the sum of the expected fairness regrets
of the two CMAB instances increases linearly in $T$ 
in the absence of feedback delays.
Consequently, we conclude that any bandit algorithm will incur a linear regret
in $T$ for at least one of the two CMAB instances under reward-dependent delays
or reward-independent delays.

The two instances can be defined as $x^1=(\nu^1_1, \nu^1_2, \nu^1_3)$ and $x^2=(\nu^2_1, \nu^2_2, \nu^2_3)$, where $\nu_a$ is the reward distribution of arm $a$. 
Each instance consists of three arms and the learner selects a subset $A_t$ of $L=2$ arms at each round $t$. 
We assume that the reward of each arm in the two instances follows a Bernoulli distribution.
The expected rewards of three arms in the first instance are $3\eta, 2\eta, 2\eta$, 
i.e., $\nu^1_1=\mathrm{Bernoulli}(3\eta), \nu^1_2=\mathrm{Bernoulli}(2\eta), \nu^1_3=\mathrm{Bernoulli}(2\eta)$, 
and the expected rewards of three arms in the second instance are $2\eta, 2\eta, 2\eta$, 
i.e., $\nu^2_1=\mathrm{Bernoulli}(2\eta), \nu^2_2=\mathrm{Bernoulli}(2\eta), \nu^2_3=\mathrm{Bernoulli}(2\eta)$, 
where $\eta \in (0,1/3]$ .
The merit function $f(\cdot)$ is defined as an identity function. i.e., $f(\mu)=\mu $, $\mu \in \left[0,1\right]$.
Therefore, the optimal fair policy for the first instance is $\bm{p}^{*,1}=\left\lbrace 6/7, 4/7, 4/7\right \rbrace$, 
and the optimal fair policy for the second instance is $\bm{p}^{*,2}=\left\lbrace2/3, 2/3, 2/3\right \rbrace$. 
For any bandit algorithm $\pi$, the learner selects the arms stochastically according to a selection policy $\bm{p}_t$ at each round $t$ 
based on the history $\mathcal{H}_t$, which consists of all the previous selection vectors, selected arm sets, and received feedback.
We have $a \sim \bm{p}_{t}$, $R_{t,a} \sim \nu_a$ for $a \in A_t$.
Then we can derive the lower bound of the expected fairness regret for the two instances as follows.
For the first instance $x^1$, we have
\begin{equation}
\begin{aligned}
\mathbb{E}\left[\frac{1}{T} \mathrm{FR}_T^1\right] & 
=\mathbb{E}\left[\frac{1}{T} \sum_{t=1}^T\left(\left|p_{t,1}-\frac{6}{7}\right|+\left|p_{t,2}-\frac{4}{7}\right|+\left|p_{t,3}-\frac{4}{7}\right|\right)\right] \\
& \geq \mathbb{E}\left[\left|\frac{1}{T} \sum_{t=1}^T p_{t,1}-\frac{6}{7}\right|+\left|\frac{1}{T} \sum_{t=1}^T p_{t,2}-\frac{4}{7}\right|+\left|\frac{1}{T} \sum_{t=1}^T p_{t,3}-\frac{4}{7}\right|\right]\\
&=2 \mathbb{E}\left[\left|\frac{1}{T} \sum_{t=1}^T p_{t,1}-\frac{6}{7}\right|\right].
\end{aligned}
\end{equation}
    Similarly, for the second instance $x^2$, we can derive the fairness regret lower bound as follows.
\begin{equation}
\mathbb{E}\left[\frac{1}{T} \mathrm{FR}_T^2\right] 
\geq 2 \mathbb{E}\left[\left|\frac{1}{T} \sum_{t=1}^T p_{t,1}-\frac{2}{3}\right|\right].
\end{equation}
When there is no feedback delay, we consider an arm selection trace during the
$T$ rounds as $h =(\bm{p}_{1},A_1,\bm{R}_{1},...,\bm{p}_{T},A_{T},\bm{R}_{T})$,
where $\bm{R}_{t}:=\left(R_{t,a}\right)_{a\in[K]}$.
Denote $\mathbb{H}^1, \mathbb{H}^2$ as the distributions of $h$ for first CMAB
instance $x^1, x^2$
using the algorithm $\pi$, respectively.
Then we have
\begin{equation}
\label{ineq:sum-two-MAB-instance-part-I}
\begin{aligned}
\mathbb{E}\left[\frac{1}{T}\mathrm{FR}_T^1 \right]+\mathbb{E}\left[\frac{1}{T} \mathrm{FR}_T^2 \right] & \geq \frac{4}{21} \mathbb{P}^1\left(\frac{1}{T} \sum_{t=1}^T p_{t,1}\leq \frac{16}{21}\right)+\frac{4}{21} \mathbb{P}^2\left(\frac{1}{T} \sum_{t=1}^T p_{t,1} > \frac{16}{21}\right) \\
& \stackrel{(a)}{\geq} \frac{2}{21} \exp \left(-\mathrm{KL}\left(\mathbb{H}^1, \mathbb{H}^2\right)\right), 
\end{aligned}
\end{equation}
where $(a)$ follows from the Bretagnolle-Huber inequality~\cite{bretagnolle1978estimation}.
We can derive the upper bound of the KL divergence between $\mathbb{H}^1$ and
$\mathbb{H}^2$, $\mathrm{KL}\left(\mathbb{H}^1, \mathbb{H}^2\right)$, as follows:
    
\begin{equation}
\label{ineq:upper-bound-KL-part-I}
\begin{aligned}
\operatorname{KL}\left(\mathbb{H}^1, \mathbb{H}^2\right) 
& =\mathbb{E}_{h \sim \mathbb{H}^1}\left[\log \frac{\mathbb{H}^1(h)}{\mathbb{H}^2(h)}\right]
\leq \mathbb{E}_{h \sim \mathbb{H}^1}\left[\sum_{t=1}^T\sum_{a\in A_t} \log \frac{\nu_{a}^1\left(R_{t,a}\right)}{\nu_{a}^2\left(R_{t,a}\right)}\right]
=\sum_{t=1}^T \mathbb{E}_{\bm{p}_t \sim \pi^1} \mathbb{E}_{a \sim \bm{p}_t} \left[\operatorname{KL}\left(\nu_{a}^1, \nu_{a}^2\right) \right]\\
& =\sum_{t=1}^T \mathbb{E}_{\bm{p}_t \sim \pi^1}\left[p_{t,1} \operatorname{KL}\left(\nu_1^1, \nu_1^2\right)\right]\\
& =\sum_{t=1}^T \mathbb{E}_{\bm{p}_t \sim \pi^1}\left[p_{t,1} \left(3\eta\log \frac{3}{2} + (1-3\eta) \log \frac{1-3\eta}{1-2\eta}\right)\right] \\
& \leq  3T\eta\log \frac{3}{2},
\end{aligned}
\end{equation}
where $\bm{p}_t \sim \pi^1$ means that $\bm{p}_t$ 
is sampled from the process of the algorithm $\pi^1$ applied to the first CMAB instance.

Combining~\eqref{ineq:upper-bound-KL-part-I} with~\eqref{ineq:sum-two-MAB-instance-part-I} 
and setting $\eta=1 / 3T$, we have
\begin{equation}
\begin{aligned}
\mathbb{E}\left[\frac{1}{T}\mathrm{FR}_T^1 \right]+\mathbb{E}\left[\frac{1}{T} \mathrm{FR}_T^2 \right]  
&\geq \frac{2}{21} \exp \left(-3T\eta\log \frac{3}{2} \right)
\geq 0.06,
\end{aligned}
\end{equation}
which implies that at least one of the two CMAB instances incurs linear expected
fairness regret without feedback delays.
Therefore, we infer that under reward-dependent delays or reward-independent
delays, at least one of the two CMAB instances incurs linear expected fairness
regret, given that feedback delays tend to increase the fairness regret.

Next, we prove that \emph{the lower bound on fairness regret is linear without
Assumption~\ref{Lipschitz-Continuity}} by constructing two CMAB instances and a
merit function $f(\cdot)$ where $\min_{\mu} f(\mu) = 1$.
For any bandit algorithm, we demonstrate that the expected fairness
regrets of the two CMAB instances grow linearly with respect to $T$ in the absence of feedback delays. Thus, any bandit algorithm
will result in linear regret in $T$ for at least one of the two CMAB instances
under either reward-dependent delays or reward-independent delays.

The two instances can be defined as $x^1=(\nu^1_1, \nu^1_2, \nu^1_3)$ and $x^2=(\nu^2_1, \nu^2_2, \nu^2_3)$, where $\nu_a$ is the reward distribution of arm $a$. 
Each instance consists of three arms and the learner selects a subset $A_t$ of $L=2$ arms at each round $t$. 
We assume that the reward of each arm in the two instances follows a Bernoulli distribution.
The expected rewards of three arms in the first instance are $2\eta, \eta, \eta$, 
i.e., $\nu^1_1=\mathrm{Bernoulli}(2\eta), \nu^1_2=\mathrm{Bernoulli}(\eta), \nu^1_3=\mathrm{Bernoulli}(\eta)$, 
and the expected rewards of three arms in the second instance are $\eta, \eta, \eta$, 
i.e., $\nu^2_1=\mathrm{Bernoulli}(\eta), \nu^2_2=\mathrm{Bernoulli}(\eta), \nu^2_3=\mathrm{Bernoulli}(\eta)$, 
where $\eta \in (0,1/2)$ .
We use the merit function $f(\cdot)$ with the form $f(\mu)=M \mu+1$
where $\mu \in [0,1] $ and $M>0$ is a positive constant to be defined later.
Therefore, the optimal fair policy for the first instance is 
$\bm{p}^{*,1}=\left\lbrace 
(4\eta M+2)/(4\eta M+3), (2\eta M+2)/(4\eta M+3),(2\eta M+2)/(4\eta M+3)
\right \rbrace$, 
and the optimal fair policy for the second instance is 
$\bm{p}^{*,2}=\left\lbrace 
2/3,2/3,2/3
\right \rbrace$. 
For any bandit algorithm $\pi$, the learner selects the arms with a probabilistic selection vector $\bm{p}_t$ at each round $t$ 
based on the observation and decision history $\mathcal{H}_t$. 
Then we have $a \sim \bm{p}_{t}$, $R_{t,a} \sim \nu_a$ for $a \in A_t$.

For any algorithm, we can lower bound the expected fairness regret for the two instances as follows.
For the first instance $x^1$, we have
\begin{equation}
\begin{aligned}
\mathbb{E}\left[\frac{1}{T} \mathrm{FR}_T^1\right] & 
=\mathbb{E}\left[\frac{1}{T} \sum_{t=1}^T\left(\left|p_{t,1}-\frac{4\eta M+2}{4\eta M+3}\right|+\left|p_{t,2}-\frac{2\eta M+2}{4\eta M+3}\right|+\left|p_{t,3}-\frac{2\eta M+2}{4\eta M+3}\right|\right)\right] \\
& \geq \mathbb{E}\left[\left|\frac{1}{T} \sum_{t=1}^T p_{t,1}-\frac{4\eta M+2}{4\eta M+3}\right|+\left|\frac{1}{T} \sum_{t=1}^T p_{t,2}-\frac{2\eta M+2}{4\eta M+3}\right|+\left|\frac{1}{T} \sum_{t=1}^T p_{t,3}-\frac{2\eta M+2}{4\eta M+3}\right|\right]\\
&\geq 2 \mathbb{E}\left[\left|\frac{1}{T} \sum_{t=1}^T p_{t,1}-\frac{4\eta M+2}{4\eta M+3}\right|\right].
\end{aligned}
\end{equation}
    Similarly, for the second instance $x^2$, we can derive the fairness regret lower bound as follows,
\begin{equation}
\mathbb{E}\left[\frac{1}{T} \mathrm{FR}_T^2\right] 
\geq 2 \mathbb{E}\left[\left|\frac{1}{T} \sum_{t=1}^T p_{t,1}-\frac{2}{3}\right|\right].
\end{equation}

When there is no feedback delay, we consider an arm selection trace during the $T$
rounds as $h =(\bm{p}_{1},A_1,\bm{R}_{1},...,\bm{p}_{T},A_{T},\bm{R}_{T})$.
Denote $\mathbb{H}^1$ as the distribution of $h$ when the algorithm $\pi$ is applied to the first MAB instance $x^1$, 
while $\mathbb{H}^2$ as the distribution of $h$ when the algorithm $\pi$ is applied to the second MAB instance $x^2$.
Then we have
\begin{equation}
\label{ineq:sum-two-MAB-instance}
\begin{aligned}
\mathbb{E}\left[\frac{1}{T}\mathrm{FR}_T^1 \right]+\mathbb{E}\left[\frac{1}{T} \mathrm{FR}_T^2 \right] 
& \geq \frac{4M\eta}{12M\eta+9} \mathbb{P}^1\left(\frac{1}{T} \sum_{t=1}^T p_{t,1}\leq\frac{10M\eta+6}{12M\eta+9}\right)
+\frac{4M\eta}{12M\eta+9} \mathbb{P}^2\left(\frac{1}{T} \sum_{t=1}^T p_{t,1} > \frac{10M\eta+6}{12M\eta+9}\right) \\
& \stackrel{(a)}{\geq} \frac{2M\eta}{12M\eta+9} \exp \left(-\mathrm{KL}\left(\mathbb{H}^1, \mathbb{H}^2\right)\right),
\end{aligned}
\end{equation}
where $(a)$ follows from the Bretagnolle-Huber inequality~\cite{bretagnolle1978estimation}.
Then we can upper bound the KL divergence $\mathrm{KL}\left(\mathbb{H}^1,
\mathbb{H}^2\right)$ between $\mathbb{H}^1$ and $\mathbb{H}^2$ as follows,
    
\begin{equation}
\label{ineq:upper-bound-KL}
\begin{aligned}
\operatorname{KL}\left(\mathbb{H}^1, \mathbb{H}^2\right) 
& =\mathbb{E}_{h \sim \mathbb{H}^1}\left[\log \frac{\mathbb{H}^1(h)}{\mathbb{H}^2(h)}\right]
\leq \mathbb{E}_{h \sim \mathbb{H}^1}\left[\sum_{t=1}^T\sum_{a\in A_t} \log \frac{\nu_{a}^1\left(R_{t,a}\right)}{\nu_{a}^2\left(R_{t,a}\right)}\right]
=\sum_{t=1}^T \mathbb{E}_{\bm{p}_t \sim \pi^1} \mathbb{E}_{a \sim \bm{p}_t} \left[\operatorname{KL}\left(\nu_{a}^1, \nu_{a}^2\right) \right]\\
& =\sum_{t=1}^T \mathbb{E}_{\bm{p}_t \sim \pi^1}\left[p_{t,1} \operatorname{KL}\left(\nu_1^1, \nu_1^2\right)\right]\\
& =\sum_{t=1}^T \mathbb{E}_{\bm{p}_t \sim \pi^1}\left[p_{t,1} \left(2\eta\log 2 + (1-2\eta) \log \frac{1-2\eta}{1-\eta}\right)\right] \\
& \leq 2T\eta\log 2,
\end{aligned}
\end{equation}
where $\bm{p}_t \sim \pi^1$ means that $\bm{p}_t$ 
is sampled from the process of the algorithm $\pi^1$ applied to the first MAB instance.

According to~\eqref{ineq:sum-two-MAB-instance} and~\eqref{ineq:upper-bound-KL}, 
set $\eta=\frac{1}{2T}, M=T$, we have
\begin{equation}
\begin{aligned}
\mathbb{E}\left[\frac{1}{T}\mathrm{FR}_T^1 \right]+\mathbb{E}\left[\frac{1}{T} \mathrm{FR}_T^2 \right]  
&\geq \frac{2M\eta}{12M\eta+9} \exp \left(-2T\eta\log 2\right)
\geq 0.03,
\end{aligned}
\end{equation}
which implies that at least one of the two CMAB instances incurs linear expected
fairness regret without feedback delays.
Therefore, we can conclude that, under reward-dependent delays or
reward-independent delays, at least one of the two CMAB instances experiences a
linear expected fairness regret, as feedback delays tend to amplify the fairness
regret.

This completes the proof of Theorem~\ref{the:Lower-bound-fr-without-Assumption}.
\end{proof}

\subsection{Proof of Theorem \ref{the:fairness-reward-regret-ucb-type}}

\begin{proof} We first prove the expected reward regret upper bound and then
prove the expected fairness regret upper bound of \cucbfd.

\proofpart{1}{Proof of the Expected Reward Regret Upper Bound of \cucbfd}
We first prove the following lemmas.
\begin{lemma}
\label{Lemma:hoeffding_ineq}
	For any $\delta\in (0,1)$,  with probability at least $1-\frac{\delta}{2}$, $\forall t>\lceil \frac{K}{L} \rceil$, $a\in[K]$, the expected reward vector $\bm{\mu} \in \mathcal{C}_t$.  
\end{lemma}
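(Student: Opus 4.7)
The plan is to apply Hoeffding's inequality to the observed feedback for each arm and then take a union bound over arms, rounds, and sample sizes. The crucial structural fact enabling this is the reward-independent delay assumption: whether or not a reward $R_{s,a}$ has been revealed by round $t$ depends only on $D_{s,a}$ and on the indicator $\Ind{\{a\in A_s\}}$, neither of which is correlated with $R_{s,a}$ itself. Consequently, conditioned on $M_{t,a}=m$ and on the identities of the $m$ revealed source rounds, the observed rewards form $m$ i.i.d.\ draws from $\nu_a$ bounded in $[0,1]$ with mean $\mu_a$, which makes $\hat\mu_{t,a}$ an unbiased estimator of $\mu_a$. This is exactly the property that will fail in the reward-dependent setting of Section 5 and force the optimistic/pessimistic correction used in \cucbfdop.

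First I would let $\{Z_{a,i}\}_{i\ge 1}$ denote the ordered sequence of rewards that eventually arrive from arm $a$. For any fixed $m\ge 1$, Hoeffding's inequality yields
\[
\mathbb{P}\!\left(\Bigl|\tfrac{1}{m}\textstyle\sum_{i=1}^{m} Z_{a,i} - \mu_a\Bigr| > \sqrt{\tfrac{\log(4LKT)}{m}}\right) \le \frac{2}{(4LKT)^{2}}.
\]
A union bound over $a\in[K]$, $t\in[T]$, and $m\in\{1,\ldots,T\}$ then caps the overall failure probability at $2KT^{2}/(4LKT)^{2} = 1/(8L^{2}KT)$, which is well below $\delta/2$ for any reasonable regime. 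On the complementary event, $|\hat\mu_{t,a}-\mu_a|\le c_{t,a}$ holds uniformly; since $\mu_a\in[0,1]$, the clipping to $[0,1]$ in the definitions of $U_{t,a}$ and $B_{t,a}$ only tightens the interval, so $\mu_a \in [B_{t,a},U_{t,a}]$ for every $a$, i.e., $\bm\mu\in\mathcal{C}_t$ simultaneously for all $t>\lceil K/L\rceil$.

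Two small details need attention. The case $M_{t,a}=0$ is handled by the convention $M_{t,a}\vee 1$, which forces $c_{t,a}=\sqrt{\log(4LKT)}\ge 1$, so after clipping $[B_{t,a},U_{t,a}]=[0,1]\ni\mu_a$ trivially; the initial $\lceil K/L\rceil$ exploration rounds guarantee $N_{t,a}\ge 1$ but not $M_{t,a}\ge 1$, so this convention is genuinely needed. The main conceptual obstacle, rather than any computational difficulty, is to rigorously justify that the sub-sequence of \emph{observed} rewards remains i.i.d.\ from $\nu_a$ despite the data-dependent selection rule $A_s$ (which uses past observations to choose $\bm p_s$) and the random, possibly unbounded delays $D_{s,a}$. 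This can be formalized by conditioning on the full sequence $(D_{s,a},\Ind{\{a\in A_s\}})_{s<t}$, with respect to which the rewards $(R_{s,a})_{s<t, a\in A_s}$ remain i.i.d.\ from $\nu_a$ precisely because of reward-independence; without this assumption, the revealed rewards would form a biased sub-sample and Hoeffding's inequality could no longer be applied to $\hat\mu_{t,a}$.
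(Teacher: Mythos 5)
Your proposal follows essentially the same route as the paper's proof: Hoeffding's inequality applied to the observed rewards of each arm, a union bound, and clipping to $[0,1]$ (with the $M_{t,a}=0$ case covered by the $\vee\,1$ convention); your extra care about why the revealed sub-sample remains i.i.d.\ under reward-independent delays, and the union over the observation count $m$ to handle the randomness of $M_{t,a}$, is sound and in fact more explicit than the paper's one-line invocation of Hoeffding. The only mismatch is parametric: you hard-code the radius $\sqrt{\log(4LKT)/(M_{t,a}\vee 1)}$ from Theorem~\ref{the:fairness-reward-regret-ucb-type}, so your failure probability (about $1/(8L^{2}KT)$ once the bookkeeping in the union-bound count is fixed) is below $\delta/2$ only for $\delta$ not too small, not for every $\delta\in(0,1)$ as the lemma claims; running your identical argument with the $\delta$-dependent radius $c_{t,a}=\sqrt{\log(4KT/\delta)/\bigl(2(M_{t,a}\vee 1)\bigr)}$, as the paper does, gives exactly $\delta/2$ and in particular covers the choice $\delta=1/(LT)$ used downstream.
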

\begin{proof}
	According to Hoeffding's inequality, for $ t>\lceil \frac{K}{L} \rceil$, $a\in[K]$, with probability at least $1-\frac{\delta}{2KT}$,
	\begin{equation}
 \label{ineq:Hoeffding}
		\left| \hat{\mu}_{t,a}-\mu_a\right| \leq \sqrt{\frac{ \log (4KT/\delta)}{2(M_{t,a} \vee 1)}}.
	\end{equation}
Using union bound, by the result of~\eqref{ineq:Hoeffding} and $\mu_a \in \left[0,1\right]$, with probability at least $1-\frac{\delta}{2}$, $\forall t>\lceil \frac{K}{L}\rceil, a\in [K]$,  $\mu_a \in \left[ B_{t,a},  U_{t,a}\right]$ where $c_{t,a}=\sqrt{\frac{ \log (4KT/\delta)}{2(M_{t,a} \vee 1)}}$
.
This completes the proof of Lemma~\ref{Lemma:hoeffding_ineq}.
\end{proof}

Define an event as $\mathcal{F}=\left\lbrace \exists t\in[T], a\in[K]: N_{t,a} \geq \frac{24\log T}{q}, M_{t+d^{*}(q),a} < \frac{qN_{t,a}}{2}  \right\rbrace.$
\begin{lemma}
	\label{lower-bound-received-observation-event}
	When $T>K$, the probability of event $\mathcal{F}$ being true is $\mathbb{P}\left[\mathcal{F}\right]\leq 1/T $.
\end{lemma}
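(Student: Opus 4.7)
The plan is to prove the lemma in three steps: first, to bound $M_{t+d^*(q),a}$ from below by a sum of $N_{t,a}$ i.i.d.\ Bernoulli variables each with success probability at least $q$; second, to apply a multiplicative Chernoff bound conditional on $N_{t,a}$; third, to union-bound over the pairs $(t,a)\in[T]\times[K]$.

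For the first step I would reason directly from the definition of $M_{t+d^*(q),a}$ in~\eqref{eq:received-feedback}. A selection of arm $a$ at some round $s<t$ contributes to $M_{t+d^*(q),a}$ whenever $D_{s,a}\leq t+d^*(q)-1-s$, and since $s\leq t-1$ this right-hand side is at least $d^*(q)$. Enumerating the selections of arm $a$ before round $t$ as $s_1<s_2<\cdots<s_{N_{t,a}}$ and setting $Z_i:=\Ind{\{D_{s_i,a}\leq d^*(q)\}}$, I obtain
$$M_{t+d^*(q),a}\;\geq\;\sum_{i=1}^{N_{t,a}} Z_i.$$
The definition of the quantile $d_a(q)$ together with $d^*(q)\geq d_a(q)$ gives $\mathbb{P}[Z_i=1]\geq q$, and the $Z_i$ are i.i.d.\ because each delay is drawn fresh, independently of the history, from arm $a$'s fixed delay distribution.

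For the second step, the multiplicative Chernoff inequality yields, for any deterministic $n$,
$$\mathbb{P}\!\left[\sum_{i=1}^{n} Z_i < \tfrac{qn}{2}\right]\;\leq\;\exp(-qn/8),$$
which is at most $T^{-3}$ as soon as $n\geq 24\log T/q$. Inserting this bound into step~1 and marginalizing over the possible values of $N_{t,a}$ gives, for each fixed pair $(t,a)$, a failure probability of order $T^{-3}$. A union bound over the $KT$ pairs then produces $\mathbb{P}[\mathcal{F}]\leq KT\cdot T^{-3}=K/T^{2}\leq 1/T$, where the last inequality uses $T>K$.

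The main obstacle lies in justifying Chernoff under the random stopping index $N_{t,a}$: since selections depend on observed (partially revealed) feedback, $N_{t,a}$ is not a stopping time with respect to the filtration generated by $\{Z_i\}$ alone. I plan to resolve this by working in the enlarged filtration $\{\mathcal{F}_s\}$ containing all realized delays, in which $A_s$ is $\mathcal{F}_{s-1}$-measurable while each $D_{s,a}$ is independent of $\mathcal{F}_{s-1}$, so the $Z_i$ remain i.i.d.\ even after conditioning on the (random) selection rounds $s_1,\dots,s_{N_{t,a}}$; this is what makes the bound in step~2 applicable to the random sum $\sum_{i=1}^{N_{t,a}} Z_i$.
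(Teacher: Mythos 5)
Your overall route is essentially the paper's: the paper proves this lemma by invoking Lemma~2 of Lancewicki et al.\ (2021), which is exactly the per-$(t,a)$ concentration $\mathbb{P}\left[M_{t+d^{*}(q),a}<qN_{t,a}/2\right]\leq \exp(-qN_{t,a}/8)$ that your steps~1--2 re-derive, and then applies the same union bound over the at most $KT$ pairs with $N_{t,a}\geq 24\log T/q$, yielding $KT\cdot T^{-3}=K/T^{2}\leq 1/T$ for $T>K$. Your step~1 (the inclusion $M_{t+d^{*}(q),a}\geq\sum_{i\leq N_{t,a}}Z_i$ with $\mathbb{P}[Z_i=1]\geq q$ from the quantile definition), the Chernoff constant, and the final counting all match the paper.

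The weak point is your closing paragraph. Conditioning on the realized selection rounds $s_1,\dots,s_{N_{t,a}}$ (equivalently on $N_{t,a}=n$) does bias the $Z_i$: whether arm $a$ is re-selected at a later round depends on which feedback has already arrived, i.e.\ on the very indicators $\Ind{\{D_{s_i,a}\leq \cdot\}}$ you are trying to control, so the claim that the $Z_i$ ``remain i.i.d.\ after conditioning on the selection rounds'' is false in general, and the conditional bound $\mathbb{P}\left[\sum_{i\leq n}Z_i<qn/2\mid N_{t,a}=n\right]\leq e^{-qn/8}$ is not justified as written. Two standard repairs keep your structure intact: (i) drop the conditioning via the inclusion $\{N_{t,a}=n\}\cap\{\sum_{i\leq n}Z_i<qn/2\}\subseteq\{\sum_{i\leq n}Z_i<qn/2\}$ and union over $a\in[K]$ and $n\in\{\lceil 24\log T/q\rceil,\dots,T\}$ (the latter event no longer depends on $t$), which again gives at most $KT\cdot T^{-3}\leq 1/T$; or (ii) use the enlarged filtration correctly: $Z_i$ has conditional mean at least $q$ given the history up to just before the $i$-th pull, so for each fixed $n$ the bound $\mathbb{P}\left[\sum_{i\leq n}Z_i<qn/2\right]\leq \exp(-qn/8)$ follows from a martingale (moment-generating-function supermartingale) argument, with no claim about conditioning on future selection times.
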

\begin{proof}
	We first recall the following lemma in~\cite{lancewicki2021stochastic}.
	\begin{lemma}[Lemma $2$ in \cite{lancewicki2021stochastic}]
	At round $t$, for any arm $a \in [K]$ and quantile $q \in (0,1]$, it holds that,
	\begin{equation}
		\mathbb{P}\left[ M_{t+d^{*}(q)} < \frac{qN_{t,a}}{2}    \right] \leq \exp(-\frac{qN_{t,a}}{8}).
	\end{equation}
    \label{lemma:lancewicki}
\end{lemma}
	By Lemma~\ref{lemma:lancewicki}, we have
	\begin{equation}
		\begin{aligned}
			 \mathbb{P}\left[\exists t, a\in[K]: N_{t,a} \geq \frac{24\log T}{q}, M_{t+d^{*}(q),a} < \frac{qN_{t,a}}{2}\right] 
			& \leq \sum_{a=1}^K \sum_{t: N_{t,a} \geq \frac{24\log T}{q}} \mathbb{P}\left[M_{t+d^{*}(q),a} < \frac{qN_{t,a}}{2}\right] \\
			& \leq \sum_{a=1}^K \sum_{t: N_{t,a} \geq \frac{24\log T}{q}} \exp \left(-\frac{qN_{t,a}}{8} \right) \\
			& \leq T K \exp \left(-\frac{q}{8} \cdot \frac{24 \log T}{q}\right) \leq \frac{1}{T}.
		\end{aligned}
\end{equation}
This completes the proof of Lemma~\ref{lower-bound-received-observation-event}.
\end{proof} Lemma~\ref{lower-bound-received-observation-event} implies that
there is a lower bound for the number of observed feedback for each arm $a$ when
$N_{t,a}$ is large enough.

\begin{lemma}
	\label{Lemma:martingale_seq}
	For any $\delta\in (0,1)$, with probability at least $1-\frac{\delta}{2}$,
    it holds that 
	\begin{equation}
	\left| \sum_{t=1}^{T}\mathbb{E}_{a\sim\bm{p}_t}\left[  \sqrt{\frac{1}{M_{t,a}\vee 1}} \right]  -\sum_{t=1}^{T}\sum_{a\in A_t} \sqrt{\frac{1}{M_{t,a}\vee 1} }\right| \leq L\sqrt{2T \log \frac{4}{\delta}}.
\end{equation}
\end{lemma}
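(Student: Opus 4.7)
The plan is to recognize the left-hand side as the magnitude of a sum of martingale differences and apply the Azuma--Hoeffding inequality. I would introduce the filtration $\mathcal{F}_{t-1}$ generated by all selections, received delayed feedbacks, and the internal randomization up to the end of round $t-1$. Crucially, both $M_{t,a}$ and the selection vector $\bm{p}_t$ are $\mathcal{F}_{t-1}$-measurable, since they are computed from past observations; only the realized set $A_t$ (via the randomized rounding scheme) is random given $\mathcal{F}_{t-1}$, and by the RRS property we have $\mathbb{E}[\mathds{1}\{a \in A_t\} \mid \mathcal{F}_{t-1}] = p_{t,a}$.

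Now I would define, for each $t$, the random variable
\begin{equation*}
X_t := \mathbb{E}_{a\sim\bm{p}_t}\!\left[\sqrt{\tfrac{1}{M_{t,a}\vee 1}}\right] - \sum_{a\in A_t}\sqrt{\tfrac{1}{M_{t,a}\vee 1}}.
\end{equation*}
Interpreting $\mathbb{E}_{a\sim \bm{p}_t}[h(a)] = \sum_{a=1}^K p_{t,a}\, h(a)$ and using the RRS identity above, the conditional expectation gives
\begin{equation*}
\mathbb{E}[X_t \mid \mathcal{F}_{t-1}] = \sum_{a=1}^K p_{t,a}\sqrt{\tfrac{1}{M_{t,a}\vee 1}} - \sum_{a=1}^K p_{t,a}\sqrt{\tfrac{1}{M_{t,a}\vee 1}} = 0,
\end{equation*}
so $\{X_t\}_{t=1}^T$ is a martingale difference sequence with respect to $\{\mathcal{F}_t\}$.

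Next I would bound the increments. Since $\sqrt{1/(M_{t,a}\vee 1)} \in [0,1]$, the expectation term is at most $\sum_a p_{t,a} = L$, and the sum over $A_t$ is also at most $|A_t| = L$. Both terms lie in $[0,L]$, so $|X_t| \le L$ almost surely. Applying the Azuma--Hoeffding inequality to $\sum_{t=1}^T X_t$ yields
\begin{equation*}
\mathbb{P}\!\left[\left|\textstyle\sum_{t=1}^T X_t\right| \ge \varepsilon\right] \;\le\; 2\exp\!\left(-\tfrac{\varepsilon^2}{2\,T L^2}\right).
\end{equation*}
Setting the right-hand side equal to $\delta/2$ and solving for $\varepsilon$ gives $\varepsilon = L\sqrt{2T\log(4/\delta)}$, which is exactly the claimed bound.

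The proof is largely routine, but the subtle point I would be careful about is the measurability structure, namely that $M_{t,a}$ depends only on past selections and past delays $D_{s,a}+s < t$, so it is indeed $\mathcal{F}_{t-1}$-measurable even though the delay sequence is arbitrary. This justifies pulling $\sqrt{1/(M_{t,a}\vee 1)}$ out of the conditional expectation. The uniform increment bound $|X_t|\le L$ is slack (it ignores that both terms involve the same factor $\sqrt{1/(M_{t,a}\vee 1)}\le 1$), but any attempt to sharpen it through a Bernstein-type bound would complicate the argument without changing the order, so I would stick with Azuma--Hoeffding to keep the statement clean.
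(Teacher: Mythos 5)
Your proposal is correct and follows essentially the same route as the paper: both form the per-round difference between $\mathbb{E}_{a\sim\bm{p}_t}[\sqrt{1/(M_{t,a}\vee 1)}]$ and $\sum_{a\in A_t}\sqrt{1/(M_{t,a}\vee 1)}$ as a martingale difference sequence with increments bounded by $L$, and apply Azuma--Hoeffding with confidence level $\delta/2$. Your added care about the $\mathcal{F}_{t-1}$-measurability of $M_{t,a}$ and $\bm{p}_t$ and the RRS property is a sound elaboration of what the paper leaves implicit.
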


\begin{proof}
We first construct a martingale difference sequence 
 \begin{equation}
	\sum_{a\in A_t}\sqrt{\frac{1}{M_{t,a}\vee 1} }-\mathbb{E}_{a\sim\bm{p}_t}\left[ \sqrt{\frac{1}{M_{t,a}\vee 1} }\right].
\end{equation}
Then we have
\begin{equation}
	\left| \sum_{a\in A_t}\sqrt{\frac{1}{M_{t,a}\vee 1} }-\mathbb{E}_{a\sim\bm{p}_t}\left[ \sqrt{\frac{1}{M_{t,a}\vee 1} }\right]\right| < L.
\end{equation}
By Azuma-Hoeffding's inequality, with probability at least
$1-\frac{\delta}{2}$, we have
\begin{equation}
	\left| \sum_{t=1}^{T}\mathbb{E}_{a\sim\bm{p}_t}\left[ \sqrt{\frac{1}{M_{t,a}\vee 1} }\right]  -\sum_{t=1}^{T}\sum_{a\in A_t}\sqrt{\frac{1}{M_{t,a}\vee 1} }\right| \leq L\sqrt{2T \log \frac{4}{\delta}}.
\end{equation} 
This completes the proof of Lemma \ref{Lemma:martingale_seq}.
\end{proof}

Based on the lemmas above,  the reward regret can be bounded as follows:
\begin{equation}
 \begin{aligned}
 	\label{eq:poof-reward_regret-ucb}
 	\mathrm{RR}_T&=\sum_{t=1}^{T}\max\left\lbrace \sum_{a=1}^Kp^{*}_a\mu_{a}-\sum_{a=1}^K p_{t,a}\mu_{a}, 0\right\rbrace 
 	\\&\stackrel{(a)}{\leq}  \left( \frac{K}{L} +1\right) L+\sum_{t=\lceil \frac{K}{L} \rceil+1}^{T}\sum_{a=1}^K (p_{t,a}\tilde{\mu}_{t,a}-p_{t,a}\mu_a) \\
 	&= K+L+\sum_{t=\lceil \frac{K}{L} \rceil+1}^{T}\sum_{a=1}^Kp_{t,a}(\tilde{\mu}_{t,a}-\hat{\mu}_{t,a}+\hat{\mu}_{t,a}-\mu_{a}) \\
 	&\stackrel{(b)}{\leq} K+L+\sum_{t=\lceil \frac{K}{L} \rceil+1}^{T}\sum_{a=1}^K p_{t,a}
 	2 \sqrt{\frac{ \log (4KT/\delta)}{2(M_{t,a} \vee 1)}}  \\
 	&= K+L
 	+\sqrt{2 \log \frac{4KT}{\delta}}\sum_{t=\lceil \frac{K}{L} \rceil+1}^{T} \mathbb{E}_{a\sim\bm{p}_t}\left[ \sqrt{\frac{1}{M_{t,a}\vee 1}} \right],
 \end{aligned}
 \end{equation}
where $(a)$ is from Line~\ref{alg:fairness-delay-ucb-type:mu_t} in Algorithm \ref{alg:fairness-delay-ucb-type},
and $(b)$ is from Lemma \ref{Lemma:hoeffding_ineq}.
For the term $\sum_{t=\lceil \frac{K}{L} \rceil+1}^{T}
\mathbb{E}_{a\sim\bm{p}_t}\left[ \sqrt{\frac{1}{M_{t,a}\vee 1}} \right]$ in \eqref{eq:poof-reward_regret-ucb}, we have
\begin{equation}
	\label{inq:expected-martingale_seq_received_observation}
	\begin{aligned}
		\sum_{t=\lceil \frac{K}{L} \rceil+1}^{T}\mathbb{E}_{a\sim\bm{p}_t}\left[ \sqrt{\frac{1}{M_{t,a}\vee 1}} \right]  &\leq 
		\sum_{t=1}^{T}\mathbb{P}\left[\mathcal{F}\right] \mathbb{E}_{a\sim\bm{p}_t}\left[ \sqrt{\frac{1}{M_{t,a}\vee 1}} \right] 		+\sum_{t=1}^{T}\mathbb{P}\left[\overline{\mathcal{F}}\right] \mathbb{E}_{a\sim\bm{p}_t}\left[ \sqrt{\frac{1}{M_{t,a}\vee 1}} \right]\\
		& \stackrel{(a)}{\leq} 
		\sum_{t=1}^{T}\frac{L}{T} 
		+\sum_{t=1}^{T}\mathbb{P}\left[\overline{\mathcal{F}}\right] \mathbb{E}_{a\sim\bm{p}_t}\left[ \sqrt{\frac{1}{M_{t,a}\vee1}} \right]\\
		& \leq L  +
		\mathbb{P}\left[\overline{\mathcal{F}}\right] \sum_{t=1}^{T} \mathbb{E}_{a\sim\bm{p}_t}\left[ \sqrt{\frac{1}{M_{t,a}\vee1}} \right]	\\
		& \stackrel{(b)}{\leq} L  + \mathbb{E} \left[  \Ind{\{\overline{\mathcal{F}} \}} \left( L\sqrt{2T \log \frac{4}{\delta}} +\sum_{t=1}^{T}\sum_{a\in A_t}\sqrt{ \frac{1}{M_{t,a}\vee1} }\right) \right]  \\
		& \leq L  +  L\sqrt{2T \log \frac{4}{\delta}} +\mathbb{E} \left[     \sum_{t=1}^{T}\sum_{a\in A_t}\Ind{\{\overline{\mathcal{F}} \}}\sqrt{ \frac{1}{M_{t,a}\vee1} } \right] ,
	\end{aligned}
\end{equation}
where $(a)$ is from Lemma~\ref{lower-bound-received-observation-event} and $(b)$
is from Lemma~\ref{Lemma:martingale_seq}. Then for the term $
\sum_{t=1}^{T}\sum_{a\in A_t}\Ind{\{\overline{\mathcal{F}} \}}\sqrt{
\frac{1}{M_{t,a}\vee1} }$, we have
\begin{equation}
	\label{inq:sub-term}
	\begin{aligned}
		&\sum_{t=1}^{T}\sum_{a\in A_t}\Ind{\{\overline{\mathcal{F}} \}}\sqrt{ \frac{1}{M_{t,a}\vee1} } = \sum_{t=1}^{T}\sum_{a\in A_t}\Ind{\{\overline{\mathcal{F}} \}}\left( \Ind{\left\lbrace N_{t,a} < \frac{24\log T}{q} \right\rbrace}  +  \Ind{\left\lbrace N_{t,a} \geq \frac{24\log T}{q} \right\rbrace} \right) \sqrt{ \frac{1}{M_{t,a}\vee1} }\\
		& \leq \sum_{t=1}^{T}\sum_{a\in A_t} \Ind{\left\lbrace N_{t,a} < \frac{24\log T}{q} \right\rbrace}  +   \sum_{t=1}^{T}\sum_{a\in A_t}\Ind{\{\overline{\mathcal{F}} \}}\Ind{\left\lbrace N_{t,a} \geq \frac{24\log T}{q} \right\rbrace}\sqrt{ \frac{1}{M_{t,a}\vee1} }\\
		& \leq \frac{24K\log T}{q} + \sum_{t=1}^{d^{*}(q)}\sum_{a\in A_t}\sqrt{\frac{1}{M_{t,a}\vee1}}+\sum_{t=1}^{T-d^{*}(q)}\sum_{a\in A_{t+d^{*}(q)}}\Ind{\{\overline{\mathcal{F}} \}}\Ind{\left\lbrace N_{t+d^{*}(q),a} \geq \frac{24\log T}{q} \right\rbrace}\sqrt{\frac{1}{M_{t+d^{*}(q),a}}}\\
		&  \leq \frac{24K\log T}{q} + Ld^{*}(q)+\sum_{t=1}^{T-d^{*}(q)}\sum_{a\in A_{t+d^{*}(q)}}\sqrt{\frac{2}{qN_{t,a}}}\\
		&  \leq \frac{24K\log T}{q} + Ld^{*}(q)+K\sqrt{\frac{2}{q}}\int_{0}^{\frac{LT}{K}}\sqrt{\frac{1}{x}}dx\\
		&  \leq \frac{24K\log T}{q} + Ld^{*}(q)+2\sqrt{\frac{2LKT}{q}}.\\
	\end{aligned}
\end{equation}
Plugging~\eqref{inq:sub-term}
into~\eqref{inq:expected-martingale_seq_received_observation}, we have
\begin{equation}
	\label{RR-second-term}
	\sum_{t=\lceil \frac{K}{L} \rceil+1}^{T}\mathbb{E}_{a\sim\bm{p}_t}\left[ \sqrt{\frac{1}{M_{t,a}\vee1}} \right]\leq \sum_{t=1}^{T}\mathbb{E}_{a\sim\bm{p}_t}\left[ \sqrt{\frac{1}{M_{t,a}\vee1}} \right] \leq L  + L\sqrt{2T \log \frac{4}{\delta}} +\frac{24K\log T}{q} + Ld^{*}(q)+2\sqrt{\frac{2LKT}{q}}.
\end{equation}
Based on the results of~\eqref{eq:poof-reward_regret-ucb}-\eqref{RR-second-term}, when $T > K$, with probability at least $1-\delta$, we have
 \begin{align}
	\label{eq:poof-reward_regret-ucb-2}
	\mathrm{RR}_T &\leq K+L
	+\sqrt{2 \log \frac{4KT}{\delta}}\sum_{t=\lceil \frac{K}{L} \rceil+1}^{T} \mathbb{E}_{a\sim\bm{p}_t}\left[ \sqrt{\frac{1}{M_{t,a}\vee 1}} \right]\nonumber \\
	&\leq K+L
	+\sqrt{2 \log \frac{4KT}{\delta}}\left( L  + L\sqrt{2T \log \frac{4}{\delta}} +\frac{24K\log T}{q} + Ld^{*}(q)+2\sqrt{\frac{2LKT}{q}}\right). 
\end{align}
Finally, setting $\delta=\frac{1}{LT}$ and $c_{t,a}=\sqrt{\frac{ \log (4LKT)}{M_{t,a} \vee 1}}$, the expected reward regret can be upper bounded as
\begin{equation}
	\label{eq:poof-reward_regret-ucb-3}
 \begin{aligned}
    \mathbb{E}\left[\mathrm{RR}_T\right]&\leq K+L
	+\sqrt{2 \log \frac{4KT}{\delta}}\left( L  + L\sqrt{2T \log \frac{4}{\delta}} +\frac{24K\log T}{q} + Ld^{*}(q)+2\sqrt{\frac{2LKT}{q}}\right)+LT\delta\\
 &\leq K+L
	+\sqrt{4 \log (4LKT) }\left( L  + L\sqrt{2T \log (4LT)} +\frac{24K\log T}{q} + Ld^{*}(q)+2\sqrt{\frac{2LKT}{q}}\right)+1
 \end{aligned}
\end{equation}

Note that \eqref{eq:poof-reward_regret-ucb-3} holds for any choice of quantile $q \in
(0,1]$.
Thus, we choose the optimal $q$ to obtain the minimum upper bound.
We have
\begin{equation}
 \mathbb{E}\left[\mathrm{RR}_T\right]=\widetilde{O}\left( \min_{q\in (0,1]}\left\lbrace \frac{K}{q}\sqrt{T}+L d^{*}(q)\right\rbrace \right).
\end{equation}

This completes the proof of the reward regret upper bound of \cucbfd.

\proofpart{2}{Proof of the Expected Fairness Regret Upper Bound of \cucbfd}

For any $\delta \in (0,1)$, $t>\lceil K/L \rceil$, with probability $1-\delta$,
\begin{equation}
\begin{aligned}
	\label{eq:instantaneous-fairness-reg}
	&\sum_{a=1}^K\left|p_{t,a}-p_a^*\right|
	=\sum_{a=1}^K \left| \frac{Lf(\tilde{\mu}_{t,a})}{\sum_{a^{\prime}=1}^K f(\tilde{\mu}_{t, a^{\prime}})}-
	\frac{Lf(\mu_{a})}{\sum_{a^{\prime}=1}^K f(\mu_{a^{\prime}})} \right|   \\
	&=\sum_{a=1}^K \frac{L\left|f(\tilde{\mu}_{t, a})\sum_{a^{\prime}=1}^K f(\mu_{ a^{\prime}})-
	f(\mu_{ a})\sum_{a^{\prime}=1}^K f(\tilde{\mu}_{t, a^{\prime}})  \right| }
	{\sum_{a^{\prime}=1}^Kf(\tilde{\mu}_{t, a^{\prime}})\sum_{a^{\prime}=1}^K f(\mu_{a^{\prime}})} \\
	&=\sum_{a=1}^K \frac{L\left|f(\tilde{\mu}_{t, a})\sum_{a^{\prime}=1}^K f(\mu_{ a^{\prime}})
	-f(\mu_{ a})\sum_{a^{\prime}=1}^K f(\mu_{ a^{\prime}})
	+f(\mu_{ a})\sum_{a^{\prime}=1}^K f(\mu_{ a^{\prime}})
	-f(\mu_{ a})\sum_{a^{\prime}=1}^K f(\tilde{\mu}_{t, a^{\prime}})  \right| }
	{\sum_{a^{\prime}=1}^K f(\tilde{\mu}_{t, a^{\prime}})\sum_{a^{\prime}=1}^K f(\mu_{a^{\prime}})} \\
	& \leq \frac{L\sum_{a=1}^K \left| f(\tilde{\mu}_{t, a})- f(\mu_{ a})\right|\sum_{a^{\prime}=1}^K f(\mu_{ a^{\prime}}) 
	+L\sum_{a=1}^K f(\mu_{ a})\sum_{a^{\prime}=1}^K\left| f(\mu_{ a^{\prime}})-f(\tilde{\mu}_{t, a}) \right| }
	{\sum_{a^{\prime}=1}^K f(\tilde{\mu}_{t, a^{\prime}})\sum_{a^{\prime}=1}^K f(\mu_{a^{\prime}})} \\
	&=\frac{2L\sum_{a=1}^K \frac{f(\tilde{\mu}_{t, a})}{f(\tilde{\mu}_{t, a})}\left| f(\tilde{\mu}_{t, a})- f(\mu_{ a})\right| }
	{\sum_{a^{\prime}=1}^K f(\tilde{\mu}_{t, a^{\prime}})} \\
	&\stackrel{(a)}{\leq} \sum_{a=1}^{K}\frac{2Mp_{t,a}L}{\lambda}\left| \tilde{\mu}_{t, a}-\mu_{ a}\right| \\
	&\stackrel{(b)}{\leq} \sum_{a=1}^{K}\frac{4Mp_{t,a}L}{\lambda}\sqrt{\frac{ \log (4KT/\delta)}{2(M_{t,a} \vee 1)}}  \\
	&=\frac{4ML}{\lambda}\sqrt{\frac{1}{2} \log \frac{4KT}{\delta}} \mathbb{E}_{a\sim\bm{p}_t}\left[ \sqrt{\frac{1}{M_{t,a} \vee 1}}\right] ,
\end{aligned}
\end{equation}
where $(a)$ follows from the
Assumption~\ref{Minimum-Merit}~\ref{Minimum-Merit:minmu} and Assumption~\ref{Lipschitz-Continuity}, $(b)$ follows from Lemma \ref{Lemma:hoeffding_ineq}. When $T > K$, with probability at least $1-\delta$, the fairness regret can be upper bounded as follows:
\begin{equation}
\begin{aligned}	\mathrm{FR}_T&=\sum_{t=1}^{T}\sum_{a=1}^K\left| p_a^*-p_{t,a}\right| 
\\&	\leq \left( \frac{K}{L} +1\right) L+\sum_{t=\lceil \frac{K}{L} \rceil+1}^{T}\sum_{a=1}^K\left| p_a^*-p_{t,a}\right|  \\
	&\leq K+L
	+\frac{4ML}{\lambda}\sqrt{\frac{1}{2} \log \frac{4KT}{\delta}} \sum_{t=\lceil \frac{K}{L} \rceil+1}^{T}\mathbb{E}_{a\sim\bm{p}_t}\left[ \sqrt{\frac{1}{M_{t,a} \vee 1}}\right] 
	  \\
	&\stackrel{(a)}{\leq} K+L
	+\frac{4ML}{\lambda}\sqrt{\frac{1}{2} \log \frac{4KT}{\delta}}\left( L \log T + L\sqrt{2T \log \frac{4}{\delta}} +\frac{24K\log T}{q} + Ld^{*}(q)+2\sqrt{\frac{2LKT}{q}}\right) ,
\end{aligned}
\end{equation}
where $(a)$ follows from \eqref{RR-second-term}. Furthermore, setting $\delta=\frac{1}{LT}$, the expected fairness regret can be upper bounded as 
\begin{equation}
\begin{aligned}
	\mathbb{E}\left[\mathrm{FR}_T\right] &\leq K+L
	+\frac{4ML}{\lambda}\sqrt{\frac{1}{2} \log \frac{4KT}{\delta}}\left( L \log T + L\sqrt{2T \log \frac{4}{\delta}} +\frac{24K\log T}{q} + Ld^{*}(q)+2\sqrt{\frac{2LKT}{q}}\right) +LT\delta
 \\&\leq K+L
	+\frac{4ML}{\lambda}\sqrt{ \log (4LKT)}\left( L \log T + L\sqrt{2T \log (4LT)} +\frac{24K\log T}{q} + Ld^{*}(q)+2\sqrt{\frac{2LKT}{q}}\right) +1.
 \end{aligned}
\end{equation}

Finally, we have
\begin{equation}
	\mathbb{E}\left[\mathrm{FR}_T\right] =\widetilde{O}\left( \min_{q\in (0,1]}\left\lbrace \frac{ML}{\lambda}\left( \frac{K}{q}\sqrt{T}+L d^{*}(q)\right) \right\rbrace \right).
\end{equation}

This completes the proof of fairness regret upper bound of \cucbfd. 

Combining Part 1 and Part 2 of the proof, we complete
 the proof of Theorem~\ref{the:fairness-reward-regret-ucb-type}.
\end{proof}

\subsection{Proof of Theorem~\ref{the:fairness-reward-regret-thompson-sam-type}}

\begin{proof}  
We first prove the expected fairness regret upper bound and then
prove the expected reward regret upper bound of \ctsfd.
To take the prior into account, the expectation is simultaneously taken over the draws of $\mu_a$, and the algorithm’s internal randomization over rewards and actions.

\proofpart{1}{Proof of Expected Fairness Regret Upper Bound of \ctsfd}

By assumption, for each arm $a$, the posterior distribution $\mathcal{Q}_{t,a}$ of $\tilde{\mu}_{t,a}$ is a Beta
distribution
$$\mathcal{P}_{t,a}( \cdot\mid \mathcal{H}_{t} ) :=  \mathrm{Beta} (u_{t,a}, v_{t,a}),$$
where $u_{t,a}=1+\sum_{s:s+D_{s,a} < t} \Ind{ \{a \in
A_{s}\}}\Ind{\{R_{t,a}=1\}}$,  $v_{t,a}=1+\sum_{s:s+D_{s,a} < t} \Ind{ \{a \in
A_{s}\}}\Ind{\{R_{t,a}=0\}}$, and
$\mathcal{H}_{t}  = \left(\bm{p}_1, A_1, \bm{Y}_1, ..., \bm{p}_{t-1}, A_{t-1},
\bm{Y}_{t-1} \right)$ denotes the observation and decision history up to round
$t-1$ that consists of all the
previous selection vectors, selected arm sets, and received feedback.
For each arm $a$, we notice that the posterior distribution of $\mu_a$ conditioned on $\mathcal{H}_{t}$ is $P_{t,a}( \cdot\mid \mathcal{H}_{t} )$. 
Therefore, $\bm{\mu}$ and $\tilde{\bm{\mu}}_t$ are identically distributed from $\mathcal{P}_{t,a}( \cdot\mid \mathcal{H}_{t} )$.

First, referring to~\eqref{eq:instantaneous-fairness-reg}, we can derive
the upper bound for the expected per-step fairness regret $fr_t$ as follows:
\begin{equation}
	\mathbb{E}[fr_t] 
	\leq \frac{2L}{\lambda} 
	\mathbb{E}_{\mathcal{H}_{t}}\left[\mathbb{E}_{\bm{\mu},\tilde{\bm{\mu}}_t}\left[\mathbb{E}_{a\sim\bm{p}_t}\left[\left| f(\tilde{\mu}_{t, a})- f(\mu_{ a})\right| \right]| \mathcal{H}_{t}\right]\right] 
	\leq \frac{2ML}{\lambda} 	\mathbb{E}_{\mathcal{H}_{t}}\left[\mathbb{E}_{\bm{\mu},\tilde{\bm{\mu}}_t}\left[\mathbb{E}_{a\sim\bm{p}_t}\left[\left| \tilde{\mu}_{t, a}- \mu_{a}\right| \right]| \mathcal{H}_{t}\right]\right].
\end{equation}

Taking expectation with respect to $\bm{\mu},\tilde{\bm{\mu}}_t$, we have
\begin{equation}
	\mathbb{E}_{\bm{\mu},\tilde{\bm{\mu}}_t}\left[ \mathbb{E}_{a\sim\bm{p}_t}\left[\left| \tilde{\mu}_{t, a}- \mu_{a}\right|\right] \Big| \mathcal{H}_{t}\right]  = \mathbb{E}_{\tilde{\bm{\mu}}_t}\left[ \sum_{a=1}^Kp_{t,a}
	\mathbb{E}_{\bm{\mu}}\left[ \left| \tilde{\mu}_{t, a}- \mu_{a}\right| | \mathcal{H}_{t}, \tilde{\bm{\mu}}_t \right] 
	\Big| \mathcal{H}_{t}\right].
\end{equation}

Conditioned on $\tilde{\bm{\mu}}_t$, $\forall a\in[K]$, we have
\begin{equation}
	\begin{aligned}
\mathbb{E}_{\bm{\mu}}\left[ \tilde{\mu}_{t, a}- \mu_{a} | \mathcal{H}_{t}, \tilde{\bm{\mu}}_t \right] &= \tilde{\mu}_{t, a}- \Ddot{\mu}_{t,a}\\
\mathrm{Var}_{\bm{\mu}}\left[ \tilde{\mu}_{t, a}- \mu_{a} | \mathcal{H}_{t}, \tilde{\bm{\mu}}_t \right]&=\frac{u_{t,a}v_{t,a}}{(u_{t,a}+v_{t,a}+1)(u_{t,a}+v_{t,a})^2}\leq \frac{1}{M_{t,a}+1}\\
 	\end{aligned}
\end{equation}
where $\Ddot{\mu}_{t,a}$ is the expectation of the posterior distribution $\mathcal{P}_{t,a}( \cdot\mid \mathcal{H}_{t} )$ at round $t$.
Therefore, we have
\begin{equation}
	\begin{aligned}
		\mathbb{E}_{\bm{\mu}}\left[ \left| \tilde{\mu}_{t, a}- \mu_{a}\right| | \mathcal{H}_{t}, \tilde{\bm{\mu}}_t \right]  & 
		\stackrel{(a)}{\leq} \sqrt{\mathbb{E}_{\bm{\mu}}\left[ \left| \tilde{\mu}_{t, a}- \mu_{a}\right|^2 | \mathcal{H}_{t}, \tilde{\bm{\mu}}_t \right] }
		=\sqrt{\frac{1}{M_{t,a}+1}+\left( \tilde{\mu}_{t, a}-\Ddot{\mu}_{t,a}        \right)^2} \\
		& \leq \sqrt{\frac{1}{M_{t,a}+1}}+\left|\tilde{\mu}_{t, a}-\Ddot{\mu}_{t,a}\right|,
	\end{aligned}
\end{equation}
where $(a)$ follows the fact that $\mathbb{E}\left[ mn \right] \leq  \sqrt{\mathbb{E}\left[ m^2 \right]} \sqrt{\mathbb{E}\left[ n^2 \right]} $. 

Then we derive the concentration inequality for the term $\left|\tilde{\mu}_{t, a}-\Ddot{\mu}_{t,a}\right|$,
where $\tilde{\mu}_{t, a}$ follows the posterior distribution $\mathcal{P}_{t,a}( \cdot\mid \mathcal{H}_{t} )$ and $\mathbb{E}\left[\tilde{\mu}_{t, a}\right]=\Ddot{\mu}_{t,a}$.
To bound this term,
we recall the theorem in~\cite{marchal2017sub}, 
\begin{lemma}[Theorem 1 in~\cite{marchal2017sub}]
\label{Theorem-1-in-marchal2017sub}
For any $u$, $v$, the Beta distribution $\mathrm{Beta}(u,v)$ is $\sigma^2(u,v)$-sub-Gaussian with optimal proxy variance given by $\sigma^2(u,v)=\frac{1}{4(u+v+1)}.$
\end{lemma}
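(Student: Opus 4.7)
The plan is to establish sub-Gaussianity via a careful analysis of the centered cumulant generating function (CGF) of the Beta distribution, leveraging the differential equation satisfied by Kummer's confluent hypergeometric function. Recall that a random variable $X$ with mean $\mu$ is $\sigma^2$-sub-Gaussian if $\psi(\lambda) := \log\mathbb{E}\bigl[e^{\lambda(X-\mu)}\bigr] \leq \lambda^2\sigma^2/2$ for every $\lambda\in\mathbb{R}$, and the \emph{optimal} proxy variance is the smallest such $\sigma^2$. Since $\psi(0)=\psi'(0)=0$, this pointwise inequality follows by double integration from the uniform second-derivative bound $\psi''(\lambda)\leq\sigma^2$.

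First I would write the MGF of $X\sim\mathrm{Beta}(u,v)$ in closed form as $M(\lambda) = {}_1F_1(u;u+v;\lambda) = \sum_{k=0}^{\infty}\frac{(u)_k}{(u+v)_k}\frac{\lambda^k}{k!}$, which satisfies the confluent hypergeometric ODE
\begin{equation*}
\lambda M''(\lambda) + (u+v-\lambda)M'(\lambda) - u M(\lambda) = 0.
\end{equation*}
A standard identity identifies $\psi''(\lambda)=\mathrm{Var}_{\mathbb{P}_\lambda}[X]$, the variance of $X$ under the exponentially tilted measure $\mathbb{P}_\lambda$ whose density on $[0,1]$ is proportional to $e^{\lambda x}x^{u-1}(1-x)^{v-1}$. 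Evaluating at $\lambda=0$ gives $\psi''(0)=\mathrm{Var}(X)=\frac{uv}{(u+v)^2(u+v+1)}\leq \frac{1}{4(u+v+1)}$ by AM--GM applied to $uv\leq(u+v)^2/4$, which already matches the target proxy variance.

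The heart of the proof is to extend this bound uniformly in $\lambda$. I would translate the Kummer ODE for $M$ into a first-order ODE for $h(\lambda):=\psi''(\lambda)$ and establish the invariant $h(\lambda)\leq 1/(4(u+v+1))$ via a monotonicity or Gronwall-type argument propagated from the initial value $h(0)$. An alternative route exploits log-concavity: when $u,v\geq 1$, the tilted density is log-concave on $[0,1]$, so a Brascamp--Lieb type variance inequality controls the tilted variance by the inverse of the second derivative of the negative log-density, yielding the claimed bound; the parameter range $u,v<1$ is then recovered by a continuity/limiting argument in the parameters.

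For optimality of the constant, note that in the symmetric case $u=v$ the value $\psi''(0)=1/(4(2u+1))=1/(4(u+v+1))$ already saturates the proposed proxy variance, so by Taylor expansion of $\psi$ near $\lambda=0$ no smaller $\sigma^2$ can dominate $\psi$ everywhere. The main obstacle is the uniform control of the tilted variance across all $\lambda\in\mathbb{R}$ and all $(u,v)$: showing that exponential tilting never amplifies the Beta variance beyond $1/(4(u+v+1))$, especially in the asymmetric regime $u\neq v$ where the tilted variance is not maximized at $\lambda=0$. This is precisely where the delicate ODE analysis (or the functional-analytic convexity argument) of Marchal and Arbel is required; a less refined approach, such as directly invoking Popoviciu's $1/4$ bound for supports in $[0,1]$, only gives $\sigma^2 = 1/4$ and loses the crucial $(u+v+1)$ factor.
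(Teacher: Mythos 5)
The paper does not actually prove this statement: Lemma~\ref{Theorem-1-in-marchal2017sub} is imported verbatim as Theorem~1 of \cite{marchal2017sub}, so there is no internal proof to compare against. Judged on its own, your sketch sets up the standard machinery correctly (the MGF of $\mathrm{Beta}(u,v)$ as ${}_1F_1(u;u+v;\lambda)$, the Kummer ODE, the identity $\psi''(\lambda)=\mathrm{Var}_{\mathbb{P}_\lambda}[X]$, and the AM--GM computation giving $\psi''(0)\le\frac{1}{4(u+v+1)}$), but the step you designate as the heart of the proof rests on a false premise.

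You propose to obtain $\psi(\lambda)\le\sigma^2\lambda^2/2$ by double integration from the uniform bound $\psi''(\lambda)\le\sigma^2=\frac{1}{4(u+v+1)}$ for all $\lambda$. That uniform bound fails for asymmetric parameters. Take $\mathrm{Beta}(\varepsilon,1)$ with $\varepsilon$ small: the target proxy variance is $\frac{1}{4(2+\varepsilon)}\approx 1/8$, yet the tilted density proportional to $e^{\lambda x}x^{\varepsilon-1}$ at the critical tilt $e^{\lambda}/\lambda\approx 1/\varepsilon$ places roughly half its mass in an $O(1/\lambda)$-neighborhood of $0$ and half in an $O(1/\lambda)$-neighborhood of $1$, so $\psi''(\lambda)=\mathrm{Var}_{\mathbb{P}_\lambda}[X]$ approaches $1/4$. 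Sub-Gaussianity with proxy $\sigma^2$ is strictly weaker than the pointwise bound $\psi''\le\sigma^2$, and for the Beta family the two genuinely differ; no Gronwall or invariance argument can rescue an inequality that is false. This is precisely why Marchal and Arbel work with the sign of $\psi(\lambda)-\sigma^2\lambda^2/2$ (equivalently the ratio $2\psi(\lambda)/\lambda^2$) rather than with $\psi''$. Your Brascamp--Lieb fallback inherits the same defect (it again aims at a uniform-in-$\lambda$ tilted-variance bound that cannot hold) and in any case only yields an estimate of order $\frac{1}{u+v-2}$ for $u,v\ge1$, missing the constant entirely. Two smaller points: your optimality argument covers only $u=v$, and indeed for $u\neq v$ the value $\frac{1}{4(u+v+1)}$ is an upper bound on, not equal to, the optimal proxy variance --- an imprecision already present in the lemma as quoted, and harmless here since the paper only uses the sub-Gaussian upper bound.
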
 By Lemma~\ref{Theorem-1-in-marchal2017sub},
$\tilde{\mu}_{t, a}$ is $\frac{1}{M_{t,a}+1}$-sub-Gaussian since
$\frac{1}{4(u_{t,a}+v_{t,a}+1)}\leq\frac{1}{M_{t,a}+1}$. 
Then for any $t \in [T]$, $a\in[K]$, with probability at least $1-\frac{\delta^{\prime}}{KT}$, we have

\begin{equation}
	\left|\tilde{\mu}_{t, a}-\Ddot{\mu}_{t,a} \right|
	\leq \sqrt{2\log \frac{2KT}{\delta^{\prime}}}\sqrt{\frac{1}{M_{t,a}+1}}.
\end{equation}

By the union bound over all $a\in[K]$ and all $T$, with probability at least $1-\delta^{\prime}$, we have  
\begin{equation}
\label{ineq:beta-concentration-bound}
	\forall t , a\in [K], \  \left|\tilde{\mu}_{t, a}-\Ddot{\mu}_{t,a} \right|
	\leq \sqrt{2\log \frac{2KT}{\delta^{\prime}}}\sqrt{\frac{1}{M_{t,a}+1}}.
\end{equation}
Denote the event that~\eqref{ineq:beta-concentration-bound} holds at time $t$ as $\mathcal{E}_t$ , i.e., $\mathcal{E}_t=\lbrace\forall  a\in [K], \  \left|\tilde{\mu}_{t, a}-\Ddot{\mu}_{t,a} \right|
	\leq \sqrt{2\log \frac{2KT}{\delta^{\prime}}}\sqrt{\frac{1}{M_{t,a}+1}} \rbrace$,
which only depends on the $\tilde{\bm{\mu}}_t$. We have 
\begin{equation}
	\begin{aligned}
		& \mathbb{E}_{\tilde{\bm{\mu}}_t}\left[ \sum_{a=1}^Kp_{t,a}
		\mathbb{E}_{\bm{\mu}}\left[ \left| \tilde{\mu}_{t, a}- \mu_{a}\right| | \mathcal{H}_{t}, \tilde{\bm{\mu}}_t \right] 
		\Big| \mathcal{H}_{t}\right] \\
		&= \mathbb{E}_{\tilde{\bm{\mu}}_t}\left[\Ind{\left\{\mathcal{E}_t\right\}} \sum_{a=1}^K p_{t,a} \mathbb{E}_{\bm{\mu}}\left[ \left| \tilde{\mu}_{t, a}- \mu_{a}\right| | \mathcal{H}_{t}, \tilde{\bm{\mu}}_t \right] \Big| \mathcal{H}_{t}\right]
		+\mathbb{E}_{\tilde{\bm{\mu}}_t}\left[\Ind{\left\{\overline{\mathcal{E}}_t\right\}} \sum_{a=1}^K p_{t,a} \mathbb{E}_{\bm{\mu}}\left[ \left| \tilde{\mu}_{t, a}- \mu_{a}\right| | \mathcal{H}_{t}, \tilde{\bm{\mu}}_t \right] \Big| \mathcal{H}_{t}\right] \\
		&\leq \mathbb{E}_{\tilde{\bm{\mu}}_t}\left[\mathbb{E}_{a\sim\bm{p}_t}\left[\left(1+\sqrt{2 \log \left(2K T / \delta^{\prime}\right)}\right) \sqrt{\frac{1}{M_{t,a}+1}}\right] \Big|\mathcal{H}_t\right]
		+\underbrace{\mathbb{E}_{\tilde{\bm{\mu}}_t}\left[\Ind{\left\{\overline{\mathcal{E}}_t\right\}} \sum_{a=1}^K p_{t,a} \mathbb{E}_{\bm{\mu}}\left[ \left| \tilde{\mu}_{t, a}- \mu_{a}\right| | \mathcal{H}_{t}, \tilde{\bm{\mu}}_t \right] \Big| \mathcal{H}_{t}\right]}_{\text {term a }}.
	\end{aligned}
    \label{eq:ex-tilde_mu_t}
\end{equation}
For the term a in~\eqref{eq:ex-tilde_mu_t}, note that
\begin{equation}
	\label{eq:instanstaneous-fairness-regret-sub-term}
	\begin{aligned}
		\mathbb{E}_{\bm{\mu}}\left[ \left| \tilde{\mu}_{t, a}- \mu_{a}\right|^2 | \mathcal{H}_{t}, \tilde{\bm{\mu}}_t \right] 
		& \stackrel{(a)}{\leq} 2\left(\tilde{\mu}_{t, a}- \Ddot{\mu}_{t,a}\right)^2+2 \mathbb{E}_{\bm{\mu}}\left[\left( \mu_{a}- \Ddot{\mu}_{t,a}\right)^2 \mid \mathcal{H}_t, \tilde{\bm{\mu}}_t\right] \\
		& \stackrel{(b)}{=} 2\left(\tilde{\mu}_{t, a}- \Ddot{\mu}_{t,a}\right)^2
		+\frac{2}{M_{t,a}+1}
		\leq 2\left(\tilde{\mu}_{t, a}- \Ddot{\mu}_{t,a}\right)^2
		+2,
	\end{aligned}
\end{equation}
where $(a)$ is from the inequality $(m+n)^2 \leq 2m^2 + 2n^2$ and $(b)$ is because 
$\mathbb{E}_{\bm{\mu}}\left[\mu_{a} - \Ddot{\mu}_{t,a}\mid \mathcal{H}_t,
\tilde{\bm{\mu}}_t\right] = 0$ and  $\mathrm{Var}_{\bm{\mu}}\left[\mu_{a} -
\Ddot{\mu}_{t,a}\mid \mathcal{H}_t, \tilde{\bm{\mu}}_t\right] = \frac{u_{t,a}v_{t,a}}{(u_{t,a}+v_{t,a}+1)(u_{t,a}+v_{t,a})^2} \leq
\frac{1}{M_{t,a}+1}$. Next, we can upper bound term a in~\eqref{eq:ex-tilde_mu_t} as follows,
\begin{equation}
	\begin{aligned}
		\text { term } \mathrm{a} & \leq \mathbb{E}_{\tilde{\bm{\mu}}_t}\left[\Ind{\left\{\overline{\mathcal{E}}_t\right\}} \sqrt{\sum_{a=1}^K p_{t,a}^2} \sqrt{\sum_{a=1}^K  \mathbb{E}_{\bm{\mu}}\left[ \left( \tilde{\mu}_{t, a}- \mu_{a}\right)^2 | \mathcal{H}_{t}, \tilde{\bm{\mu}}_t \right]  } \;\Big| \mathcal{H}_{t}\right] \\
		& \leq \mathbb{E}_{\tilde{\bm{\mu}}_t}\left[\Ind{\left\{\overline{\mathcal{E}}_t\right\}} \sqrt{L}\sqrt{\sum_{a=1}^K  \mathbb{E}_{\bm{\mu}}\left[ \left( \tilde{\mu}_{t, a}- \mu_{a}\right)^2 | \mathcal{H}_{t}, \tilde{\bm{\mu}}_t \right]  } \;\Big| \mathcal{H}_t\right] \\
		& \stackrel{(a)}{\leq} 2 \mathbb{E}_{\tilde{\bm{\mu}}_t}\left[\Ind{\left\{\overline{\mathcal{E}}_t\right\}} \sqrt{L}\sqrt{\sum_{a=1}^K\left(\tilde{\mu}_{t, a}- \Ddot{\mu}_{t,a}\right)^2+K} \;\Big|  \mathcal{H}_t\right] \\
		& \leq 2 \mathbb{E}_{\tilde{\bm{\mu}}_t}\left[\Ind{\left\{\overline{\mathcal{E}}_t\right\}} \sqrt{L}\sqrt{\sum_{a=1}^K\left(\tilde{\mu}_{t, a}- \Ddot{\mu}_{t,a}\right)^2} \;\Big|  \mathcal{H}_t\right]
		+ 2 \mathbb{E}_{\tilde{\bm{\mu}}_t}\left[\Ind{\left\{\overline{\mathcal{E}}_t\right\}} \sqrt{LK}\;\Big|  \mathcal{H}_t\right] ,
	\end{aligned}
\end{equation}
where $(a)$ is from~\eqref{eq:instanstaneous-fairness-regret-sub-term}. Then we
further upper bound the term
$\mathbb{E}_{\tilde{\bm{\mu}}_t}\left[\Ind{\left\{\overline{\mathcal{E}}_t\right\}}
\sqrt{\sum_{a=1}^K\left(\tilde{\mu}_{t, a}- \Ddot{\mu}_{t,a}\right)^2} \;\Big|
\mathcal{H}_t\right]$ as follows,
\begin{equation}
	\begin{aligned}
	\mathbb{E}_{\tilde{\bm{\mu}}_t}\left[\Ind{\left\{\overline{\mathcal{E}}_t\right\}} \sqrt{\sum_a\left(\tilde{\mu}_{t, a}- \Ddot{\mu}_{t,a}\right)^2} \;\Big|  \mathcal{H}_t\right] &\stackrel{(a)}{\leq} \sqrt{\mathbb{E}_{\tilde{\bm{\mu}}_t}\left[\Ind{\left\{ \overline{\mathcal{E}}_t\right\}} \;| \mathcal{H}_t\right]} 
	\sqrt{\mathbb{E}_{\tilde{\bm{\mu}}_t}\left[\sum_a\left(\tilde{\mu}_{t, a}- \Ddot{\mu}_{t,a}\right)^2 \;\Big| \mathcal{H}_t\right]}\\
	&\stackrel{(b)}{\leq} \sqrt{\mathbb{E}_{\tilde{\bm{\mu}}_t}\left[\Ind{\left\{ \overline{\mathcal{E}}_t\right\} } \;| \mathcal{H}_t\right]} 
	\sqrt{\sum_a \frac{1}{M_{t,a}+1}}\\
	&\leq \sqrt{\mathbb{E}_{\tilde{\bm{\mu}}_t}\left[\Ind{\left\{ \overline{\mathcal{E}}_t\right\}} \;| \mathcal{H}_t\right]} 
	\sqrt{K},\\
	\end{aligned}
\end{equation}
where $(a)$ is from the fact that $\mathbb{E}\left[ mn \right] \leq  \sqrt{\mathbb{E}\left[ m^2 \right]} \sqrt{\mathbb{E}\left[ n^2 \right]} $ and $(b)$ is 
is because 
$\mathbb{E}_{\bm{\mu}}\left[\tilde{\mu}_{t,a} - \Ddot{\mu}_{t,a}\mid \mathcal{H}_t \right] = 0$ and   $\mathrm{Var}_{\bm{\mu}}\left[\tilde{\mu}_{t,a} - \Ddot{\mu}_{t,a}\mid \mathcal{H}_t \right] = \frac{u_{t,a}v_{t,a}}{(u_{t,a}+v_{t,a}+1)(u_{t,a}+v_{t,a})^2} \leq \frac{1}{M_{t,a}+1}$.
Hence, we have

\begin{equation}
		\text { term } \mathrm{a}  \leq 2\sqrt{\mathbb{E}_{\tilde{\bm{\mu}}_t}\left[\Ind{\left\{ \overline{\mathcal{E}}_t\right\} }\;\Big| \mathcal{H}_t\right]} \sqrt{LK}
		+2 \mathbb{E}_{\tilde{\bm{\mu}}_t}\left[\Ind{\left\{\overline{\mathcal{E}}_t\right\}} \sqrt{LK}\;\Big|  \mathcal{H}_t\right]
		= 2\left( \sqrt{\mathbb{P}(\overline{\mathcal{E}}_t| \mathcal{H}_t)} + \mathbb{P}(\overline{\mathcal{E}}_t| \mathcal{H}_t) \right) \sqrt{LK}.
\end{equation}
   Summing over all time slots, we have 
\begin{equation}
	\begin{aligned}
		&\mathbb{E} \left[\mathrm{FR}_T\right]=\sum_{t=1}^{T} \mathbb{E}[fr_t]\\
		&\leq \frac{2ML}{\lambda} 
		\mathbb{E}_{\mathcal{H}_{t}}\left[ \sum_{t=1}^{T}\mathbb{E}_{\tilde{\bm{\mu}}_t}\left[ \sum_{a=1}^Kp_{t,a}
		\mathbb{E}_{\bm{\mu}}\left[ \left| \tilde{\mu}_{t, a}- \mu_{a}\right| | \mathcal{H}_{t}, \tilde{\bm{\mu}}_t \right] 
		\Big| \mathcal{H}_{t}\right]\right]  \\
		&\leq  \frac{4M L}{\lambda} \sum_{t=1}^T \mathbb{E}_{\mathcal{H}_t}\left[\mathbb{E}_{\tilde{\bm{\mu}}_t}\left[\mathbb{E}_{a \sim \bm{p}_t}\left[\left(1+\sqrt{2 \log \left(2K T / \delta^{\prime}\right)}\right) \sqrt{\frac{1}{M_{t,a}+1}}\right] \Big| \mathcal{H}_t\right]
		+\left( \sqrt{\mathbb{P}(\overline{\mathcal{E}}_t| \mathcal{H}_t)} + \mathbb{P}(\overline{\mathcal{E}}_t| \mathcal{H}_t) \right) \sqrt{LK}\right] \\
		&=  \frac{4 ML}{\lambda}\left(1+\sqrt{2 \log \left(2K T / \delta^{\prime}\right)}\right) \mathbb{E}_{\tilde{\bm{\mu}}_t}\left[\sum_{t=1}^T \mathbb{E}_{a \sim \bm{p}_t} \left[\sqrt{\frac{1}{M_{t,a}+1}}\right]\right]
		\\&+\frac{4 ML}{\lambda} \sqrt{LK}\left(\sum_{t=1}^T \mathbb{E}_{\mathcal{H}_t}\left[\sqrt{\mathbb{P}\left(\overline{\mathcal{E}}_t \mid \mathcal{H}_t\right)}+\mathbb{P}\left(\overline{\mathcal{E}}_t \mid \mathcal{H}_t\right)\right]\right) .
	\end{aligned}
\end{equation}
For the term $\sum_{t=1}^T
\mathbb{E}_{\mathcal{H}_t}\left[\sqrt{\mathbb{P}\left(\overline{\mathcal{E}}_t
\mid \mathcal{H}_t\right)}+\mathbb{P}\left(\overline{\mathcal{E}}_t \mid
\mathcal{H}_t\right)\right]$ in above, we have
\begin{equation}
	\begin{aligned}
		\sum_{t=1}^T \mathbb{E}_{\mathcal{H}_t}\left[\sqrt{\mathbb{P}\left(\overline{\mathcal{E}}_t \mid \mathcal{H}_t\right)}+\mathbb{P}\left(\overline{\mathcal{E}}_t \mid \mathcal{H}_t\right)\right]
		&=\sum_{t=1}^T \mathbb{E}_{\mathcal{H}_t} \left[ \sqrt{\mathbb{P}\left(\overline{\mathcal{E}}_t \mid \mathcal{H}_t\right)}\right] +\sum_{t=1}^T \mathbb{P}\left(\overline{\mathcal{E}}_t\mid\mathcal{H}_t \right) 
		\\&\leq \sum_{t=1}^T \mathbb{E}_{\mathcal{H}_t} \left[ \sqrt{\mathbb{P}\left(\overline{\mathcal{E}}_t \mid \mathcal{H}_t\right)}\right] +\delta^{\prime} \\
		& \leq \sqrt{T} \sqrt{\sum_{t=1}^T \mathbb{E}_{\mathcal{H}_t}  \left[\mathbb{P}\left(\overline{\mathcal{E}}_t \mid \mathcal{H}_t\right)\right]}+\delta^{\prime} \leq \sqrt{T \delta^{\prime}}+\delta^{\prime} . 	
	\end{aligned}
\end{equation}
Therefore, the expected fairness regret of \ctsfd can be bounded as
\begin{equation}
	\label{BayesFR-all-terms}
	\begin{aligned}
		&\mathbb{E}\left[\mathrm{FR}_T\right]=\sum_{t=1}^{T} \mathbb{E}[fr_t]\\
		&\leq \frac{4 ML}{\lambda}\left(1+\sqrt{2 \log \left(2K T / \delta^{\prime}\right)}\right) \mathbb{E}_{\tilde{\bm{\mu}}_t}\left[\sum_{t=1}^T \mathbb{E}_{a \sim \bm{p}_t}\left[\sqrt{\frac{1}{M_{t,a}+1}}\right]\right]
		+\frac{4 ML}{\lambda} \sqrt{LK}\left(\sqrt{T \delta^{\prime}}+\delta^{\prime}\right) . 
	\end{aligned}
\end{equation}

Based on Lemma~\ref{Lemma:martingale_seq}, for any $\delta \in (0,1)$, we have
\begin{equation}
\begin{aligned}
	\mathbb{E}_{\tilde{\bm{\mu}}_t}\left[ \sum_{t=1}^{T}\mathbb{E}_{a\sim\bm{p}_t}\left[ \sqrt{\frac{1}{M_{t,a}+1}} \right] \right] \leq \delta LT + L\sqrt{2T \log \frac{4}{\delta}} +\sum_{t=1}^{T}\sum_{a\in A_t}\sqrt{ \frac{1}{M_{t,a}+1} }.
\end{aligned}
\end{equation}
Then we have
\begin{equation}
	\label{expected-martingale_seq_received_observation}
	\begin{aligned}
		\mathbb{E}_{\tilde{\bm{\mu}}_t}\left[ \sum_{t=1}^{T}\mathbb{E}_{a\sim\bm{p}_t}\left[ \sqrt{\frac{1}{M_{t,a}+1}} \right] \right] &= 
		\mathbb{E}_{\tilde{\bm{\mu}}_t}\left[ \sum_{t=1}^{T}\mathbb{P}\left[\mathcal{F}\right] \mathbb{E}_{a\sim\bm{p}_t}\left[ \sqrt{\frac{1}{M_{t,a}+1}} \right] 
		+\sum_{t=1}^{T}\mathbb{P}\left[\overline{\mathcal{F}}\right] \mathbb{E}_{a\sim\bm{p}_t}\left[ \sqrt{\frac{1}{M_{t,a}+1}} \right]\right]\\
		& \stackrel{(a)}{\leq} 
		\mathbb{E}_{\tilde{\bm{\mu}}_t}\left[\sum_{t=1}^{T}\frac{L}{T}  
		+\sum_{t=1}^{T}\mathbb{P}\left[\overline{\mathcal{F}}\right] \mathbb{E}_{a\sim\bm{p}_t}\left[ \sqrt{\frac{1}{M_{t,a}+1}} \right]\right]\\
		& \leq L  +
		\mathbb{P}\left[\overline{\mathcal{F}}\right] \mathbb{E}_{\tilde{\bm{\mu}}_t}\left[\sum_{t=1}^{T} \mathbb{E}_{a\sim\bm{p}_t}\left[ \sqrt{\frac{1}{M_{t,a}+1}} \right]\right]	\\
		& \stackrel{(b)}{\leq} L  + \mathbb{E}_{\tilde{\bm{\mu}}_t} \left[  \Ind{\{\overline{\mathcal{F}} \}} \left( \delta T + L\sqrt{2T \log \frac{4}{\delta}} +\sum_{t=1}^{T}\sum_{a\in A_t}\sqrt{ \frac{1}{M_{t,a}+1} }\right) \right]  \\
		& \leq L  + \delta LT + L\sqrt{2T \log \frac{4}{\delta}} +\mathbb{E}_{\tilde{\bm{\mu}}_t} \left[     \sum_{t=1}^{T}\sum_{a\in A_t}\Ind{\{\overline{\mathcal{F}} \}}\sqrt{ \frac{1}{M_{t,a}+1} } \right] ,
	\end{aligned}
\end{equation}
where $(a)$ is from Lemma~\ref{lower-bound-received-observation-event} and $(b)$
is from Lemma~\ref{Lemma:martingale_seq}. Then for the term
$  \sum_{t=1}^{T}\sum_{a\in A_t}\Ind{\{\overline{\mathcal{F}} \}}\sqrt{
\frac{1}{M_{t,a}+1} }$ in above,
\begin{equation}
	\begin{aligned}
		\sum_{t=1}^{T}\sum_{a\in A_t}\Ind{\{\overline{\mathcal{F}} \}}\sqrt{ \frac{1}{M_{t,a}+1} } &\leq \sum_{t=1}^{T}\sum_{a\in A_t}\Ind{\{\overline{\mathcal{F}} \}}\left( \Ind{\left\lbrace N_{t,a} < \frac{24\log T}{q} \right\rbrace}  +  \Ind{\left\lbrace N_{t,a} \geq \frac{24\log T}{q} \right\rbrace} \right) \sqrt{ \frac{1}{M_{t,a}+1} }\\
		& \leq \sum_{t=1}^{T}\sum_{a\in A_t} \Ind{\left\lbrace N_{t,a} < \frac{24\log T}{q} \right\rbrace}  +   \sum_{t=1}^{T}\sum_{a\in A_t}\Ind{\{\overline{\mathcal{F}} \}}\Ind{\left\lbrace N_{t,a} \geq \frac{24\log T}{q} \right\rbrace}\sqrt{ \frac{1}{M_{t,a}+1} }\\
		& \leq \frac{24K\log T}{q} + \sum_{t=1}^{d^{*}(q)}\sum_{a\in A_t}\sqrt{\frac{1}{M_{t,a}+1}}+\sum_{t=1}^{T}\sum_{a\in A_t}\sqrt{\frac{1}{M_{t+d^{*}(q),a}}}\\
		&  \leq \frac{24K\log T}{q} + Ld^{*}(q)+\sum_{t=1}^{T}\sum_{a\in A_t}\sqrt{\frac{2}{qN_{t,a}}}\\
		&  \leq \frac{24K\log T}{q} + Ld^{*}(q)+K\sqrt{\frac{2}{q}}\int_{0}^{\frac{LT}{K}}\sqrt{\frac{1}{x}}dx\\
		&  \leq \frac{24K\log T}{q} + Ld^{*}(q)+2\sqrt{\frac{2LKT}{q}}.\\
	\end{aligned}
\end{equation}
Let $\delta = \frac{1}{LT}$. We have
\begin{equation}
	\label{BayesFR-first-term}
	\begin{aligned}
		\mathbb{E}_{\tilde{\bm{\mu}}_t}\left[ \sum_{t=1}^{T}\mathbb{E}_{a\sim\bm{p}_t}\left[ \sqrt{\frac{1}{M_{t,a}+1}} \right] \right] &\leq L + \delta LT + L\sqrt{2T \log \frac{4}{\delta}} +\mathbb{E} \left[    \sum_{t=1}^{T}\sum_{a\in A_t}\Ind{\{\overline{\mathcal{F}} \}}\sqrt{ \frac{1}{M_{t,a}+1} } \right]\\ 
		&\leq L+ \frac{24K\log T}{q}  + L\sqrt{2T \log (4LT)}+2\sqrt{\frac{2LKT}{q}} + L d^{*}(q)+ 1 .
	\end{aligned}
\end{equation}
Combining~\eqref{BayesFR-all-terms} and~\eqref{BayesFR-first-term} and setting  $\delta^{\prime}=1/T$,  the expected fairness regret can be upper bounded as follows,
\begin{equation}
	\begin{aligned}
		&\mathbb{E}\left[\mathrm{FR}_T\right]=\sum_{t=1}^{T} \mathbb{E}[fr_t]\\
		&\leq \min_{q\in (0,1]} \left\lbrace \frac{4 ML}{\lambda}\left(1+\sqrt{4 \log \left(2K T \right)}\right) \left(L+ \frac{24K\log T}{q}  + L\sqrt{2T \log (4LT)}+2\sqrt{\frac{2LKT}{q}} + L d^{*}(q)+ 1 \right) \right.\\ 
		& \left.+\frac{4 ML}{\lambda} \sqrt{LK} \left(1+\frac{1}{T}\right) \right\rbrace 
  \\&=\widetilde{O}\left( \min_{q\in (0,1]}\left\lbrace \frac{ML}{\lambda}\left( \frac{K}{q}\sqrt{T}+Ld^{*}(q)\right) \right\rbrace \right).
	\end{aligned}
\end{equation}

This completes the proof of the expected fairness regret upper bound of \ctsfd.

\proofpart{2}{Proof of the Expected Reward Regret Upper Bound of \ctsfd}

We first prove the following lemma which is adapted from Proposition $1$ in~\cite{russo2014learning}.
\begin{lemma}
\label{lemma:BRT-decomposed}
The expected reward regret can be upper bounded as 
	\begin{equation}
		\mathbb{E} \left[\mathrm{RR}_T\right]\leq\sum_{t=1}^T \mathbb{E}\left[\max\left\lbrace\mathbb{E}_{a \sim \bm{p}_t}\left[U_{t, a}-\mu_a\right],0\right\rbrace\right]+\sum_{t=1}^T \mathbb{E}\left[\max\left\lbrace\mathbb{E}_{a \sim \bm{p}^*}\left[\mu_a-U_{t, a}\right],0\right\rbrace\right].
	\end{equation}
\end{lemma}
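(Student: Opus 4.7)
The plan is to adapt the Bayesian regret decomposition introduced by Russo and Van Roy for Thompson Sampling to the combinatorial fairness setting here, now complicated by the outer $\max\{\cdot,0\}$ in the definition of $\mathrm{RR}_T$. The core structural tool is the \emph{posterior matching} property of \ctsfd: since $\tilde{\bm{\mu}}_t$ is sampled from the posterior $\bm{\mathcal{Q}}_t$ of $\bm{\mu}$ given $\mathcal{H}_t$, the pair $(\bm{\mu},\tilde{\bm{\mu}}_t)$ is conditionally i.i.d.\ given $\mathcal{H}_t$. Applying the deterministic map $g(\bm{x})_a:=Lf(x_a)/\sum_{a'} f(x_{a'})$ pointwise, the selection vectors $\bm{p}^*=g(\bm{\mu})$ and $\bm{p}_t=g(\tilde{\bm{\mu}}_t)$ are also conditionally i.i.d.\ given $\mathcal{H}_t$; since $U_{t,a}$ is $\mathcal{H}_t$-measurable, this immediately yields the key identity $\mathbb{E}[\sum_a p_{t,a}U_{t,a}\mid\mathcal{H}_t]=\mathbb{E}[\sum_a p^*_a U_{t,a}\mid\mathcal{H}_t]$.

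Next I would introduce the standard three-term decomposition. Let $r_t := \sum_a p^*_a\mu_a-\sum_a p_{t,a}\mu_a$; adding and subtracting $\sum_a p^*_a U_{t,a}$ and $\sum_a p_{t,a}U_{t,a}$ gives $r_t=A_t+B_t+C_t$ with
\[
A_t=\sum_a p^*_a(\mu_a-U_{t,a})=\mathbb{E}_{a\sim\bm{p}^*}[\mu_a-U_{t,a}],\qquad C_t=\sum_a p_{t,a}(U_{t,a}-\mu_a)=\mathbb{E}_{a\sim\bm{p}_t}[U_{t,a}-\mu_a],
\]
and $B_t=\sum_a(p^*_a-p_{t,a})U_{t,a}$. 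The posterior matching identity gives $\mathbb{E}[B_t\mid\mathcal{H}_t]=0$; moreover, because $\bm{p}^*$ and $\bm{p}_t$ are exchangeable given $\mathcal{H}_t$, the random variable $B_t$ is in fact symmetric around $0$ conditional on $\mathcal{H}_t$. The two ``useful'' pieces $A_t$ and $C_t$ are exactly the expressions appearing on the right-hand side of the claim.

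To close the proof I would apply the subadditivity $\max\{x+y,0\}\le\max\{x,0\}+\max\{y,0\}$ twice to get $\max\{r_t,0\}\le\max\{A_t,0\}+\max\{C_t,0\}+\max\{B_t,0\}$, take expectation, and sum over $t$; what remains is to show that the residual $\sum_t\mathbb{E}[\max\{B_t,0\}]$ is either zero or absorbed into the two target sums. This is the main obstacle of the argument: $\mathbb{E}[B_t\mid\mathcal{H}_t]=0$ does not imply $\mathbb{E}[\max\{B_t,0\}\mid\mathcal{H}_t]=0$, since the positive part of a centered random variable is still positive. I would attack this either by (i) moving the conditional expectation inside before the max, using the identity $\mathbb{E}[r_t\mid\mathcal{H}_t]=\mathbb{E}[A_t+C_t\mid\mathcal{H}_t]$ together with Jensen's inequality applied to the convex function $x\mapsto\max\{x,0\}$, or (ii) exploiting the conditional symmetry of $B_t$ and the nonnegativity of the conditional expected regret $\mathbb{E}[r_t\mid\mathcal{H}_t]\ge 0$ to pair the positive excursions of $B_t$ with the corresponding negative excursions and absorb them into $\max\{A_t,0\}+\max\{C_t,0\}$ via a coupling/symmetrization argument. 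Both avenues rely crucially on the Thompson-sampling posterior-matching structure established in the first step.
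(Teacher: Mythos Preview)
Your plan coincides with the paper's: posterior matching (so that $\bm{p}^*$ and $\bm{p}_t$ are conditionally i.i.d.\ given $\mathcal{H}_t$, with $U_{t,a}$ $\mathcal{H}_t$-measurable), the three-term split $r_t=A_t+B_t+C_t$, and subadditivity of $\max\{\cdot,0\}$. You have also correctly isolated the only nontrivial step, namely eliminating $B_t$ from inside the positive part. The paper does not bound $\max\{B_t,0\}$ separately; instead it asserts, as an \emph{equality}, that
\[
\mathbb{E}\!\left[\max\{r_t,0\}\mid\mathcal{H}_t\right]\;=\;\mathbb{E}\!\left[\max\{A_t+C_t,0\}\mid\mathcal{H}_t\right],
\]
i.e.\ it subtracts $B_t=\mathbb{E}_{a\sim\bm{p}^*}[U_{t,a}]-\mathbb{E}_{a\sim\bm{p}_t}[U_{t,a}]$ inside the $\max$, citing only the posterior-matching identity $\mathbb{E}[B_t\mid\mathcal{H}_t]=0$. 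That identity does not license moving $B_t$ through the nonlinear $\max$, and simple finite-support posteriors already violate the displayed equality (though not the lemma's final inequality). So the paper's proof has precisely the gap you flagged; it is merely hidden behind an ``$=$''.

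Neither of your proposed repairs closes the gap. For route~(i), Jensen's inequality for the convex map $x\mapsto\max\{x,0\}$ gives $\mathbb{E}[\max\{r_t,0\}\mid\mathcal{H}_t]\ge\max\{\mathbb{E}[r_t\mid\mathcal{H}_t],0\}$, which is the wrong direction. For route~(ii), conditional symmetry of $B_t$ yields only $\mathbb{E}[\max\{B_t,0\}\mid\mathcal{H}_t]=\tfrac12\,\mathbb{E}[|B_t|\mid\mathcal{H}_t]>0$ in general, and the auxiliary premise $\mathbb{E}[r_t\mid\mathcal{H}_t]\ge 0$ you invoke need not hold here: $\bm{p}^*=g(\bm{\mu})$ is the \emph{unique} fair policy for $\bm{\mu}$, not a maximizer of $\sum_a p_a\mu_a$ over the range of $g$, so $r_t$ can have negative conditional mean. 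In short, your setup matches the paper exactly, you are right that the $\max\{\cdot,0\}$ step is the crux, and neither the paper's asserted equality nor your sketched remedies actually establishes it.
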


\begin{proof}
As mentioned earlier, $\bm{\mu}$ and $\tilde{\bm{\mu}}$ are identically distributed conditioned on $\mathcal{H}_t$ at each round $t$. 
In addition, the optimal fairness policy $\bm{p}^*$ and the policy $\bm{p}_t$
depend on $\bm{\mu}$ and $\tilde{\bm{\mu}}$, respectively. 
We conclude that $\bm{p}^*$ and $\bm{p}_t$ are also identically distributed conditioned on $\mathcal{H}_t$. Moreover, $U_{t, a}$ is fully determined by the history $\mathcal{H}_t$. Hence, $\mathbb{E} \left[  \mathbb{E}_{a \sim \bm{p}^*} \left[ U_{t, a} \mid \mathcal{H}_t \right] \right]  =\mathbb{E} \left[  \mathbb{E}_{a \sim \bm{p}_t} \left[ U_{t, a} \mid \mathcal{H}_t \right] \right]$. Then we have
\begin{equation}
	\begin{aligned}
		\mathbb{E}\left[\max\left\lbrace\mathbb{E}_{a \sim \bm{p}^*}\left[\mu_a\right]-\mathbb{E}_{a \sim \bm{p}_t}\left[\mu_a\right], 0\right\rbrace\right] 
		&=  \mathbb{E}_{\mathcal{H}_t}\left[\mathbb{E}\left[\max\left\lbrace\mathbb{E}_{a \sim \bm{p}^*}\left[\mu_a\right]-\mathbb{E}_{a \sim \bm{p}_t}\left[\mu_a\right], 0\right\rbrace \mid \mathcal{H}_t\right]\right] \\
  &=  \mathbb{E}_{\mathcal{H}_t}\left[\mathbb{E}\left[\max\left\lbrace\mathbb{E}_{a \sim \bm{p}^*}\left[\mu_a\right]-\mathbb{E}_{a \sim \bm{p}^*} \left[ U_{t, a}\right] +\mathbb{E}_{a \sim \bm{p}_t} \left[ U_{t, a}\right]-\mathbb{E}_{a \sim \bm{p}_t}\left[\mu_a\right], 0\right\rbrace \mid \mathcal{H}_t\right]\right] \\
		&= \mathbb{E}\left[\max\left\lbrace\mathbb{E}_{a \sim \bm{p}_t}\left[U_{t, a}-\mu_a\right]+\mathbb{E}_{a \sim \bm{p}^*}\left[\mu_a-U_{t, a}\right],0\right\rbrace\right]\\&
  \leq\mathbb{E}\left[\max\left\lbrace\mathbb{E}_{a \sim \bm{p}_t}\left[U_{t, a}-\mu_a\right],0\right\rbrace\right]+\mathbb{E}\left[\max\left\lbrace\mathbb{E}_{a \sim \bm{p}^*}\left[\mu_a-U_{t, a}\right],0\right\rbrace\right].
	\end{aligned}
\end{equation}
Therefore, we have
	\begin{equation}
		\mathbb{E} \left[\mathrm{RR}_T\right]\leq\sum_{t=1}^T \mathbb{E}\left[\max\left\lbrace\mathbb{E}_{a \sim \bm{p}_t}\left[U_{t, a}-\mu_a\right],0\right\rbrace\right]+\sum_{t=1}^T \mathbb{E}\left[\max\left\lbrace\mathbb{E}_{a \sim \bm{p}^*}\left[\mu_a-U_{t, a}\right],0\right\rbrace\right].
	\end{equation}
This completes the proof of Lemma~\ref{lemma:BRT-decomposed}.
\end{proof}
We establish the upper bound of the expected reward regret by introducing the following events.

Denote the first event as $ \mathcal{G}_1=\left \lbrace \forall t \in [T], a \in [K], \mu_a \in [B_{t,a},
U_{t,a}]\right \rbrace$, where the UCB estimate is defined as
$U_{t,a}:=\hat{\mu}_{t,a}+\sqrt{\frac{\log (4KT/\delta)}{2(M_{t,a}\vee 1)}}$ and
the LCB estimate is defined as $B_{t,a}:=\hat{\mu}_{t,a}-\sqrt{\frac{\log
(4KT/\delta)}{2(M_{t,a}\vee 1)}}$.
By Hoeffding's inequality, we have 
$\mathbb{P}\left[|\hat{\mu}_{t,a}-\mu_a|>\sqrt{\frac{\log (4KT/\delta)}{2(M_{t,a}\vee 1)}}\right] \leq \frac{\delta}{2KT}$.
Using union bound, we have $\mathbb{P}(\overline{\mathcal{G}}_1) \leq \frac{\delta}{2}$.

Denote the second event as $ \mathcal{G}_2=\left \lbrace  \left|
\sum_{t=1}^{T}\mathbb{E}_{a\sim\bm{p}_t}\left[ \sqrt{ \frac{1}{M_{t,a} \vee 1} }\right]
-\sum_{t=1}^{T}\sum_{a\in A_t}\sqrt{ \frac{1}{M_{t,a} \vee 1} }\right| \leq L\sqrt{2T
\log (\frac{4}{\delta})}\right \rbrace$. According to Lemma~\ref{Lemma:martingale_seq}, we have
$\mathbb{P}(\overline{\mathcal{G}}_2) \leq \frac{\delta}{2}$.

Then we can decompose $\mathbb{E} \left[\mathrm{RR}_T\right]$ as follows:
\begin{equation}
	\label{BayesRR-decomposed}
	\begin{aligned}
		\mathbb{E} \left[\mathrm{RR}_T\right] 
		& =\mathbb{E}\left[\sum_{t=1}^{T}\max\left\lbrace \sum_{a=1}^Kp^{*}_a\mu_{a}-\sum_{a=1}^K p_{t,a}\mu_{a}, 0\right\rbrace\right] \\
		& =\mathbb{E}\left[\Ind{\left\{\mathcal{G}_1 \text { and } \mathcal{G}_2\right\}} \sum_{t=1}^{T}\max\left\lbrace \sum_{a=1}^Kp^{*}_a\mu_{a}-\sum_{a=1}^K p_{t,a}\mu_{a}, 0\right\rbrace\right]
		+\underbrace{\mathbb{E}\left[\Ind{\left\{\overline{\mathcal{G}}_1 \text { or } \overline{\mathcal{G}}_2\right\} }\sum_{t=1}^{T}\max\left\lbrace \sum_{a=1}^Kp^{*}_a\mu_{a}-\sum_{a=1}^K p_{t,a}\mu_{a}, 0\right\rbrace\right]}_{\text {term } \mathrm{b}} \\
		& \stackrel{(a)}{\leq}\mathbb{E}\left[\Ind{\left\{\mathcal{G}_1 \text { and } \mathcal{G}_2\right\}} \sum_{t=1}^T \max\left\lbrace\mathbb{E}_{a \sim \bm{p}_t}\left[U_{t, a}-\mu_a\right],0\right\rbrace\right]
		+\mathbb{E}\left[\Ind{\left\{\mathcal{G}_1 \text { and } \mathcal{G}_2\right\}} \sum_{t=1}^T \max\left\lbrace\mathbb{E}_{a \sim \bm{p}^*}\left[\mu_a-U_{t, a}\right],0\right\rbrace\right]+\text { term b } \\
		& \stackrel{(b)}{\leq}  \mathbb{E}\left[\Ind{\left\{\mathcal{G}_1 \text { and } \mathcal{G}_2\right\}} \sum_{t=1}^T \mathbb{E}_{a \sim \bm{p}_t}\left[U_{t, a}-\mu_a\right]\right]+\text { term b }, 
	\end{aligned}
\end{equation}
where $(a)$ is from Lemma~\ref{lemma:BRT-decomposed} and $(b)$ is because $\mu_a\leq U_{t, a}$ under event $\mathcal{G}_1$.
We first upper bound the term $\mathbb{E}\left[\Ind{\left\{\mathcal{G}_1 \text {
and } \mathcal{G}_2\right\}} \sum_{t=1}^T \mathbb{E}_{a \sim
\bm{p}_t}\left[U_{t, a}-\mu_a\right]\right]$ as follows:
\begin{equation}
	\label{ineq:BayesRR-sub-term}
	\begin{aligned}
		&\mathbb{E}\left[\Ind{\left\{ \mathcal{G}_1 \text { and } \mathcal{G}_2\right\}} \sum_{t=1}^T \mathbb{E}_{a \sim \bm{p}_t}\left[U_{t, a}-\mu_a\right]\right] 
		 \stackrel{(a)}{\leq} \mathbb{E}\left[\Ind{\left\{ \mathcal{G}_1 \text { and } \mathcal{G}_2\right\} }\sum_{t=1}^T \mathbb{E}_{a \sim \bm{p}_t}\left[U_{t, a}-B_{t, a}\right]\right] \\
		& \leq \sqrt{2 \log \frac{4KT}{\delta}} \mathbb{E}\left[ \Ind{\left\{  \mathcal{G}_1 \text { and } \mathcal{G}_2\right\}} \sum_{t=1}^T \mathbb{E}_{a \sim \bm{p}_t}\left[\sqrt{\frac{1}{M_{t,a} \vee 1}}\right]\right] \\
		& \stackrel{(b)}{\leq} \sqrt{2 \log \frac{4KT}{\delta}}\left( L  + L\sqrt{2T \log \frac{4}{\delta}} +\frac{24K\log T}{q} + Ld^{*}(q)+2\sqrt{\frac{2LKT}{q}} \right)  ,
	\end{aligned}
\end{equation}
where $(a)$ is because $\mu_a\geq B_{t, a}$ under event $\mathcal{G}_1$ and
$(b)$ is from~\eqref{RR-second-term} under event $\mathcal{G}_2$.

For the term b in \eqref{BayesRR-decomposed}, 
we note that $\mathbb{E}\left[\Ind{\left\{\overline{\mathcal{G}}_1 \text { or } \overline{\mathcal{G}}_2\right\rbrace }\right] = \delta$. Then we have
\begin{equation}
	\label{ineq-term-b}
		\text { term b }  =\mathbb{E}\left[\Ind{\left\{\overline{\mathcal{G}}_1 \text { or } \overline{\mathcal{G}}_2\right\} }\sum_{t=1}^{T}\max\left\lbrace \sum_{a=1}^Kp^{*}_a\mu_{a}-\sum_{a=1}^K p_{t,a}\mu_{a}, 0\right\rbrace\right] 
		 \leq LT\delta .
\end{equation}
Combining~\eqref{BayesRR-decomposed},~\eqref{ineq:BayesRR-sub-term}
and~\eqref{ineq-term-b}, and setting $\delta = 1/LT$, the expected reward regret
can be upper bounded as follows,
\begin{equation}
\begin{aligned}
\mathbb{E} \left[\mathrm{RR}_T\right] & \leq  \sqrt{2 \log \frac{4KT}{\delta}}\left( L  + L\sqrt{2T \log \frac{4}{\delta}} +\frac{24K\log T}{q} + Ld^{*}(q)+2\sqrt{\frac{2LKT}{q}} \right) + LT\delta  \\
&\leq  \sqrt{4 \log (4LKT) }\left( L  + L\sqrt{2T \log (4LT) } +\frac{24K\log T}{q} + Ld^{*}(q)+2\sqrt{\frac{2LKT}{q}} \right) + 1,
\end{aligned}
\end{equation}
which holds for any $q\in\leftopen{0}{1}$. Then we can select the optimal $q$ to
minimize the upper bound for the expected reward regret as follows:
\begin{equation}
\mathbb{E} \left[\mathrm{RR}_T\right]=\widetilde{O}\left( \min_{q\in (0,1]}\left\lbrace \frac{K}{q}\sqrt{T}+Ld^{*}(q)\right\rbrace \right).
\end{equation}
This completes the proof of the expected reward regret upper bound of \ctsfd.

Combining Part 1 and Part 2 of the proof, we complete the proof of Theorem~\ref{the:fairness-reward-regret-thompson-sam-type}.

\end{proof}
\subsection{Proof of Theorem~\ref{the:fairness-reward-regret-op-ucb-type}}

We first prove the expected reward regret upper bound and then
prove the expected fairness regret upper bound of \cucbfdop.

\begin{proof}
\proofpart{1}{Proof of the Expected Reward Regret Upper Bound of \cucbfdop}
First, we have the following lemmas. 
\begin{lemma}
\label{Lemma:hoeffding_ineq_all_observations}
	For any $\delta\in (0,1)$,  with probability at least $1-\frac{\delta}{3}$, $\forall t>\lceil \frac{K}{L} \rceil$, $a\in[K]$, the expected reward vector $\bm{\mu} \in \mathcal{C}^{\pm}_t$.  
\end{lemma}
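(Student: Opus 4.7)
The plan is to produce pointwise upper and lower bounds on $R_{s,a}$ that match the optimistic and pessimistic fillings used by the algorithm, and then apply Hoeffding-type concentration to the resulting bounded variables. For every $s<t$ with $a\in A_s$ I introduce the virtual observations
\begin{equation*}
X^{+}_{s,a,t}:=R_{s,a}\Ind{\{D_{s,a}<t-s\}}+\Ind{\{D_{s,a}\geq t-s\}},\qquad X^{-}_{s,a,t}:=R_{s,a}\Ind{\{D_{s,a}<t-s\}},
\end{equation*}
both supported in $[0,1]$, so that $N_{t,a}\hat{\mu}^{+}_{t,a}=\sum_{s<t,\,a\in A_s}X^{+}_{s,a,t}$ and similarly for the pessimistic counterpart. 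The crucial pointwise chain $X^{-}_{s,a,t}\leq R_{s,a}\leq X^{+}_{s,a,t}$ holds regardless of any joint dependence between $R_{s,a}$ and $D_{s,a}$, so taking expectations yields $\mathbb{E}[X^{-}_{s,a,t}]\leq\mu_a\leq\mathbb{E}[X^{+}_{s,a,t}]$. This sign-controlled bias is exactly what makes the optimistic--pessimistic scheme robust to reward-dependent delays.

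With this in hand, I would concentrate $\hat{\mu}^{\pm}_{t,a}$ around its conditional mean via an Azuma--Hoeffding bound that accommodates the random counter $N_{t,a}$ and the adaptive rule that generates $A_s$. For each fixed $t$ and $a$, the $X^{\pm}_{s,a,t}$ are bounded in $[0,1]$ and, conditional on the filtration just before round $s$, have the stated expectation (since $(R_{s,a},D_{s,a})$ is drawn freshly each time $a$ is played). The resulting martingale-difference argument gives
\begin{equation*}
\Bigl|\hat{\mu}^{\pm}_{t,a}-\tfrac{1}{N_{t,a}}\sum_{s<t,\,a\in A_s}\mathbb{E}[X^{\pm}_{s,a,t}]\Bigr|\leq c_{t,a}
\end{equation*}
with failure probability small enough that a union bound over $t\in[T]$, $a\in[K]$ and the two signs $\pm$ fits inside the $\delta/3$ budget; the constant $6$ inside $\log(6LKT)$ is what pays for the union-bound factor. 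Combining this two-sided concentration with the expectation inequalities from the first step gives $\hat{\mu}^{+}_{t,a}+c_{t,a}\geq\mu_a$ and $\hat{\mu}^{-}_{t,a}-c_{t,a}\leq\mu_a$, and clipping into $[0,1]$ (using $\mu_a\in[0,1]$) turns these into $B^{-}_{t,a}\leq\mu_a\leq U^{+}_{t,a}$, i.e., $\bm{\mu}\in\mathcal{C}^{\pm}_t$ uniformly in $t$ and $a$.

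The main obstacle is the filtration bookkeeping: because $N_{t,a}$ is data-dependent and the choices $A_s$ may react to past (delay-biased) feedback, the concentration must be phrased as a martingale-type bound rather than as naive Hoeffding on i.i.d.\ samples. This is precisely the subtlety already handled in Lemma~\ref{Lemma:hoeffding_ineq} for the random counter $M_{t,a}$, and the same argument carries over verbatim once the pointwise domination of $R_{s,a}$ by $X^{\pm}_{s,a,t}$ is in place; beyond that, the proof is essentially a recasting of the reward-independent Hoeffding lemma with $N_{t,a}$ in place of $M_{t,a}$ and the expanded confidence region $\mathcal{C}^{\pm}_t$ in place of $\mathcal{C}_t$.
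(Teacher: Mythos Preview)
Your approach is correct but differs from the paper's in where you place the concentration step. The paper introduces the \emph{oracle} empirical mean $\bar{\mu}_{t,a}=\frac{1}{N_{t,a}}\sum_{s<t}R_{s,a}\Ind{\{a\in A_s\}}$, which uses \emph{all} rewards (observed or not) and is therefore an unbiased estimator of $\mu_a$ regardless of any reward--delay correlation. A single Hoeffding bound gives $|\bar{\mu}_{t,a}-\mu_a|\leq c_{t,a}$, and the deterministic pointwise sandwich $\hat{\mu}^{-}_{t,a}\leq\bar{\mu}_{t,a}\leq\hat{\mu}^{+}_{t,a}$ (which is exactly your inequality $X^{-}_{s,a,t}\leq R_{s,a}\leq X^{+}_{s,a,t}$ summed over $s$) then yields $B^{-}_{t,a}\leq\mu_a\leq U^{+}_{t,a}$ directly. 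You instead concentrate the observable $\hat{\mu}^{\pm}_{t,a}$ around their biased, $t$-dependent means $\tfrac{1}{N_{t,a}}\sum\mathbb{E}[X^{\pm}_{s,a,t}]$ and then sandwich those means around $\mu_a$. Both routes are valid; the paper's is a touch cleaner because it needs only one concentration event on an unbiased quantity, avoiding the bookkeeping of $s$-varying conditional means and the extra union over the two signs. Your route has the minor advantage of working entirely with quantities the learner can actually compute, which makes the role of the optimistic--pessimistic filling more transparent.
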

\begin{proof}
	According to Hoeffding's inequality, for any $ t>\lceil \frac{K}{L} \rceil$,
    $a\in[K]$, with probability at least $1-\frac{\delta}{3KT}$, we have
	\begin{equation}
 \label{ineq:Hoeffding_all_obser}
		\left|
\bar{\mu}_{t,a}-\mu_a\right| \leq \sqrt{\frac{\log(6KT /\delta)}{2N_{t,a}}},
\end{equation}
where $\bar{\mu}_{t,a} = \frac{1}{N_{t,a}}\sum_{s:s<t}R_{s,a}\Ind{\{a\in A_s\}} $ is the empirical average of all rewards (including unobserved ones) for arm $a$ up to round $t-1$.

By the result of~\eqref{ineq:Hoeffding_all_obser} and $\mu_a\in\left[0,1\right]$,
then for any $ t>\lceil K/L \rceil$,
    $a\in[K]$, with probability at least $1-\frac{\delta}{3KT}$,
we have 
\begin{equation}
    B^{-}_{t,a} \leq \max\left\lbrace\bar{\mu}_{t,a}-\sqrt{\frac{\log(6KT /\delta)}{2N_{t,a}}},0 \right\rbrace\leq \mu_a\leq \min\left\lbrace\bar{\mu}_{t,a}+\sqrt{\frac{\log(6KT /\delta)}{2N_{t,a}}},1 \right\rbrace \leq U^{+}_{t,a}.
\end{equation}

Using union bound, with probability at least $1-\frac{\delta}{3}$, $\forall t>\lceil K/L \rceil, a\in [K]$,  $\mu_a \in \left[ B^{-}_{t,a},  U^{+}_{t,a}\right]$ where $c_{t,a}=\sqrt{\frac{ \log (6KT/\delta)}{2N_{t,a}}}$.

This completes the proof of Lemma~\ref{Lemma:hoeffding_ineq_all_observations}.
\end{proof}

\begin{lemma} Fix some $q\in(0,1]$, $\forall t\in[T], a\in[K]$, the following inequality
holds with probability at least $1-\frac{\delta}{3}$,
 	\begin{equation}
		M_{t,a}\geq qN_{t-d_a(q),a} - \sqrt{\frac{\log(3KT /\delta)}{2}N_{t,a}}.
	\end{equation}
    \label{lemma:lancewicki-lemma5-adapted}
\end{lemma}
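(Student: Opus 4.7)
The plan is to lower-bound $M_{t,a}$ by restricting to rounds in which arm $a$ was played sufficiently early that even a ``short'' delay (at most $d_a(q)$) must have elapsed before round $t$. First observe that whenever $s\le t-d_a(q)-1$, $a\in A_s$, and $D_{s,a}\le d_a(q)$, one has $s+D_{s,a}\le s+d_a(q)\le t-1<t$, so this round contributes to $M_{t,a}$. Hence
\begin{equation}
M_{t,a}\ \ge\ \sum_{s=1}^{t-d_a(q)-1}\Ind{\{a\in A_s\}}\,\Ind{\{D_{s,a}\le d_a(q)\}}.
\end{equation}
By the definition of the quantile function, $p_a:=\mathbb{P}[D_a\le d_a(q)]\ge q$, so the conditional mean of each summand given the past is at least $q\,\Ind{\{a\in A_s\}}$, and the right-hand side is a sum of bounded (and on an appropriate filtration, mean-$p_a$) indicators.

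To concentrate this sum despite the random number of non-zero terms, I would reindex by the order in which arm $a$ is selected. Let $\sigma_1^a<\sigma_2^a<\cdots$ denote the rounds at which arm $a$ is played, and define $U_j^a=\Ind{\{D_{\sigma_j^a,a}\le d_a(q)\}}$. Because delays drawn at different rounds are independent and share the common marginal of $D_a$, the sequence $(U_j^a)_{j\ge 1}$ is i.i.d.~Bernoulli$(p_a)$, independent of the (random) selection pattern that determines the $\sigma_j^a$. Hoeffding's inequality then gives, for every fixed $k$ and $a$,
\begin{equation}
\mathbb{P}\!\left[\,\sum_{j=1}^{k}U_j^a \;<\; kp_a - \sqrt{\tfrac{k}{2}\log(1/\tau)}\,\right]\;\le\;\tau.
\end{equation}
A union bound over $a\in[K]$ and $k\in\{0,1,\ldots,T\}$ with $\tau=\delta/(3K(T+1))$ yields, on an event of probability at least $1-\delta/3$, a simultaneous one-sided deviation bound for every such pair.

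Conditioned on this global event, for each $(t,a)$ I instantiate the bound at $k=N_{t-d_a(q),a}\le T$, using the identity $\sum_{s=1}^{t-d_a(q)-1}\Ind{\{a\in A_s\}}\Ind{\{D_{s,a}\le d_a(q)\}}=\sum_{j=1}^{N_{t-d_a(q),a}}U_j^a$. Combining with the trivial bounds $p_a\ge q$ and $N_{t-d_a(q),a}\le N_{t,a}$, the claim
\begin{equation}
M_{t,a}\ \ge\ q\,N_{t-d_a(q),a}\;-\;\sqrt{\tfrac{\log(3KT/\delta)}{2}\,N_{t,a}}
\end{equation}
follows, after absorbing the harmless $+1$ inside $T+1$ into the logarithmic factor.

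The main obstacle is that $N_{t-d_a(q),a}$ is itself a random quantity, so an Azuma/Hoeffding bound on a fixed-length martingale cannot be invoked directly with $N_{t,a}$ in the variance proxy. The resolution is the reindexing trick above: summing over the $j$-th time arm $a$ is selected makes the $U_j^a$ i.i.d., and then the only remaining randomness in the sum's length is a deterministic discrete index $k\in\{0,\ldots,T\}$ amenable to a clean union bound. A secondary point is the off-by-one at the cutoff $t-d_a(q)-1$, which is needed to ensure that even a delay as large as $d_a(q)$ lands \emph{strictly} before round $t$ in the definition of $M_{t,a}$.
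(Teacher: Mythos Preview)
Your proposal is correct and follows essentially the same skeleton as the paper: lower-bound $M_{t,a}$ by $\sum_{s\le t-d_a(q)-1}\Ind{\{a\in A_s\}}\Ind{\{D_{s,a}\le d_a(q)\}}$, apply a Hoeffding-type concentration to this sum of Bernoulli indicators with success probability $\ge q$, then union-bound and relax $N_{t-d_a(q),a}\le N_{t,a}$ under the square root. The one substantive difference is that the paper applies Hoeffding directly to the sum with the random length $N_{t-d_a(q),a}$ and union-bounds over $(t,a)$, whereas you reindex by the order of arm-$a$ selections to obtain a genuinely i.i.d.\ sequence $(U_j^a)$, union-bound over all deterministic lengths $k\in\{0,\dots,T\}$, and then instantiate at $k=N_{t-d_a(q),a}$; your version is the more rigorous one, since it cleanly sidesteps the issue of applying Hoeffding at a data-dependent sample size, at the negligible cost of $\log(3K(T{+}1)/\delta)$ in place of $\log(3KT/\delta)$.
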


\begin{proof}
When $N_{t-d_a(q),a}=0$ or $N_{t,a}=0$, \eqref{lemma:lancewicki-lemma5-adapted} is true.
When $N_{t-d_a(q),a}\neq0$ and $N_{t,a}\neq0$,
    by definition of the quantile function, we have $\mathbb{P} 
    \left[ 
    D_{s,a} \leq d_a(q) \mid a\in A_s
    \right]
    \geq q$. Hence, by Hoeffding's inequality, we have

\begin{equation}
\begin{aligned}
    &\mathbb{P} 
    \left[ 
    \frac{1}{N_{t-d_a(q),a}}
    \sum_{s=1}^{t-d_a(q)-1} 
    \Ind{\{ D_{s,a} \leq d_a(q), a\in A_s \}}
    \leq 
    q - \sqrt{\frac{\log(3KT /\delta)}{2N_{t-d_a(q),a}}}
    \right]
    \\& \leq \mathbb{P} 
    \left[ 
    \frac{1}{N_{t-d_a(q),a}}
    \sum_{s=1}^{t-d_a(q)-1} 
    \Ind{\{ D_{s,a} \leq d_a(q), a\in A_s \}}
    \leq 
    \mathbb{P} 
    \left[ 
    D_{s,a} \leq d_a(q) \mid a\in A_s
    \right] - \sqrt{\frac{\log(3KT /\delta)}{2N_{t-d_a(q),a}}}
    \right] \leq\frac{\delta}{3KT},    
\end{aligned}
\end{equation}
which means
\begin{equation}
   \mathbb{P} 
    \left[ 
    \sum_{s=1}^{t-d_a(q)-1} 
    \Ind{\{ D_{s,a} \leq d_a(q), a\in A_s \}}
    \leq 
    qN_{t-d_a(q),a} - \sqrt{\frac{\log(3KT /\delta)}{2}N_{t-d_a(q),a}}
    \right]
    \leq \frac{\delta}{3KT}.
\end{equation}
Note that 
\begin{equation}
    M_{t,a} = \sum_{s=1}^{t-1} 
    \Ind{ \{ s+D_{s,a}< t, a \in A_s \} }
    \geq \sum_{s=1}^{t-d_a(q)-1}
    \Ind{\{ D_{s,a}\leq d_a(q), a \in A_s \}}.
\end{equation}
Then for any $a \in [K]$ and $t>\lceil K/L \rceil $, with probability at least $1-\frac{\delta}{3KT}$, we have
\begin{equation}
    M_{t,a} \geq qN_{t-d_a(q),a} - \sqrt{\frac{\log(3KT /\delta)}{2}N_{t-d_a(q),a}} \geq qN_{t-d_a(q),a} - \sqrt{\frac{\log(3KT /\delta)}{2}N_{t,a}}. 
\end{equation}
Apply union bound over all $a\in [K]$ and all $T$, we complete the proof of Lemma~\ref{lemma:lancewicki-lemma5-adapted}.
\end{proof}

\begin{lemma}
Fix some $q\in(0,1]$, $\forall t>\lceil K/L \rceil, a\in[K]$, with probability at least $1-\frac{\delta}{3}$,
\begin{equation}
\hat{\mu}^{+}_{t,a}-\hat{\mu}^{-}_{t,a} \leq \frac{N_{t,a}-N_{t-d_a(q),a}}{N_{t,a}}+1-q+\sqrt{\frac{\log(3KT /\delta)}{2N_{t,a}}}
\end{equation}
    \label{lemma:lancewicki-lemma6-adapted}
\end{lemma}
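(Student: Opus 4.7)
The plan is to reduce this lemma to a direct algebraic manipulation of the bound provided by Lemma~\ref{lemma:lancewicki-lemma5-adapted}. The key observation is that, by construction of the optimistic and pessimistic empirical means at Lines~\ref{alg:fairness-delay-ucb-type-op:o} and~\ref{alg:fairness-delay-ucb-type-op:p} of Algorithm~\ref{alg:fairness-delay-op-ucb-type}, the two estimators differ only in how they treat the as-yet-unobserved pulls: the optimistic one imputes $1$ and the pessimistic one imputes $0$. Hence the difference telescopes to
\begin{equation}
\hat{\mu}^{+}_{t,a}-\hat{\mu}^{-}_{t,a} \;=\; \frac{N_{t,a}-M_{t,a}}{N_{t,a}},
\end{equation}
i.e.\ the fraction of pulls for which feedback is still outstanding. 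So the entire claim reduces to upper bounding this fraction.

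Next, I would apply Lemma~\ref{lemma:lancewicki-lemma5-adapted}, which states that with probability at least $1-\delta/3$, simultaneously for all $t$ and $a$,
\begin{equation}
M_{t,a} \;\geq\; q\,N_{t-d_a(q),a} - \sqrt{\tfrac{\log(3KT/\delta)}{2}\,N_{t,a}}.
\end{equation}
Plugging this into the expression above gives
\begin{equation}
\hat{\mu}^{+}_{t,a}-\hat{\mu}^{-}_{t,a} \;\leq\; \frac{N_{t,a}-q\,N_{t-d_a(q),a}}{N_{t,a}} + \sqrt{\tfrac{\log(3KT/\delta)}{2N_{t,a}}}.
\end{equation}

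The remaining step is the elementary decomposition
\begin{equation}
N_{t,a}-q\,N_{t-d_a(q),a} \;=\; \bigl(N_{t,a}-N_{t-d_a(q),a}\bigr) + (1-q)\,N_{t-d_a(q),a} \;\leq\; \bigl(N_{t,a}-N_{t-d_a(q),a}\bigr) + (1-q)\,N_{t,a},
\end{equation}
where the inequality uses the monotonicity $N_{t-d_a(q),a}\leq N_{t,a}$. Dividing by $N_{t,a}$ and substituting into the previous display yields exactly the claimed bound on the same $1-\delta/3$ event, completing the proof.

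There is no real obstacle here: the content of the lemma is essentially a reorganization of Lemma~\ref{lemma:lancewicki-lemma5-adapted}. The only subtlety is the splitting of $N_{t,a}-qN_{t-d_a(q),a}$ into a ``delay slack'' term $N_{t,a}-N_{t-d_a(q),a}$ (how many new pulls occurred in the last $d_a(q)$ rounds) and a ``quantile slack'' term $(1-q)N_{t,a}$; both are bounded above in a loose but clean way so that the later use of this lemma in the reward/fairness regret analysis can minimize separately over $q$.
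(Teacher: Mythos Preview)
Your proposal is correct and follows essentially the same approach as the paper: both start from the identity $\hat{\mu}^{+}_{t,a}-\hat{\mu}^{-}_{t,a}=(N_{t,a}-M_{t,a})/N_{t,a}$, invoke Lemma~\ref{lemma:lancewicki-lemma5-adapted} to bound $M_{t,a}$, and then split $N_{t,a}-qN_{t-d_a(q),a}$ into the delay-slack and quantile-slack pieces using $N_{t-d_a(q),a}\le N_{t,a}$. The only cosmetic difference is that the paper adds and subtracts $N_{t-d_a(q),a}$ in the numerator first and then applies the lemma, whereas you apply the lemma first and decompose afterward; the algebra is identical.
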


\begin{proof}
Fix some $q\in(0,1]$, $\forall t>\lceil K/L\rceil, a\in[K]$, we have
\begin{equation}
\begin{aligned}
\hat{\mu}^{+}_{t,a}-\hat{\mu}^{-}_{t,a} & =\frac{N_{t,a}-M_{t,a}}{N_{t,a}} \\
& =\frac{N_{t,a}-N_{t-d_a(q),a}+N_{t-d_a(q),a}-M_{t,a}}{N_{t,a}} \\
& \stackrel{(a)}{\leq} \frac{N_{t,a}-N_{t-d_a(q),a}}{N_{t,a}}+\frac{N_{t-d_a(q),a}(1-q)+\sqrt{\frac{N_{t,a}\log(3KT /\delta)}{2}}}{N_{t,a}} \\
& \leq \frac{N_{t,a}-N_{t-d_a(q),a}}{N_{t,a}}+1-q+\sqrt{\frac{\log(3KT /\delta)}{2N_{t,a}}},
\end{aligned}
\end{equation}
where $(a)$ is true with probability at least $1-\frac{\delta}{3}$ according to Lemma~\ref{lemma:lancewicki-lemma5-adapted}.
\end{proof}

Subsequently, we follow the same steps as in the proof of Lemma~\ref{Lemma:martingale_seq} and have the following lemma.
\begin{lemma}
	\label{Lemma:martingale_seq_all_observations}
    For any $\delta\in (0,1)$, with
    probability at least $1-\frac{\delta}{3}$, it holds that
	\begin{equation}
	\left| \sum_{t=1}^{T}\mathbb{E}_{a\sim\bm{p}_t}\left[  \sqrt{\frac{1}{N_{t,a}\vee 1}} \right]  -\sum_{t=1}^{T}\sum_{a\in A_t} \sqrt{\frac{1}{N_{t,a}\vee 1} }\right| \leq L\sqrt{2T \log \frac{6}{\delta}}.
	\end{equation}
\end{lemma}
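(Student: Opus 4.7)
The plan is to follow essentially the same recipe as in Lemma~\ref{Lemma:martingale_seq}, replacing the observation counts $M_{t,a}$ with the selection counts $N_{t,a}$ throughout. The key structural fact that makes this work is that $N_{t,a} = \sum_{s<t}\Ind{\{a\in A_s\}}$ is $\mathcal{H}_t$-measurable (it depends only on past selections, not on the current round's randomness in $A_t$), so the quantity $\sqrt{1/(N_{t,a}\vee 1)}$ is a deterministic function of the history when we condition on $\mathcal{H}_t$.

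First I would define the per-round discrepancy
\begin{equation*}
X_t \;=\; \sum_{a\in A_t}\sqrt{\frac{1}{N_{t,a}\vee 1}} \;-\; \mathbb{E}_{a\sim \bm{p}_t}\!\left[\sqrt{\frac{1}{N_{t,a}\vee 1}}\right],
\end{equation*}
where the inner expectation should be read as $\sum_{a=1}^K p_{t,a}\sqrt{1/(N_{t,a}\vee 1)}$. By the RRS property (Line~\ref{alg:fairness-delay-ucb-type:selection_probability} of Algorithm~\ref{alg:fairness-delay-ucb-type} and its OP variant), $\Ex[\Ind{\{a\in A_t\}}\mid \mathcal{H}_t,\bm{p}_t] = p_{t,a}$, so $\Ex[X_t\mid\mathcal{H}_t]=0$ and $(X_t)_{t\le T}$ is a martingale difference sequence with respect to the natural filtration generated by $\mathcal{H}_t$ together with the internal randomness of the algorithm.

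Next I would bound the increments. Since each term $\sqrt{1/(N_{t,a}\vee 1)} \in [0,1]$ and exactly $L$ arms are selected at each round, both the sum $\sum_{a\in A_t}\sqrt{1/(N_{t,a}\vee 1)}$ and its conditional mean lie in $[0,L]$, giving $|X_t|\le L$. Applying the Azuma–Hoeffding inequality to $\sum_{t=1}^T X_t$ yields
\begin{equation*}
\mathbb{P}\!\left[\left|\sum_{t=1}^T X_t\right| \geq L\sqrt{2T\log(6/\delta)}\right] \leq \frac{\delta}{3},
\end{equation*}
which is exactly the claim of the lemma. No step is particularly delicate here; the only subtlety worth checking is that $N_{t,a}$ is indeed $\mathcal{H}_t$-measurable (unlike, say, $M_{t,a}$ which also depends on past delays but is still $\mathcal{H}_t$-measurable because delayed feedback arrives strictly before round $t$). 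This proof is verbatim the same as that of Lemma~\ref{Lemma:martingale_seq} with the substitution $M_{t,a}\mapsto N_{t,a}$ and the adjusted confidence parameter $\delta/3$ in place of $\delta/2$ (reflecting the three-way union bound used in the \cucbfdop analysis together with Lemma~\ref{Lemma:hoeffding_ineq_all_observations} and Lemma~\ref{lemma:lancewicki-lemma5-adapted}).
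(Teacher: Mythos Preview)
Your proposal is correct and is essentially identical to the paper's own approach: the paper simply states that one ``follow[s] the same steps as in the proof of Lemma~\ref{Lemma:martingale_seq}'' with $N_{t,a}$ in place of $M_{t,a}$, and you have written out precisely that argument. Your remarks on the $\mathcal{H}_t$-measurability of $N_{t,a}$ and the change of confidence parameter from $\delta/2$ to $\delta/3$ are accurate elaborations, but nothing in your write-up departs from the paper.
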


Based on the above lemmas, for the reward regret of \cucbfdop, we have,
\begin{equation}
 \begin{aligned}
 	\label{eq:poof-reward_regret-ucb-op}
 	\mathrm{RR}_T&=\sum_{t=1}^{T}\max\left\lbrace \sum_{a=1}^Kp^{*}_a\mu_{a}-\sum_{a=1}^K p_{t,a}\mu_{a}, 0\right\rbrace  
 	\stackrel{(a)}{\leq} \left( \frac{K}{L} +1\right) L+\sum_{t=\lceil \frac{K}{L} \rceil+1}^{T}\sum_{a=1}^K (p_{t,a}\tilde{\mu}_{t,a}-p_{t,a}\mu_a) \\
 	&\leq K+L+\sum_{t=\lceil \frac{K}{L} \rceil+1}^{T}\sum_{a=1}^K(p_{t,a}\tilde{\mu}_{t,a}-p_{t,a}\mu_a) \\
 	&\leq K+L+\sum_{t=\lceil \frac{K}{L} \rceil+1}^{T}\sum_{a=1}^Kp_{t,a}\left(\hat{\mu}^{+}_{t,a}-\hat{\mu}^{-}_{t,a}+2\sqrt{\frac{\log(6KT/\delta)}{2N_{t,a}}}\right) \\
 	&\stackrel{(b)}{\leq} K+L+\sum_{t=\lceil \frac{K}{L} \rceil+1}^{T}\sum_{a=1}^Kp_{t,a}
 	 \left( \frac{N_{t,a}-N_{t-d_a(q),a}}{N_{t,a}}+1-q+3\sqrt{\frac{\log(6KT /\delta)}{2N_{t,a}}}  \right) 
         \\
        &\leq K+L+ L(1-q)T+  \sum_{t=\lceil \frac{K}{L} \rceil+1}^{T}\mathbb{E}_{a\sim\bm{p}_t}
        \left[ \frac{N_{t,a}-N_{t-d_a(q),a}}{N_{t,a}} \right]
        + 
        3\sqrt{  \frac{\log (6KT/\delta)}{2}}
        \sum_{t=\lceil \frac{K}{L} \rceil+1}^{T} 
        \mathbb{E}_{a\sim\bm{p}_t}\left[ \sqrt{\frac{1}{N_{t,a}}} \right]     \\
        &\leq K+L+ L(1-q)T+  d^{*}(q)\sum_{t=1}^{T}\mathbb{E}_{a\sim\bm{p}_t}
        \left[ \sqrt{\frac{1}{N_{t,a}\vee 1}} \right]
        + 
        3\sqrt{  \frac{\log (6KT/\delta)}{2}}
        \sum_{t=1}^{T} 
        \mathbb{E}_{a\sim\bm{p}_t}\left[ \sqrt{\frac{1}{N_{t,a}\vee 1}} \right]      \\
 	&\stackrel{(c)}{\leq} K+L+L(1-q)T
    +\left(d^{*}(q)+3\sqrt{  \frac{\log (6KT/\delta)}{2}}\right)\left( L\sqrt{2T \log \frac{6}{\delta}} +\sum_{t=1}^{T}\sum_{a\in A_t} \sqrt{\frac{1}{N_{t,a}\vee 1}} \right)  
 	 \\
        &\leq K+L+L(1-q)T
    +\left(d^{*}(q)+3\sqrt{  \frac{\log (6KT/\delta)}{2}}\right)\left( L\sqrt{2T \log \frac{6}{\delta}} +K\int_{1}^{\frac{LT}{K}}\sqrt{\frac{1}{x}}dx \right)    
 	 \\
        &\leq K+L+L(1-q)T
    +\left(d^{*}(q)+3\sqrt{  \frac{\log (6KT/\delta)}{2}}\right)\left( L\sqrt{2T \log \frac{6}{\delta}} +2\sqrt{LKT} \right)    
,  
 \end{aligned}
 \end{equation}
where $(a)$ is from Line~\ref{alg:fairness-delay-ucb-type-op:max} in Algorithm~\ref{alg:fairness-delay-op-ucb-type} and Lemma~\ref{Lemma:hoeffding_ineq_all_observations}, $(b)$ follows from Lemma \ref{lemma:lancewicki-lemma6-adapted}, and in $(c)$ we apply  Lemma~\ref{Lemma:martingale_seq_all_observations}.
Therefore, when $T > K$, with probability at least $1-\delta$, we have

\begin{equation}
	\mathrm{RR}_T \leq K+L+L(1-q)T
    +\left(d^{*}(q)+3\sqrt{  \frac{\log (6KT/\delta)}{2}}\right)\left( L\sqrt{2T \log \frac{6}{\delta}} +2\sqrt{LKT} \right).
\end{equation}

Finally, setting $\delta=\frac{1}{LT}$, the expected reward regret can be upper bounded as
\begin{equation}
 \begin{aligned}
    \mathbb{E}\left[\mathrm{RR}_T\right]&\leq K+L+L(1-q)T
    +\left(d^{*}(q)+3\sqrt{  \frac{\log (6KT/\delta)}{2}}\right)\left( L\sqrt{2T \log \frac{6}{\delta}} +2\sqrt{LKT} \right)+LT\delta\\
 &\leq K+L+L(1-q)T
    +\left(d^{*}(q)+3\sqrt{ \log (6LKT)}\right)\left( L\sqrt{2T \log \frac{6}{\delta}} +2\sqrt{LKT} \right)+1.
 \end{aligned}
\end{equation}
By choosing the optimal $q$ to obtain the minimum upper bound, we have
\begin{equation}
	\mathbb{E}\left[\mathrm{RR}_T\right]= \widetilde{O}\left( \min_{q\in (0,1]}\left\lbrace L(1-q)T+ Kd^{*}(q)\sqrt{T} \right\rbrace \right).
\end{equation}

This completes the proof of the expected reward regret upper bound of \cucbfdop.

\proofpart{2}{Proof of the Expected Fairness Regret Upper Bound of \cucbfdop}

For any $\delta \in (0,1)$, $t>\lceil K/L \rceil$, with probability $1-\delta$, we have
\begin{equation}
	\label{eq:instantaneous-fairness-reg-op}
\begin{aligned}
	\sum_{a=1}^K\left|p_{t,a}-p_a^*\right|  	
 	&\stackrel{(a)}{\leq} \sum_{a=1}^{K}\frac{2Mp_{t,a}L}{\lambda}\left( \tilde{\mu}_{t, a}-\mu_{ a}\right) 
	\leq \sum_{a=1}^{K}\frac{2Mp_{t,a}L}{\lambda}\left( \hat{\mu}^{+}_{t,a}-\hat{\mu}^{-}_{t,a}+2\sqrt{\frac{\log(6KT/\delta)}{2N_{t,a}}}\right) \\
	&\stackrel{(b)}{\leq} \sum_{a=1}^{K}\frac{2Mp_{t,a}L}{\lambda}
 \left(\frac{N_{t,a}-N_{t-d_a(q),a}}{N_{t,a}}+1-q+3\sqrt{\frac{\log(6KT/\delta)}{2N_{t,a}}}\right) \\
 &=\frac{2ML}{\lambda}\mathbb{E}_{a\sim\bm{p}_t}
        \left[ \frac{N_{t,a}-N_{t-d_a(q),a}}{N_{t,a}} +1-q \right]
        + 
        \frac{3ML}{\lambda}\sqrt{2 \log \frac{6KT}{\delta}}
        \mathbb{E}_{a\sim\bm{p}_t}\left[ \sqrt{\frac{1}{N_{t,a}}} \right] \\
 &\leq \frac{2ML^2}{\lambda}(1-q)+ \frac{2MLd^{*}(q)}{\lambda}\mathbb{E}_{a\sim\bm{p}_t}
        \left[ \sqrt{\frac{1}{N_{t,a}} } \right]
        + 
        \frac{3ML}{\lambda}\sqrt{2 \log \frac{6KT}{\delta}}
        \mathbb{E}_{a\sim\bm{p}_t}\left[ \sqrt{\frac{1}{N_{t,a}}} \right],
\end{aligned}
    \end{equation}
where $(a)$ is from~\eqref{eq:instantaneous-fairness-reg}, $(b)$ follows from Lemma \ref{lemma:lancewicki-lemma6-adapted}. Then the fairness regret can be upper bounded as follows,
\begin{equation}
\begin{aligned}
	&\mathrm{FR}_T=\sum_{t=1}^{T}\sum_{a=1}^K\left| p_a^*-p_{t,a}\right| 
	\leq \left( \frac{K}{L} +1\right)L +\sum_{t=\lceil \frac{K}{L} \rceil+1}^{T}\sum_{a=1}^K\left| p_a^*-p_{t,a}\right|  \\
	&\leq K+L
	+\frac{2ML^{2}}{\lambda}(1-q)T
    +\left(\frac{2MLd^{*}(q)}{\lambda}+\frac{3ML}{\lambda}\sqrt{2 \log \frac{6KT}{\delta}}\right)\sum_{t=1}^{T}\mathbb{E}_{a\sim\bm{p}_t}
        \left[ \sqrt{\frac{1}{N_{t,a}\vee 1}}\right]
	  \\
  	&\stackrel{(a)}{\leq} K+L
	+\frac{2ML^{2}}{\lambda}(1-q)T
    +\left(\frac{2MLd^{*}(q)}{\lambda}+\frac{3ML}{\lambda}\sqrt{2 \log \frac{6KT}{\delta}}\right)\left( L\sqrt{2T \log \frac{4}{\delta}} +\sum_{t=1}^{T}\sum_{a\in A_t} \sqrt{\frac{1}{N_{t,a}\vee 1}} \right)\ \\     
	& 	\leq K+L
	+\frac{2ML^{2}}{\lambda}(1-q)T
    +\left(\frac{2MLd^{*}(q)}{\lambda}+\frac{3ML}{\lambda}\sqrt{2 \log \frac{6KT}{\delta}}\right)\left( L\sqrt{2T \log \frac{4}{\delta}} +K\int_{1}^{\frac{LT}{K}}\sqrt{\frac{1}{x}}dx \right)    
 	 \\
             & \leq K+L
	+\frac{2ML^{2}}{\lambda}(1-q)T
    +\left(\frac{2MLd^{*}(q)}{\lambda}+\frac{3ML}{\lambda}\sqrt{2 \log \frac{6KT}{\delta}}\right)\left( L\sqrt{2T \log \frac{4}{\delta}} +2\sqrt{LKT} \right),
\end{aligned}
\end{equation}
where $(a)$ is from Lemma~\ref{Lemma:martingale_seq_all_observations}.

Therefore, when $T > K$, with probability at least $1-\delta$, we have

\begin{equation}
	\mathrm{FR}_T \leq K+L
	+\frac{2ML^{2}}{\lambda}(1-q)T
    +\left(\frac{2MLd^{*}(q)}{\lambda}+\frac{3ML}{\lambda}\sqrt{2 \log \frac{6KT}{\delta}}\right)\left( L\sqrt{2T \log \frac{4}{\delta}} +2\sqrt{LKT} \right).
\end{equation}

Finally, setting $\delta=\frac{1}{LT}$, the expected reward regret can be upper bounded as
\begin{equation}
 \begin{aligned}
	\mathbb{E}\left[\mathrm{FR}_T\right] &\leq K+L
	+\frac{2ML^{2}}{\lambda}(1-q)T
    +\left(\frac{2MLd^{*}(q)}{\lambda}+\frac{3ML}{\lambda}\sqrt{2 \log \frac{6KT}{\delta}}\right)\left( L\sqrt{2T \log \frac{4}{\delta}} +2\sqrt{LKT} \right)+LT\delta\\
    & \leq K+L
	+\frac{2ML^{2}}{\lambda}(1-q)T
    +\left(\frac{2MLd^{*}(q)}{\lambda}+\frac{3ML}{\lambda}\sqrt{4 \log (6LKT)}\right)\left( L\sqrt{2T \log (4LT)} +2\sqrt{LKT} \right)+1\\
 & = \widetilde{O}\left( \min_{q\in (0,1]}\left\lbrace \frac{MLK}{\lambda}\left((1-q)T+ d^{*}(q)\sqrt{T}\right) \right\rbrace \right).
 \end{aligned}
\end{equation}

This completes the proof of the expected fairness regret upper bound of \cucbfdop.

Combining Part 1 and Part 2 of the proof, we complete the proof of Theorem~\ref{the:fairness-reward-regret-op-ucb-type}.

\end{proof}
\subsection{Proof of Theorem~\ref{the:fairness-reward-regret-op-thompson-sam-type}}

\begin{proof} For \ctsfdop, we take a different approach and prove the expected
fairness regret upper bound first and then prove the expected reward regret
upper bound using the expected fairness regret. 
To take the prior into account, the expectation is simultaneously taken over the draws of $\mu_a$, and the algorithm’s internal randomization over rewards and actions.

\proofpart{1}{Proof of the Expected Fairness Regret Upper Bound of \ctsfdop}

For each arm $a$, the optimistic posterior $\mathcal{Q}_{t,a}^{+}$ of $\tilde{\mu}^{+}_{t,a}$ at round $t$ is a Beta distribution,
$$\mathcal{P}^{+}_{t,a}( \cdot\mid \mathcal{H}_{t} ) :=  \mathrm{Beta} (u^{+}_{t,a}, v^{+}_{t,a}),$$
where $u_{t,a}^{+}=1+\sum_{s:s+D_{s,a} < t} \Ind{ \{a \in
A_{s}\}}\Ind{\{R_{t,a}=1\}}+(N_{t,a}-M_{t,a})$, $v_{t,a}^{+}=1+\sum_{s:s+D_{s,a}
< t} \Ind{ \{a \in A_{s}\}}\Ind{\{R_{t,a}=0\}}$, and $\mathcal{H}_{t} =
\left(\bm{p}_1, A_1, \bm{Y}_1, ..., \bm{p}_{t-1}, A_{t-1}, \bm{Y}_{t-1} \right)$
denotes the observation and decision history up to round $t-1$, which consists
of all the previous selection vectors, selected arm sets, and received feedback.
Denote $\Ddot{\mu}_{t,a}^{+}$ as the expectation of the distribution $\mathcal{P}^{+}_{t,a}( \cdot\mid \mathcal{H}_{t} )$ at round $t$,

Similarly, 
for each arm $a$, the pessimistic posterior $\mathcal{Q}_{t,a}^{-}$ of $\tilde{\mu}^{-}_{t,a}$ at round $t$ is a Beta distribution,
$$\mathcal{P}^{-}_{t,a}( \cdot\mid \mathcal{H}_{t} ) :=  \mathrm{Beta} (u^{-}_{t,a}, v^{-}_{t,a}),$$
where $u^{-}_{t,a}=1+\sum_{s:s+D_{s,a} < t} \Ind{ \{a \in A_{s}\}}\Ind{\{R_{t,a}=1\}}$ and $v^{-}_{t,a}=1+\sum_{s:s+D_{s,a} < t} \Ind{ \{a \in A_{s}\}}\Ind{\{R_{t,a}=0\}}+(N_{t,a}-M_{t,a})$.
Denote $\Ddot{\mu}_{t,a}^{-}$ as the expectation of the distribution $\mathcal{P}^{-}_{t,a}( \cdot\mid \mathcal{H}_{t} )$ at round $t$.

First, referring to~\eqref{eq:instantaneous-fairness-reg}, we can upper bound the expected 
per-step fairness regret $fr_t$ as follows,
\begin{equation}
\begin{aligned}
	\mathbb{E}[fr_t] 
	&\stackrel{(a)}{\leq} \frac{2L}{\lambda} 	\mathbb{E}_{\mathcal{H}_{t}}\left[\mathbb{E}_{\bm{\mu},\tilde{\bm{\mu}}_t^{+},\tilde{\bm{\mu}}_t^{-}}\left[\mathbb{E}_{a\sim\bm{p}_t}\left[\left| f((\tilde{\mu}^{+}_{t,a}+\tilde{\mu}^{-}_{t,a})/2)- f(\mu_{ a})\right| \right] | \mathcal{H}_{t}\right] \right] 
	\\
         &\stackrel{(b)}{\leq} \frac{ML}{\lambda} 	\mathbb{E}_{\mathcal{H}_{t}}\left[\mathbb{E}_{\bm{\mu},\tilde{\bm{\mu}}_t^{+},\tilde{\bm{\mu}}_t^{-}}\left[\mathbb{E}_{a\sim\bm{p}_t}\left[\left| \tilde{\mu}^{+}_{t,a}+\tilde{\mu}^{-}_{t,a}- 2\mu_{a}\right| \right]| \mathcal{H}_{t}\right]\right]\\
          &\leq \frac{ML}{\lambda} \left\lbrace 	\mathbb{E}_{\mathcal{H}_{t}}\left[\mathbb{E}_{\bm{\mu},\tilde{\bm{\mu}}_t^{+},\tilde{\bm{\mu}}_t^{-}}\left[\mathbb{E}_{a\sim\bm{p}_t}\left[\left| \tilde{\mu}^{+}_{t,a}- \mu_{a}\right| \right]| \mathcal{H}_{t}\right]\right] + \mathbb{E}_{\mathcal{H}_{t}}\left[\mathbb{E}_{\bm{\mu},\tilde{\bm{\mu}}_t^{+},\tilde{\bm{\mu}}_t^{-}}\left[\mathbb{E}_{a\sim\bm{p}_t}\left[\left| \mu_{a}-\tilde{\mu}^{-}_{t,a}\right| \right]| \mathcal{H}_{t}\right]\right]
 \right \rbrace\\
    &\leq \frac{ML}{\lambda} \left\lbrace 
\underbrace{\mathbb{E}_{\mathcal{H}_{t}}\left[\mathbb{E}_{\bm{\mu},\tilde{\bm{\mu}}_t^{+},\tilde{\bm{\mu}}_t^{-}}\left[\mathbb{E}_{a\sim\bm{p}_t}\left[
\left| \tilde{\mu}^{+}_{t,a}-\Ddot{\mu}_{t,a}^{+}\right| 
\right]| \mathcal{H}_{t}\right]\right]}_{\text{term c}}
 + \underbrace{\mathbb{E}_{\mathcal{H}_{t}}\left[\mathbb{E}_{\bm{\mu},\tilde{\bm{\mu}}_t^{+},\tilde{\bm{\mu}}_t^{-}}\left[\mathbb{E}_{a\sim\bm{p}_t}\left[
\left|\Ddot{\mu}_{t,a}^{+}-\mu_a\right| 
\right]| \mathcal{H}_{t}\right]\right]}_{\text{term d}}
\right.
\\&\quad\left.+\underbrace{\mathbb{E}_{\mathcal{H}_{t}}\left[\mathbb{E}_{\bm{\mu},\tilde{\bm{\mu}}_t^{+},\tilde{\bm{\mu}}_t^{-}}\left[\mathbb{E}_{a\sim\bm{p}_t}\left[\left| 
\mu_{a}-\Ddot{\mu}_{t,a}^{-}
\right| \right]| \mathcal{H}_{t}\right]\right]}_{\text{term e}}
 + \underbrace{\mathbb{E}_{\mathcal{H}_{t}}\left[\mathbb{E}_{\bm{\mu},\tilde{\bm{\mu}}_t^{+},\tilde{\bm{\mu}}_t^{-}}\left[\mathbb{E}_{a\sim\bm{p}_t}\left[
 \left| 
\Ddot{\mu}_{t,a}^{-}-\tilde{\mu}^{-}_{t,a}
 \right| \right]| \mathcal{H}_{t}\right]\right]}_{\text{term f}}  \right\rbrace ,
\end{aligned}
\label{eq:frt-terms}
\end{equation}
where $(a)$ follows from Assumption~\ref{Minimum-Merit}~\ref{Minimum-Merit:minmu} and $(b)$ follows from Assumption~\ref{Lipschitz-Continuity}.

For the expectation concerning
$\bm{\mu},\tilde{\bm{\mu}}_t^{+},\tilde{\bm{\mu}}_t^{-}$ in the term c in~\eqref{eq:frt-terms}, we have 
\begin{equation}
\label{term_c}
\begin{aligned}
\mathbb{E}_{\bm{\mu},\tilde{\bm{\mu}}_t^{+},\tilde{\bm{\mu}}_t^{-}}\left[\mathbb{E}_{a\sim\bm{p}_t}\left[
\left| \tilde{\mu}^{+}_{t,a}-\Ddot{\mu}_{t,a}^{+}\right| 
\right]| \mathcal{H}_{t}\right]&=
\mathbb{E}_{\bm{\mu},\tilde{\bm{\mu}}_t^{-}}\left[\sum_{a=1}^K p_{t,a}\mathbb{E}_{\tilde{\bm{\mu}}^{+}_t}\left[
\left| \tilde{\mu}^{+}_{t,a}-\Ddot{\mu}_{t,a}^{+}\right| 
\mid \mathcal{H}_{t}, \bm{\mu},\tilde{\bm{\mu}}_t^{-} \right] \Big| \mathcal{H}_{t} \right]\\
&\stackrel{(a)}{\leq} \mathbb{E}_{\bm{\mu},\tilde{\bm{\mu}}_t^{-}}\left[\sum_{a=1}^K p_{t,a}\sqrt{\mathbb{E}_{\tilde{\bm{\mu}}^{+}_t}\left[
\left| \tilde{\mu}^{+}_{t,a}-\Ddot{\mu}_{t,a}^{+}\right|^2 
\mid \mathcal{H}_{t}, \bm{\mu},\tilde{\bm{\mu}}_t^{-}\right]}\Big| \mathcal{H}_{t}\right]\\
&\stackrel{(b)}{\leq}\mathbb{E}_{\bm{\mu},\tilde{\bm{\mu}}_t^{-}}\left[\sum_{a=1}^Kp_{t,a}\sqrt{\frac{1}{N_{t,a}\vee 1}}\Big| \mathcal{H}_{t}\right]\\&=\mathbb{E}_{\bm{\mu},\tilde{\bm{\mu}}_t^{-}}\left[\mathbb{E}_{a\sim\bm{p}_t}\left[\sqrt{\frac{1}{N_{t,a} \vee 1}}\right]\Big| \mathcal{H}_{t}\right],
\end{aligned}
\end{equation}
where $(a)$ comes from the fact that $\mathbb{E}\left[ mn \right] \leq  \sqrt{\mathbb{E}\left[ m^2 \right]} \sqrt{\mathbb{E}\left[ n^2 \right]} $, $(b)$ is because $\mathbb{E}_{\tilde{\bm{\mu}}^{+}_t}\left[\tilde{\mu}_{t,a}^{+} - \Ddot{\mu}_{t,a}^{+}\mid \mathcal{H}_{t}, \bm{\mu},\tilde{\bm{\mu}}_t^{-}\right] = 0$ and   $\mathrm{Var}_{\tilde{\bm{\mu}}^{+}_t}\left[\tilde{\mu}_{t,a}^{+} - \Ddot{\mu}_{t,a}^{+}\mid \mathcal{H}_{t}, \bm{\mu},\tilde{\bm{\mu}}_t^{-} \right]= \frac{u^{+}_{t,a}v^{+}_{t,a}}{(u^{+}_{t,a}+v^{+}_{t,a}+1)(u^{+}_{t,a}+v^{+}_{t,a})^2} \leq \frac{1}{N_{t,a}\vee 1}$.

Similarly, for the expectation concerning
$\bm{\mu},\tilde{\bm{\mu}}_t^{+},\tilde{\bm{\mu}}_t^{-}$ in the term f in~\eqref{eq:frt-terms}, 
we also have 
\begin{equation}
\label{term_f}
\begin{aligned}
\mathbb{E}_{\bm{\mu},\tilde{\bm{\mu}}_t^{+},\tilde{\bm{\mu}}_t^{-}}\left[\mathbb{E}_{a\sim\bm{p}_t}\left[
 \left| 
\Ddot{\mu}_{t,a}^{-}-\tilde{\mu}^{-}_{t,a}
 \right| \right]| \mathcal{H}_{t}\right] \leq \mathbb{E}_{\bm{\mu},\tilde{\bm{\mu}}_t^{+}}\left[\sum_{a=1}^K p_{t,a}\sqrt{\frac{1}{N_{t,a}\vee 1}}\Big| \mathcal{H}_{t}\right]=\mathbb{E}_{\bm{\mu},\tilde{\bm{\mu}}_t^{+}}\left[\mathbb{E}_{a\sim\bm{p}_t}\left[\sqrt{\frac{1}{N_{t,a}\vee 1}}\right]\Big| \mathcal{H}_{t}\right],
\end{aligned}
\end{equation}
For the term d in~\eqref{eq:frt-terms}, we have

\begin{equation}
\begin{aligned}
\mathbb{E}_{\bm{\mu},\tilde{\bm{\mu}}_t^{+},\tilde{\bm{\mu}}_t^{-}}\left[\mathbb{E}_{a\sim\bm{p}_t}\left[
\left|\Ddot{\mu}_{t,a}^{+}-\mu_a\right| 
\right]| \mathcal{H}_{t}\right]=
\mathbb{E}_{\tilde{\bm{\mu}}_t^{+},\tilde{\bm{\mu}}_t^{-}}\left[\sum_{a=1}^K p_{t,a}\mathbb{E}_{\bm{\mu}}\left[
\left| \Ddot{\mu}_{t,a}^{+}-\mu_a\right| 
\mid \mathcal{H}_{t}, \tilde{\bm{\mu}}_t^{+},\tilde{\bm{\mu}}_t^{-} \right] \Big| \mathcal{H}_{t} \right].
\end{aligned}
\end{equation}

For the term 
$\mathbb{E}_{\bm{\mu}}\left[
\left| \Ddot{\mu}_{t,a}^{+}-\mu_a\right| 
\mid \mathcal{H}_{t}, \tilde{\bm{\mu}}_t^{+},\tilde{\bm{\mu}}_t^{-} \right] $,
$\forall a\in[K]$, $t\in[T]$, with probability at least $1-\frac{\delta^{\prime}}{2}$,
we have
\begin{equation}
\label{ineq:optimistic-bound}
\begin{aligned}
\mathbb{E}_{\bm{\mu}}\left[
\left| \Ddot{\mu}_{t,a}^{+}-\mu_a\right| 
\mid \mathcal{H}_{t}, \tilde{\bm{\mu}}_t^{+},\tilde{\bm{\mu}}_t^{-}\right]
& \leq \sqrt{\mathbb{E}_{\bm{\mu}}\left[
\left| \Ddot{\mu}_{t,a}^{+}-\mu_a\right|^2 
\mid \mathcal{H}_{t}, \tilde{\bm{\mu}}_t^{+},\tilde{\bm{\mu}}_t^{-}\right]}
\stackrel{(a)}{\leq} \sqrt{\frac{1}{M_{t,a}+ 1}+\left( \Ddot{\mu}_{t,a}^{+}-\Ddot{\mu}_{t,a} \right)^2}\\
& \leq \sqrt{\frac{1}{M_{t,a}+ 1}}+\left| \Ddot{\mu}_{t,a}^{+}-\Ddot{\mu}_{t,a} \right|\\
&   \leq \sqrt{\frac{1}{M_{t,a}+ 1}}+ \frac{N_{t,a}-M_{t,a}}{N_{t,a}\vee 1} 
= \sqrt{\frac{1}{M_{t,a}+ 1}}+\hat{\mu}^{+}_{t,a}-\hat{\mu}^{-}_{t,a}\\
& \stackrel{(b)}{\leq} \sqrt{\frac{1}{M_{t,a}+ 1}}+ \frac{N_{t,a}-N_{t-d_a(q),a}}{N_{t,a}\vee 1}+1-q+\sqrt{\frac{\log(2KT /\delta^{\prime})}{2(N_{t,a}\vee 1)}},
\end{aligned}
\end{equation}
where $(a)$ is because $\mathbb{E}_{\bm{\mu}}\left[
 \Ddot{\mu}_{t,a}^{+}-\mu_a 
\mid \mathcal{H}_{t}, \tilde{\bm{\mu}}_t^{+},\tilde{\bm{\mu}}_t^{-}\right] = \mathbb{E}_{\bm{\mu}}\left[
 \Ddot{\mu}_{t,a}^{+}-\Ddot{\mu}_{t,a} 
\mid \mathcal{H}_{t}, \tilde{\bm{\mu}}_t^{+},\tilde{\bm{\mu}}_t^{-}\right]$ and $\mathrm{Var}_{\bm{\mu}}\left[
 \Ddot{\mu}_{t,a}^{+}-\mu_a 
\mid \mathcal{H}_{t}, \tilde{\bm{\mu}}_t^{+},\tilde{\bm{\mu}}_t^{-}\right] = \frac{u_{t,a}v_{t,a}}{(u_{t,a}+v_{t,a}+1)(u_{t,a}+v_{t,a})^2} \leq \frac{1}{M_{t,a}+ 1}$  and $(b)$ is from Lemma~\ref{lemma:lancewicki-lemma6-adapted}.

Similarly, for the term e in~\eqref{eq:frt-terms}, we have
\begin{equation}
\begin{aligned}
\mathbb{E}_{\bm{\mu},\tilde{\bm{\mu}}_t^{+},\tilde{\bm{\mu}}_t^{-}}\left[\mathbb{E}_{a\sim\bm{p}_t}\left[
\left|\mu_{a}-\Ddot{\mu}_{t,a}^{-}\right| 
\right]| \mathcal{H}_{t}\right]=
\mathbb{E}_{\tilde{\bm{\mu}}_t^{+},\tilde{\bm{\mu}}_t^{-}}\left[\sum_{a=1}^K p_{t,a}\mathbb{E}_{\bm{\mu}}\left[
\left| \mu_{a}-\Ddot{\mu}_{t,a}^{-}\right| 
\mid \mathcal{H}_{t}, \tilde{\bm{\mu}}_t^{+},\tilde{\bm{\mu}}_t^{-} \right] \Big| \mathcal{H}_{t} \right].
\end{aligned}
\end{equation}

$\forall a\in[K]$, $t\in[T]$, with probability at least $1-\frac{\delta^{\prime}}{2}$,
we have
\begin{equation}
\label{ineq:pessimistic-bound}
\begin{aligned}
\mathbb{E}_{\bm{\mu}}\left[\left| 
\mu_{a}-\Ddot{\mu}_{t,a}^{-}
\right| \Big| \mathcal{H}_{t}, \tilde{\bm{\mu}}_t^{+},\tilde{\bm{\mu}}_t^{-}\right]\leq
\sqrt{\frac{1}{M_{t,a}+ 1}}+ \frac{N_{t,a}-N_{t-d_a(q),a}}{N_{t,a}\vee 1}+1-q+\sqrt{\frac{\log(2KT /\delta^{\prime})}{2(N_{t,a}\vee 1)}}.
\end{aligned}
\end{equation}

Denote event $\mathcal{E}_t=\left\lbrace\forall  a\in [K], \ \eqref{ineq:optimistic-bound}~\text{and}~\eqref{ineq:pessimistic-bound}~\text{hold.} \right \rbrace$.
We have 

\begin{equation}
\label{term_de}
\begin{aligned}
&\mathbb{E}_{\mathcal{H}_{t}}\left[
\mathbb{E}_{\bm{\mu},\tilde{\bm{\mu}}_t^{+},\tilde{\bm{\mu}}_t^{-}}\left[\mathbb{E}_{a\sim\bm{p}_t}\left[\left|\Ddot{\mu}_{t,a}^{+}-\mu_a\right|+\left| 
\mu_{a}-\Ddot{\mu}_{t,a}^{-}
\right| 
\right]\Big| \mathcal{H}_{t}\right]\right]\\
&=\mathbb{E}_{\mathcal{H}_{t}}\left[
\mathbb{E}_{\bm{\mu},\tilde{\bm{\mu}}_t^{+},\tilde{\bm{\mu}}_t^{-}}\left[\mathbb{E}_{a\sim\bm{p}_t}\left[\left|\Ddot{\mu}_{t,a}^{+}-\mu_a\right|+\left| 
\mu_{a}-\Ddot{\mu}_{t,a}^{-}
\right| 
\right]\Big| \mathcal{H}_{t},\mathcal{E}_t
\right]\mathbb{P}\left(\mathcal{E}_t\right)\right.
\\&\left.\quad+
\mathbb{E}_{\bm{\mu},\tilde{\bm{\mu}}_t^{+},\tilde{\bm{\mu}}_t^{-}}\left[\mathbb{E}_{a\sim\bm{p}_t}\left[\left|\Ddot{\mu}_{t,a}^{+}-\mu_a\right|+\left| 
\mu_{a}-\Ddot{\mu}_{t,a}^{-}
\right| 
\right]\Big| \mathcal{H}_{t},\overline{\mathcal{E}}_t\right]\mathbb{P}\left(\overline{\mathcal{E}}_t\right)
\right]
\\&\leq2\mathbb{E}_{\mathcal{H}_{t}}\left[\mathbb{E}_{\tilde{\bm{\mu}}_t^{+},\tilde{\bm{\mu}}_t^{-}}\left[\mathbb{E}_{a\sim\bm{p}_t}\left[\sqrt{\frac{1}{M_{t,a}+ 1}}+ \frac{N_{t,a}-N_{t-d_a(q),a}}{N_{t,a}\vee 1}+1-q+\sqrt{\frac{\log(2KT /\delta^{\prime})}{2(N_{t,a}\vee 1)}}\right]\Big| \mathcal{H}_t\right]\right]
\\&\quad+
\underbrace{\mathbb{E}_{\mathcal{H}_{t}}\left[\sum_{a}p_{t,a}\mathbb{E}_{\bm{\mu},\tilde{\bm{\mu}}_t^{+},\tilde{\bm{\mu}}_t^{-}}\left[
\left|\Ddot{\mu}_{t,a}^{+}-\mu_a\right|+\left| 
\mu_{a}-\Ddot{\mu}_{t,a}^{-}\right|\Big|\mathcal{H}_{t},\overline{\mathcal{E}}_t\right]\mathbb{P}\left(\overline{\mathcal{E}}_t\right)\right]
}_{\text {term g }} .
\end{aligned}
\end{equation}

For the term g in~\eqref{term_de}, note that
\begin{equation}
	\label{eq:instanstaneous-fairness-regret-sub-term-op}
	\begin{aligned}
 \mathbb{E}_{\bm{\mu},\tilde{\bm{\mu}}_t^{+},\tilde{\bm{\mu}}_t^{-}}\left[
\left|\Ddot{\mu}_{t,a}^{+}-\mu_a\right|^2\Big|\mathcal{H}_{t},\overline{\mathcal{E}}_t\right]		
		& 
  \stackrel{(a)}{\leq} 2\left(\Ddot{\mu}_{t,a}^{+}- \Ddot{\mu}_{t,a}\right)^2
  +2\mathbb{E}_{\bm{\mu}}\left[\left( \mu_{a}- \Ddot{\mu}_{t,a}\right)^2 \mid \mathcal{H}_t,\overline{\mathcal{E}}_t\right]\\
		& 
    \stackrel{(b)}{=} 2\left(\Ddot{\mu}_{t,a}^{+}-\Ddot{\mu}_{t,a}\right)^2
		+\frac{2}{M_{t,a}+ 1}
		\\&\leq 2\left(\Ddot{\mu}_{t,a}^{+}- \Ddot{\mu}_{t,a}\right)^2
		+2,
	\end{aligned}
\end{equation}
where $(a)$ follows the inequality $(m+n)^2 \leq 2m^2 + 2n^2$ and $(b)$ is because 
$\mathbb{E}_{\bm{\mu}}\left[\mu_{a} - \Ddot{\mu}_{t,a}\mid \mathcal{H}_t, \overline{\mathcal{E}}_t\right] = 0$ and   $\mathrm{Var}_{\bm{\mu}}\left[\mu_{a} - \Ddot{\mu}_{t,a}\mid \mathcal{H}_t, \overline{\mathcal{E}}_t\right] = \frac{u_{t,a}v_{t,a}}{(u_{t,a}+v_{t,a}+1)(u_{t,a}+v_{t,a})^2} \leq \frac{1}{M_{t,a}+ 1}$. 
Similarly, we also have
\begin{equation}
	\label{eq:instanstaneous-fairness-regret-sub-term2}
 \mathbb{E}_{\bm{\mu},\tilde{\bm{\mu}}_t^{+},\tilde{\bm{\mu}}_t^{-}}\left[
\left| 
\mu_{a}-\Ddot{\mu}_{t,a}^{-}\right|^2\Big|\mathcal{H}_{t},\overline{\mathcal{E}}_t\right]
\leq2\left(\Ddot{\mu}_{t,a}^{-}- \Ddot{\mu}_{t,a}\right)^2+2,
\end{equation}

Next, we can upper bound term g in~\eqref{term_de} as follows:
\begin{equation}
\label{term_g}
	\begin{aligned}
		\text { term } \mathrm{g} & \leq \mathbb{E}_{\mathcal{H}_{t}}\left[\mathbb{P}\left(\overline{\mathcal{E}}_t\right) \sqrt{\sum_{a=1}^K p_{t,a}^2} \left(\sqrt{\sum_{a=1}^K  \mathbb{E}_{\bm{\mu}}\left[ \left( \Ddot{\mu}_{t,a}^{+}-\mu_a\right)^2 | \mathcal{H}_{t},\overline{\mathcal{E}}_t \right]  }+\sqrt{\sum_{a=1}^K  \mathbb{E}_{\bm{\mu}}\left[ \left( \mu_{a}-\Ddot{\mu}_{t,a}^{-}\right)^2 | \mathcal{H}_{t},\overline{\mathcal{E}}_t \right]  }\right)\right] \\
		& \leq \mathbb{E}_{\mathcal{H}_{t}}\left[\sqrt{L} \mathbb{P}\left(\overline{\mathcal{E}}_t\right) \left(\sqrt{\sum_{a=1}^K  \mathbb{E}_{\bm{\mu}}\left[ \left( \Ddot{\mu}_{t,a}^{+}-\mu_a\right)^2 | \mathcal{H}_{t},\overline{\mathcal{E}}_t \right]  }+\sqrt{\sum_{a=1}^K  \mathbb{E}_{\bm{\mu}}\left[ \left( \mu_{a}-\Ddot{\mu}_{t,a}^{-}\right)^2 | \mathcal{H}_{t},\overline{\mathcal{E}}_t \right]  }\right) \right]\\
		& \stackrel{(a)}{\leq} \mathbb{E}_{\mathcal{H}_{t}}\left[
  \sqrt{2L} \mathbb{P}\left(\overline{\mathcal{E}}_t\right) \left(
  \sqrt{\sum_{a=1}^K\left(\Ddot{\mu}_{t,a}^{+}- \Ddot{\mu}_{t,a}\right)^2  +K }
  +\sqrt{\sum_{a=1}^K\left( \Ddot{\mu}_{t,a}-\Ddot{\mu}_{t,a}^{-}\right)^2  +K }
  \right) \right] \\
		& \leq \mathbb{E}_{\mathcal{H}_{t}}\left[\sqrt{2L}  \mathbb{P}\left(\overline{\mathcal{E}}_t\right) \left(\sqrt{\sum_{a=1}^K\left(\Ddot{\mu}_{t,a}^{+}- \Ddot{\mu}_{t,a}\right)^2} +\sqrt{\sum_{a=1}^K\left( \Ddot{\mu}_{t,a}-\Ddot{\mu}_{t,a}^{-}\right)^2}
		\right)+ 2\sqrt{2LK}\mathbb{P}\left(\overline{\mathcal{E}}_t\right)  \right]\\&\leq 2\sqrt{2LK}\mathbb{P}\left(\overline{\mathcal{E}}_t\right)  ,
	\end{aligned}
\end{equation}
where $(a)$ is from~\eqref{eq:instanstaneous-fairness-regret-sub-term-op} and~\eqref{eq:instanstaneous-fairness-regret-sub-term2}. 

Summing over all time slots, we have
\begin{equation}
	\begin{aligned}
		&\mathbb{E}\left[\mathrm{FR}_T\right]=\sum_{t=1}^{T} \mathbb{E}[fr_t]\\
  &\leq \frac{ML}{\lambda} \left\lbrace 
\mathbb{E}_{\mathcal{H}_{t}}\left[\mathbb{E}_{\bm{\mu},\tilde{\bm{\mu}}_t^{+},\tilde{\bm{\mu}}_t^{-}}\left[\mathbb{E}_{a\sim\bm{p}_t}\left[
\left| \tilde{\mu}^{+}_{t,a}-\Ddot{\mu}_{t,a}^{+}\right| 
\right]| \mathcal{H}_{t}\right]\right]
 + \mathbb{E}_{\mathcal{H}_{t}}\left[\mathbb{E}_{\bm{\mu},\tilde{\bm{\mu}}_t^{+},\tilde{\bm{\mu}}_t^{-}}\left[\mathbb{E}_{a\sim\bm{p}_t}\left[
\left|\Ddot{\mu}_{t,a}^{+}-\mu_a\right| 
\right]| \mathcal{H}_{t}\right]\right]
\right.
\\&\quad\left.+\mathbb{E}_{\mathcal{H}_{t}}\left[\mathbb{E}_{\bm{\mu},\tilde{\bm{\mu}}_t^{+},\tilde{\bm{\mu}}_t^{-}}\left[\mathbb{E}_{a\sim\bm{p}_t}\left[\left| 
\mu_{a}-\Ddot{\mu}_{t,a}^{-}
\right| \right]| \mathcal{H}_{t}\right]\right]
 + \mathbb{E}_{\mathcal{H}_{t}}\left[\mathbb{E}_{\bm{\mu},\tilde{\bm{\mu}}_t^{+},\tilde{\bm{\mu}}_t^{-}}\left[\mathbb{E}_{a\sim\bm{p}_t}\left[
 \left| 
\Ddot{\mu}_{t,a}^{-}-\tilde{\mu}^{-}_{t,a}
 \right| \right]| \mathcal{H}_{t}\right]\right]  \right\rbrace \\
    &\stackrel{(a)}{\leq}
    \frac{2ML}{\lambda}\sum_{t=1}^{T}
  \mathbb{E}_{\mathcal{H}_{t}}\left[ \mathbb{E}\left[\mathbb{E}_{a\sim\bm{p}_t}\left[\sqrt{\frac{1}{N_{t,a}\vee 1}}\right]\Big| \mathcal{H}_{t}\right]
  \right]+\frac{2ML}{\lambda}\sum_{t=1}^{T}
  \mathbb{E}_{\mathcal{H}_{t}}\left[ \mathbb{E}\left[\mathbb{E}_{a\sim\bm{p}_t}\left[\sqrt{\frac{1}{M_{t,a}+ 1}}\right]\Big| \mathcal{H}_{t}\right]
  \right]\\
  &\quad+\frac{2ML\sqrt{2LK}}{\lambda}\sum_{t=1}^{T}
  \mathbb{P}\left(\overline{\mathcal{E}}_t\right)  +\frac{2ML}{\lambda}\sum_{t=1}^{T}\mathbb{E}_{\mathcal{H}_{t}}\left[\mathbb{E}\left[\mathbb{E}_{a\sim\bm{p}_t}\left[ \frac{N_{t,a}-N_{t-d_a(q),a}}{N_{t,a}\vee 1}+1-q+\sqrt{\frac{\log(2KT /\delta^{\prime})}{2(N_{t,a}\vee 1)}}\right]\Big|\mathcal{H}_t\right]\right]\\
&\leq \frac{2ML^2}{\lambda}(1-q)T + \frac{2ML}{\lambda}\left(\sqrt{\log(2KT /\delta^{\prime})/2}+d^{*}(q)+1\right)\sum_{t=1}^{T}\mathbb{E}_{\mathcal{H}_{t}}\left[\mathbb{E}\left[\mathbb{E}_{a\sim\bm{p}_t}
        \left[ \frac{1}{N_{t,a}\vee 1} \right]\Big|\mathcal{H}_t \right]\right] \\&\quad+\frac{2ML}{\lambda}\sum_{t=1}^{T}
  \mathbb{E}_{\mathcal{H}_{t}}\left[ \mathbb{E}\left[\mathbb{E}_{a\sim\bm{p}_t}\left[\sqrt{\frac{1}{M_{t,a}+ 1}}\right]\Big| \mathcal{H}_{t}\right] \right]  +\frac{2ML\sqrt{2LK}}{\lambda}\sum_{t=1}^{T}\mathbb{P}\left(\overline{\mathcal{E}}_t\right)
.
\end{aligned}
\end{equation}
where $(a)$ is derived from  ~\eqref{term_c},~\eqref{term_f},~\eqref{term_de} and~\eqref{term_g}.
For the term $\sum_{t=1}^T\mathbb{E}_{a\sim\bm{p}_t}\left[\sqrt{\frac{1}{N_{t,a}\vee 1}}\right] $, with probability at least
$1-\delta$, we have, 
\begin{equation}
\begin{aligned}   \sum_{t=1}^T\mathbb{E}_{a\sim\bm{p}_t}\left[\sqrt{\frac{1}{N_{t,a}\vee 1}}\right]&\stackrel{(a)}{\leq} L\sqrt{2T \log \frac{2}{\delta}} +\sum_{t=1}^{T}\sum_{a\in A_t}\sqrt{ \frac{1}{N_{t,a}\vee 1} }\leq L\sqrt{2T \log \frac{2}{\delta}} +K\int_{1}^{\frac{LT}{K}}\sqrt{\frac{1}{x}}dx+1\\
    &\leq L\sqrt{2T \log \frac{2}{\delta}} +2\sqrt{LKT}+1,
\end{aligned}
\end{equation}
where $(a)$ is from Lemma~\ref{Lemma:martingale_seq_all_observations}.
Set $\delta=1/LT$, we further have
\begin{equation}
\label{ineq:upper-bound-all-1/N}
\begin{aligned}    \mathbb{E}\left[ \sum_{t=1}^T\mathbb{E}_{a\sim\bm{p}_t}\left[\sqrt{\frac{1}{N_{t,a}\vee 1}}\right]\right] 
    &\leq L\sqrt{2T \log \frac{2}{\delta}} +2\sqrt{LKT}+LT\delta+1\\
    &\leq L\sqrt{2T \log (2LT)} +2\sqrt{LKT}+2.
    \end{aligned}
\end{equation}
Moreover, by the result of~\eqref{BayesFR-first-term},
\begin{equation}
	\label{BayesFR-first-term-op}
	\begin{aligned}
		\mathbb{E}\left[ \sum_{t=1}^{T}\mathbb{E}_{a\sim\bm{p}_t}\left[ \sqrt{\frac{1}{M_{t,a}+1}} \right] \right]  
		\leq L+ \frac{24K\log T}{q}  + L\sqrt{2T \log (4LT)}+2\sqrt{\frac{2LKT}{q}} + L d^{*}(q)+ 1.
	\end{aligned}
\end{equation}
Finally, we have
\begin{equation}
	\begin{aligned}
		&\mathbb{E}\left[\mathrm{FR}_T\right]=\sum_{t=1}^{T} \mathbb{E}[fr_t]\\
&\leq \frac{2ML}{\lambda} \Bigg [L(1-q)T+\left(\sqrt{\log(2KT /\delta^{\prime})/2}+d^{*}(q)+1\right)\left( L\sqrt{2T \log (2LT)} +2\sqrt{LKT}+2\right) 
\\& \quad +L+ \frac{24K\log T}{q}  + L\sqrt{2T \log (4LT)}+2\sqrt{\frac{2LKT}{q}} + L d^{*}(q)+ 1 +\sqrt{2LK}\sum_{t=1}^{T}\mathbb{P}\left(\overline{\mathcal{E}}_t\right) \Bigg ]
\\
&\leq 
\frac{2ML}{\lambda} \Bigg [L(1-q)T+\left(\sqrt{\log(2KT /\delta^{\prime})/2}+d^{*}(q)+1\right)\left( L\sqrt{2T \log (2LT)} +2\sqrt{LKT}+2\right) 
\\& \quad +L+ \frac{24K\log T}{q}  + L\sqrt{2T \log (4LT)}+2\sqrt{\frac{2LKT}{q}} + L d^{*}(q)+ 1+\sqrt{2LK}T\delta^{\prime} \Bigg ]
.
\end{aligned}
\end{equation}
which hold for any quantile $q\in\leftopen{0}{1}$.
Set $\delta^{\prime}=1/T$,  the expected fairness regret can be upper bounded as follows:
\begin{equation}
	\begin{aligned}
		&\mathbb{E}\left[\mathrm{FR}_T\right]=\sum_{t=1}^{T} \mathbb{E}[fr_t]\\
		&\leq \min_{q\in (0,1]} \left\{
\frac{2ML}{\lambda} \Bigg [ L(1-q)T+\left(\sqrt{\log(2KT )}+d^{*}(q)+2\right)\left( L\sqrt{2T \log (2LT)} +2\sqrt{LKT}+1\right) \right. \\& \left.+L+ \frac{24K\log T}{q}  + L\sqrt{2T \log (4LT)}+2\sqrt{\frac{2LKT}{q}} + L d^{*}(q)+ 1 +\sqrt{2LK}\Bigg ]
\right\} \\
 &=\widetilde{O}\left( \min_{q\in (0,1]}\left\lbrace \frac{MLK}{\lambda}\left( (1-q)T+\frac{d^{*}(q)}{q}\sqrt{T}\right) \right\rbrace \right).
	\end{aligned}
\end{equation}
This completes the proof of the expected fairness regret upper bound of \ctsfdop.

\proofpart{2}{Proof of the Expected Reward Regret Upper Bound of \ctsfdop}

In \ctsfdop,
the learner selects arms with the probabilistic vector $\bm{p}_t$, depending on the average of
$\tilde{\mu}^{+}_{t,a}$ and $\tilde{\mu}^{-}_{t,a}$ from each arm $a$ while the
optimal fairness policy $\bm{p}^*$ depends on $\bm{\mu}$.
Because we lack information about unobserved feedback due to reward-dependent delays, 
the optimal fairness policy $\bm{p}^*$ and the policy
$\bm{p}_t$ are not identically distributed conditioned on $\mathcal{H}_t$. 
Hence, Lemma~\ref{lemma:BRT-decomposed} does not apply in this setting.
We derive the expected reward regret bound from the expected fairness regret
under Assumption~\ref{Minimum-Merit} and Assumption~\ref{Lipschitz-Continuity}.

Specifically, by the definition of the expected fairness regret and expected reward regret,
the expected reward regret can be upper bounded using the expected fairness regret as follows:
\begin{equation}
\begin{aligned}
        \mathbb{E}\left[\mathrm{RR}_T\right] &=  \mathbb{E}\left[\sum_{t=1}^{T}\max\left\lbrace \sum_{a=1}^Kp^{*}_a\mu_{a}-\sum_{a=1}^K p_{t,a}\mu_{a}, 0\right\rbrace\right] \stackrel{(a)}{\leq}  \mathbb{E} \left[\sum_{t=1}^{T}\sum_{a=1}^K\left|p^{*}_a-p_{t,a}\right|\right]
    =\mathbb{E}\left[\mathrm{FR}_T\right] \\&= \widetilde{O}\left( \min_{q\in (0,1]}\left\lbrace \frac{MLK}{\lambda}\left( (1-q)T+\frac{d^{*}(q)}{q}\sqrt{T})\right) \right\rbrace \right),
\end{aligned}
\end{equation}
where $(a)$ is because $\mu_a \in \left[0,1\right]$.
This completes the proof of the expected reward regret upper bound of \ctsfdop.

Combining Part 1 and Part 2 of the proof, we complete the proof of Theorem~\ref{the:fairness-reward-regret-op-thompson-sam-type}.
\end{proof}

\section{Additional Experiments}
\label{additional_experiments}
\subsection{Experiments using different merit functions}

We vary the parameter $c$ of the merit function and present the experiment
results on the fairness and reward regret of the different algorithms under various types
of feedback delays in
Figure~\ref{fig:experiments_on_fixed_delays_slope}-\ref{fig:experiments_on_baised_delay_slope}.
The plots indicate that our algorithms 
effectively ensure merit-based fairness among arms while achieving low reward regret for different merit functions. 
As the parameter $c$ increases, making the merit function $f$ steeper, the gap
between different bandit algorithms widens accordingly.
In particular, we note that FGreedy-D has a comparable performance with \cucbfd and \ctsfd under the reward-dependent delay setting in Figure~\ref{fig:experiments_on_baised_delay_slope}.

The reason is that FGreedy-D selects arms uniformly
at random in the exploration phase when facing observed feedback with potential bias, and thus it avoids favoring the sub-optimal arms at the beginning like \cucbfd and \ctsfd.

Finally, we provide the running time of \cucbfd and \ctsfd (and their corresponding OP versions) to demonstrate the effectiveness of our TS-type algorithms. Figure~\ref{fig:running-time} shows that our TS-type algorithms (\ctsfd and \ctsfdop) have shorter running times for the same number of rounds by avoiding solving the optimization problem.

\begin{figure*}[!t]
	\centering
	\subfigure[$c=2$]{
	\centering
	\includegraphics[width=0.23\textwidth]{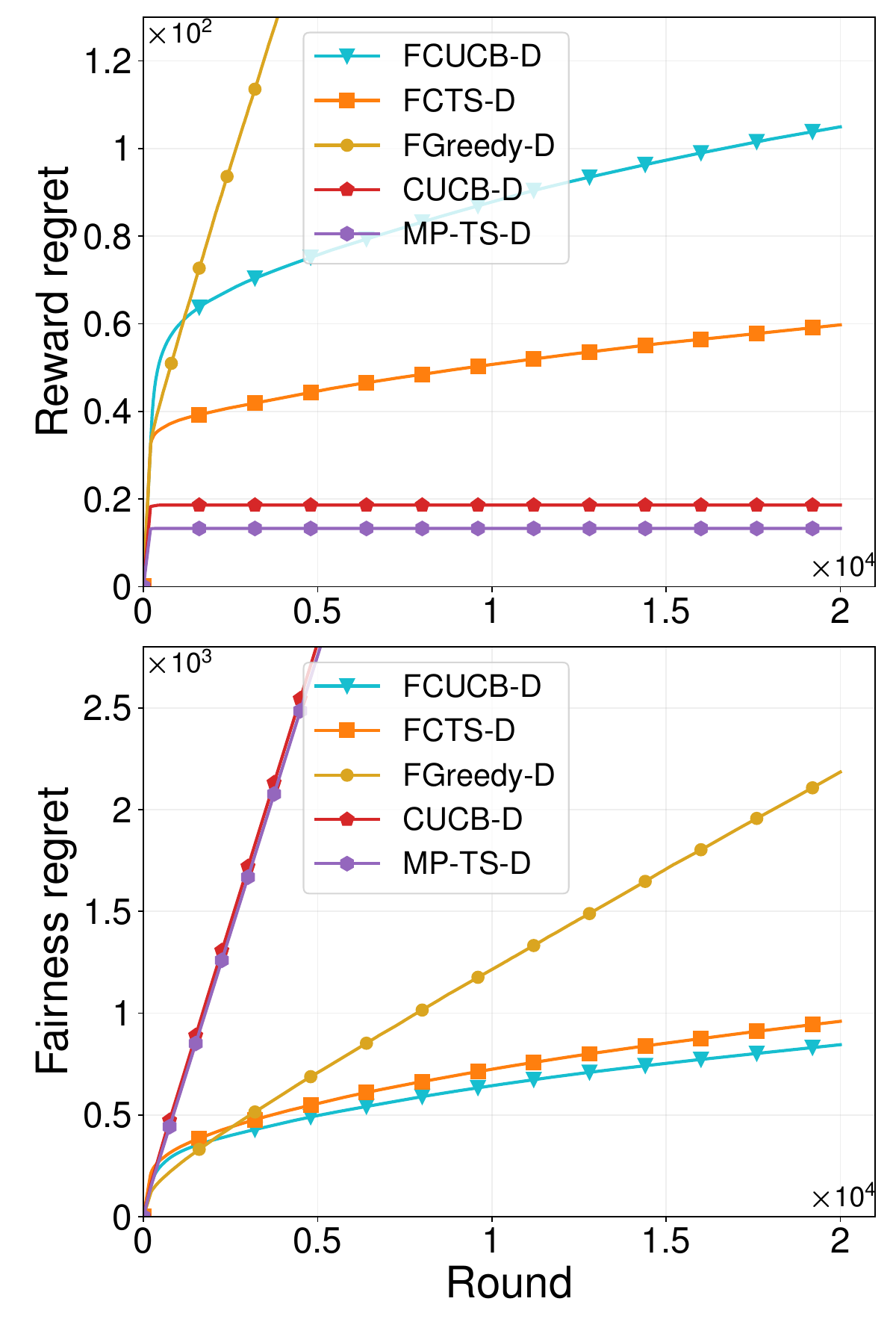}
	\label{fig:fixed_delays_slope_c=2}
	}
	\subfigure[$c=4$]{
		\centering
		\includegraphics[width=0.23\textwidth]{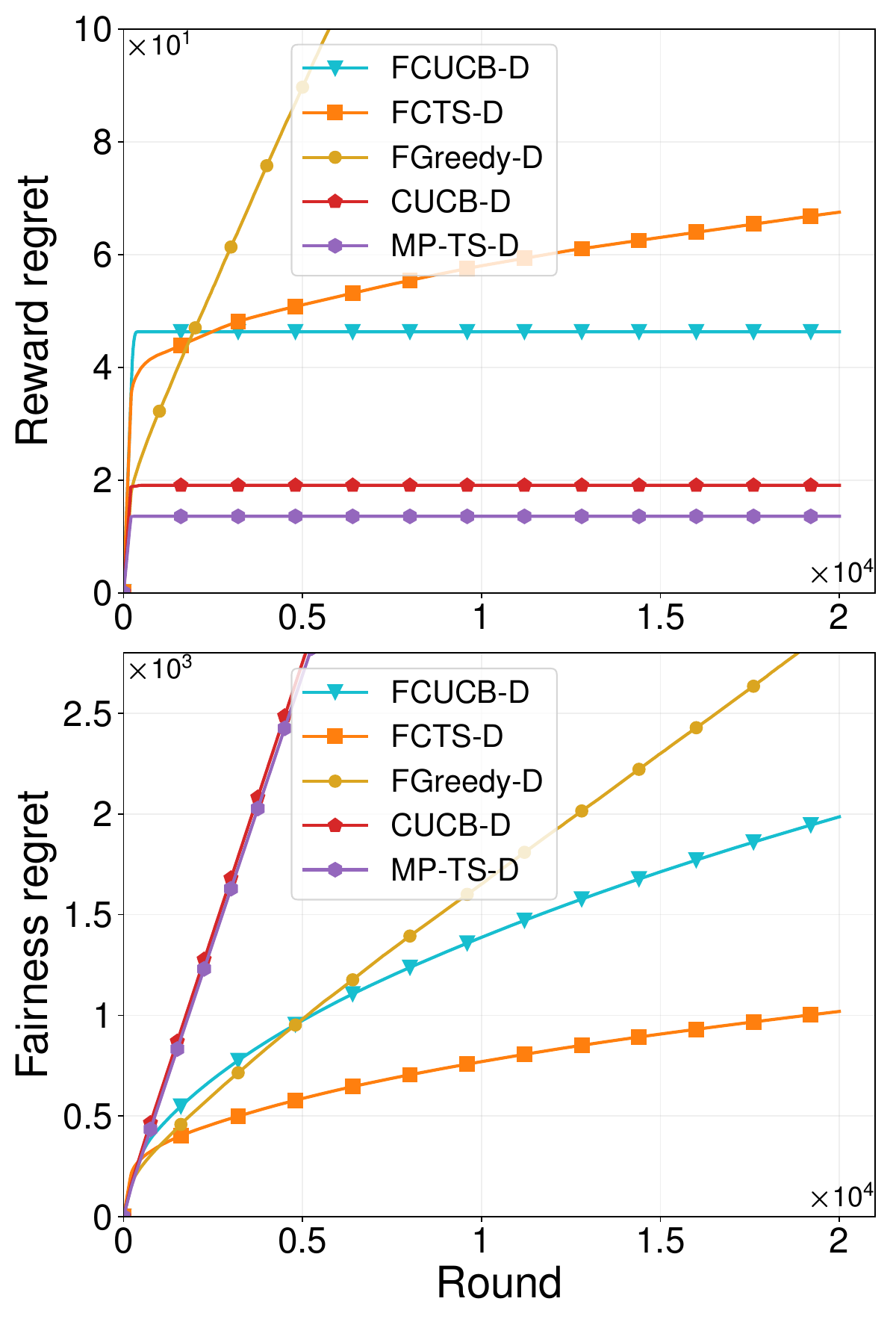}
		\label{fig:fixed_delays_slope_c=4}              
	}
	\subfigure[$c=6$]{
		\centering
		\includegraphics[width=0.23\textwidth]{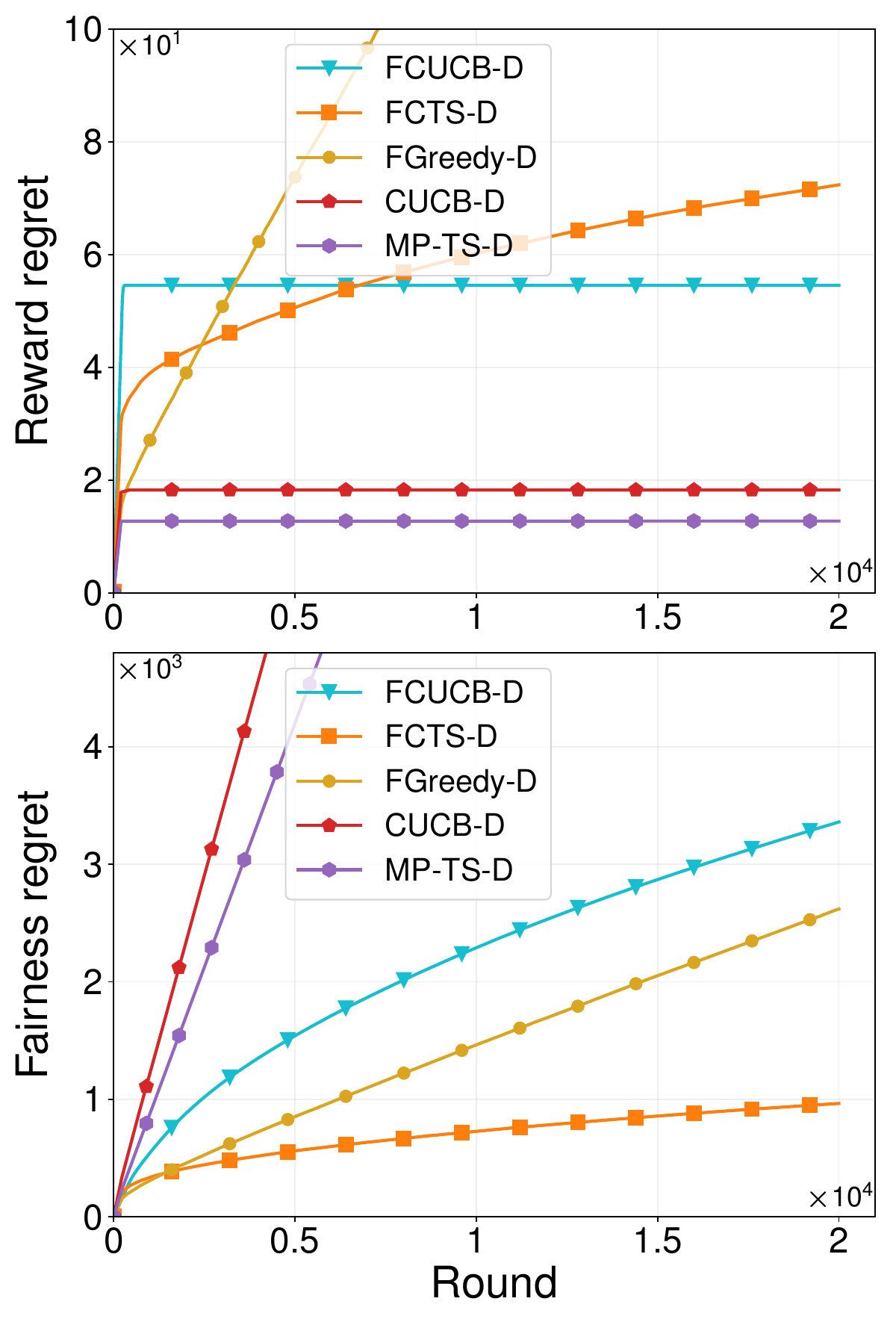}
		\label{fig:fixed_delays_slope_c=6}
	}
 	\subfigure[$c=8$]{
		\centering
		\includegraphics[width=0.23\textwidth]{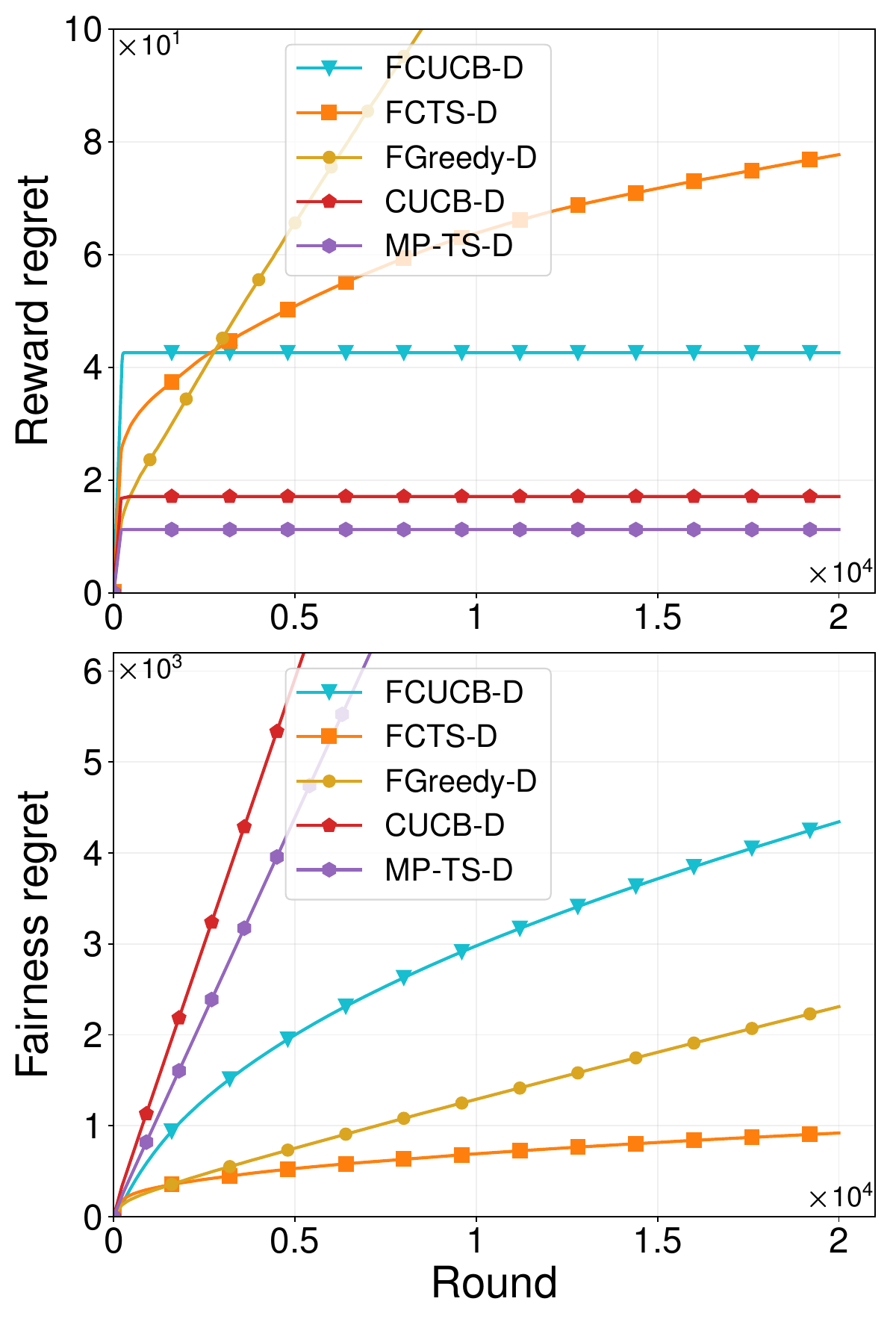}
		\label{fig:fixed_delays_slope_c=8}              
	}
	\caption{Experiment results of the different bandit algorithms using different merit
    functions algorithms under fixed feedback delays (200 rounds).}
	\label{fig:experiments_on_fixed_delays_slope}
\end{figure*}
\begin{figure*}[!t]
	\centering
	\subfigure[$c=2$]{
	\centering
	\includegraphics[width=0.23\textwidth]{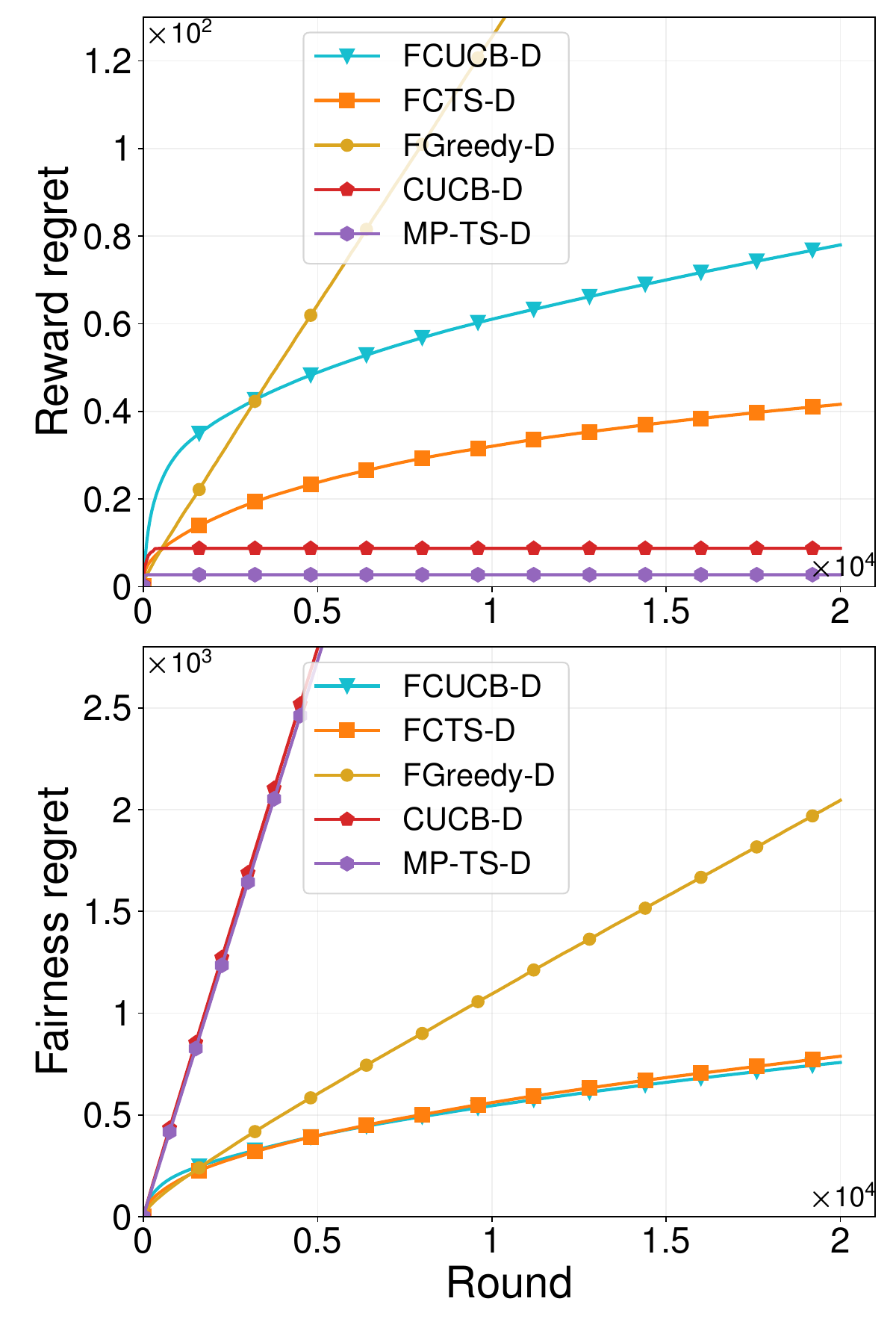}
	\label{fig:geometeric_delays_slope_c=2}
	}
	\subfigure[$c=4$]{
		\centering
		\includegraphics[width=0.23\textwidth]{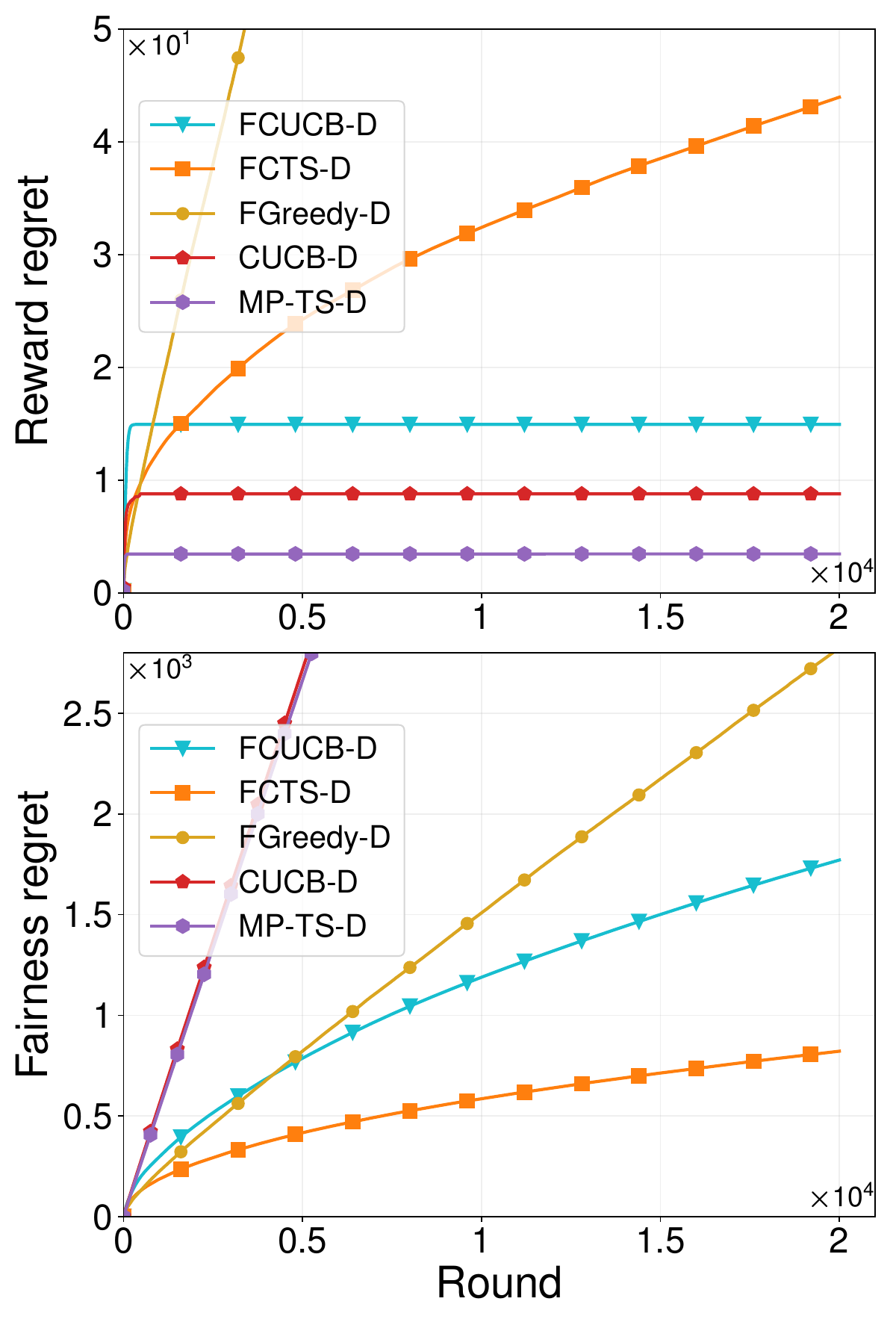}
		\label{fig:geometeric_delays_slope_c=4}              
	}
	\subfigure[$c=6$]{
		\centering
		\includegraphics[width=0.23\textwidth]{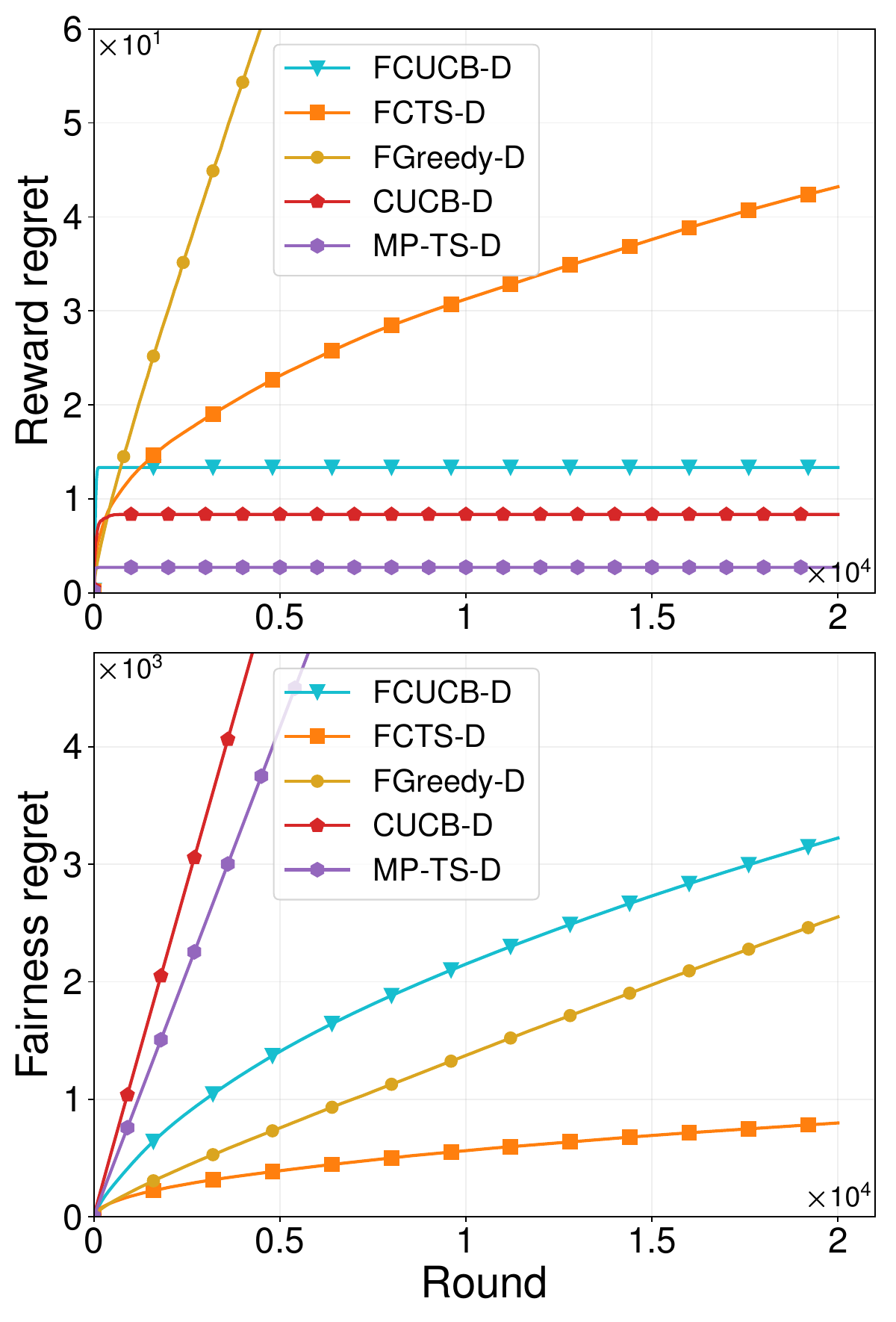}
		\label{fig:geometeric_delays_slope_c=6}
	}
 	\subfigure[$c=8$]{
		\centering
		\includegraphics[width=0.23\textwidth]{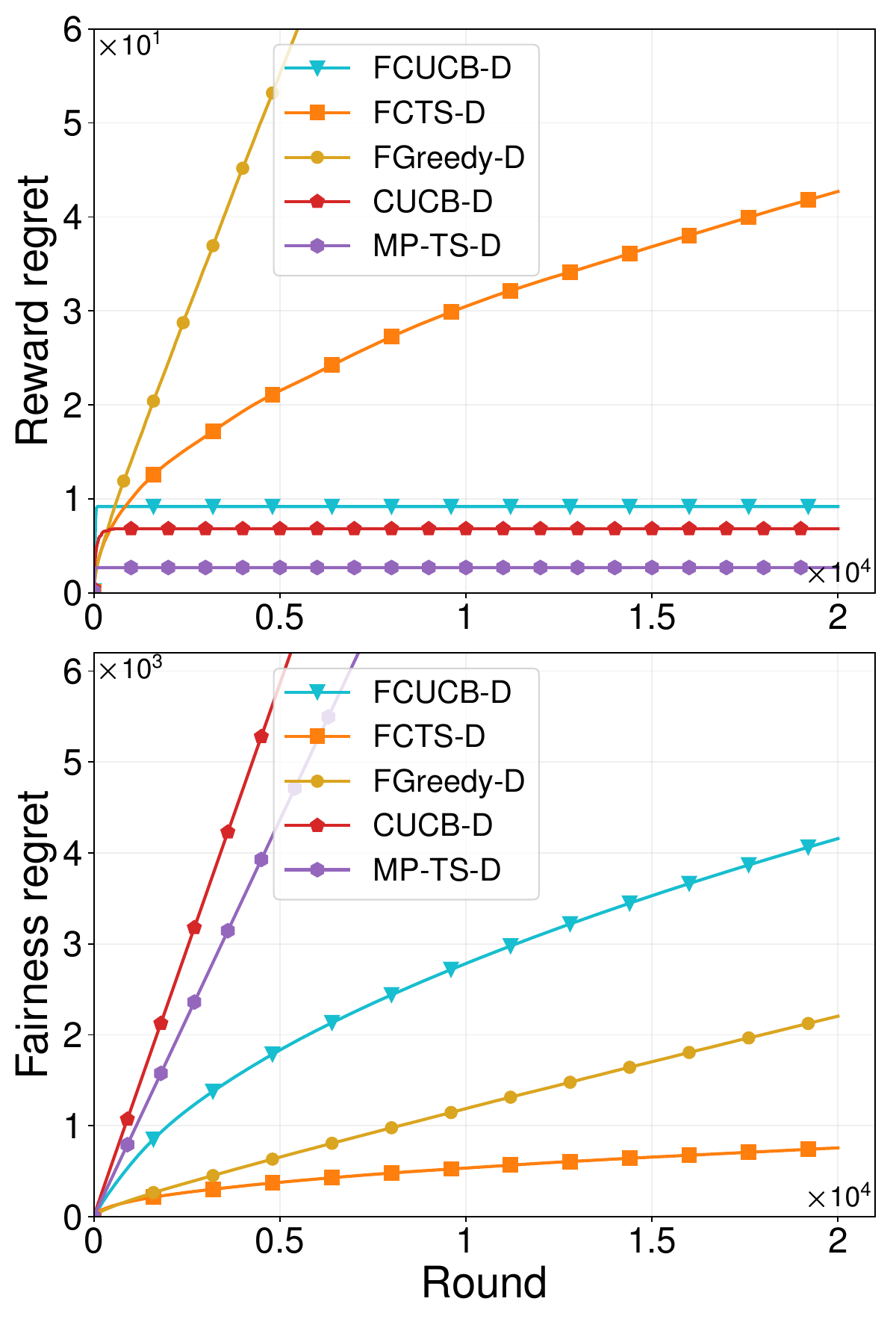}
		\label{fig:geometeric_delays_slope_c=8}              
	}
	\caption{Experiment results of the different bandit algorithms using different merit
    functions under geometric feedback delays.}
	\label{fig:experiments_on_geometeric_delays_slope}
\end{figure*}
\begin{figure*}[!t]
	\centering
	\subfigure[$c=2$]{
	\centering
	\includegraphics[width=0.23\textwidth]{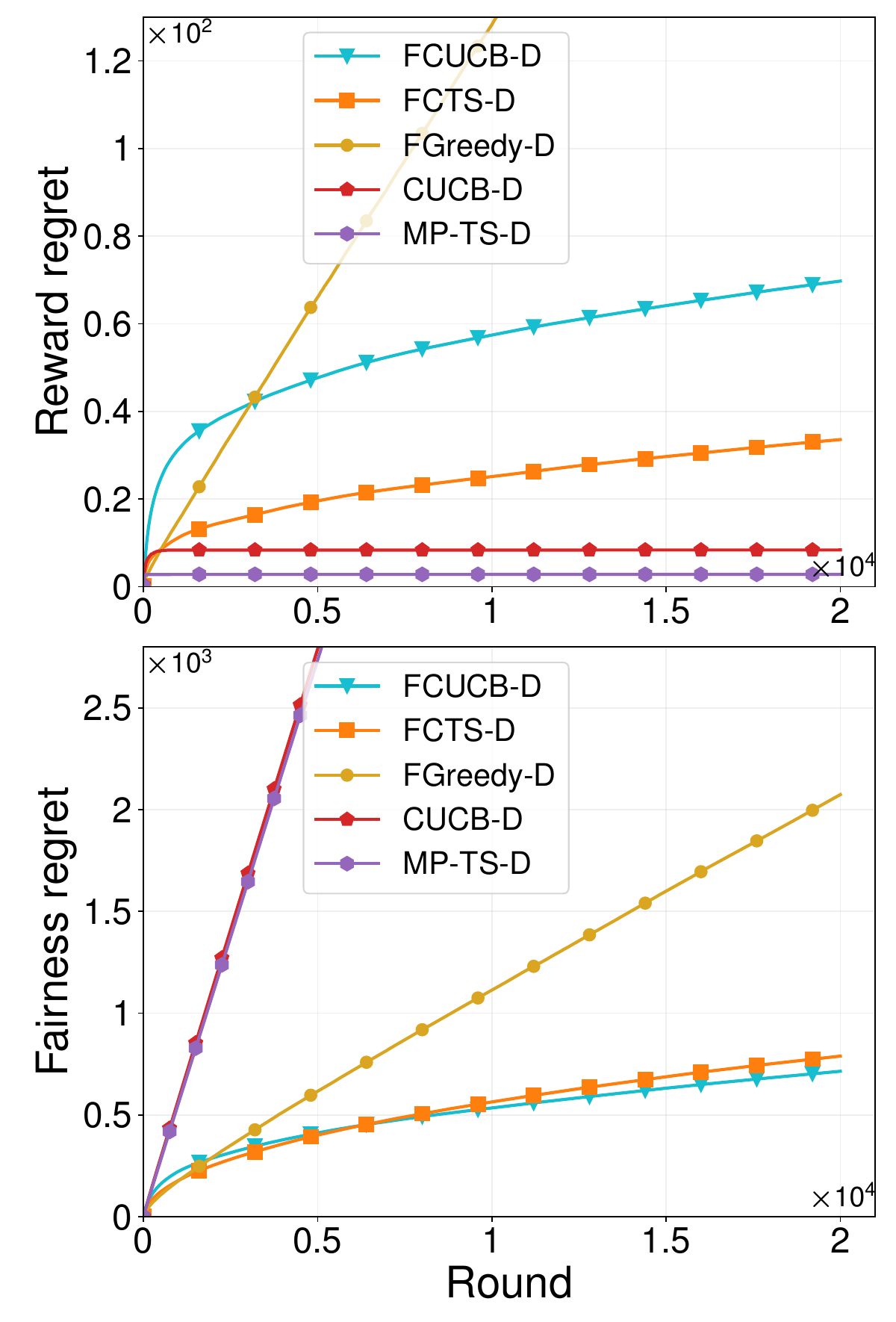}
	\label{fig:pareto_delays_slope_c=2}
	}
	\subfigure[$c=4$]{
		\centering
		\includegraphics[width=0.23\textwidth]{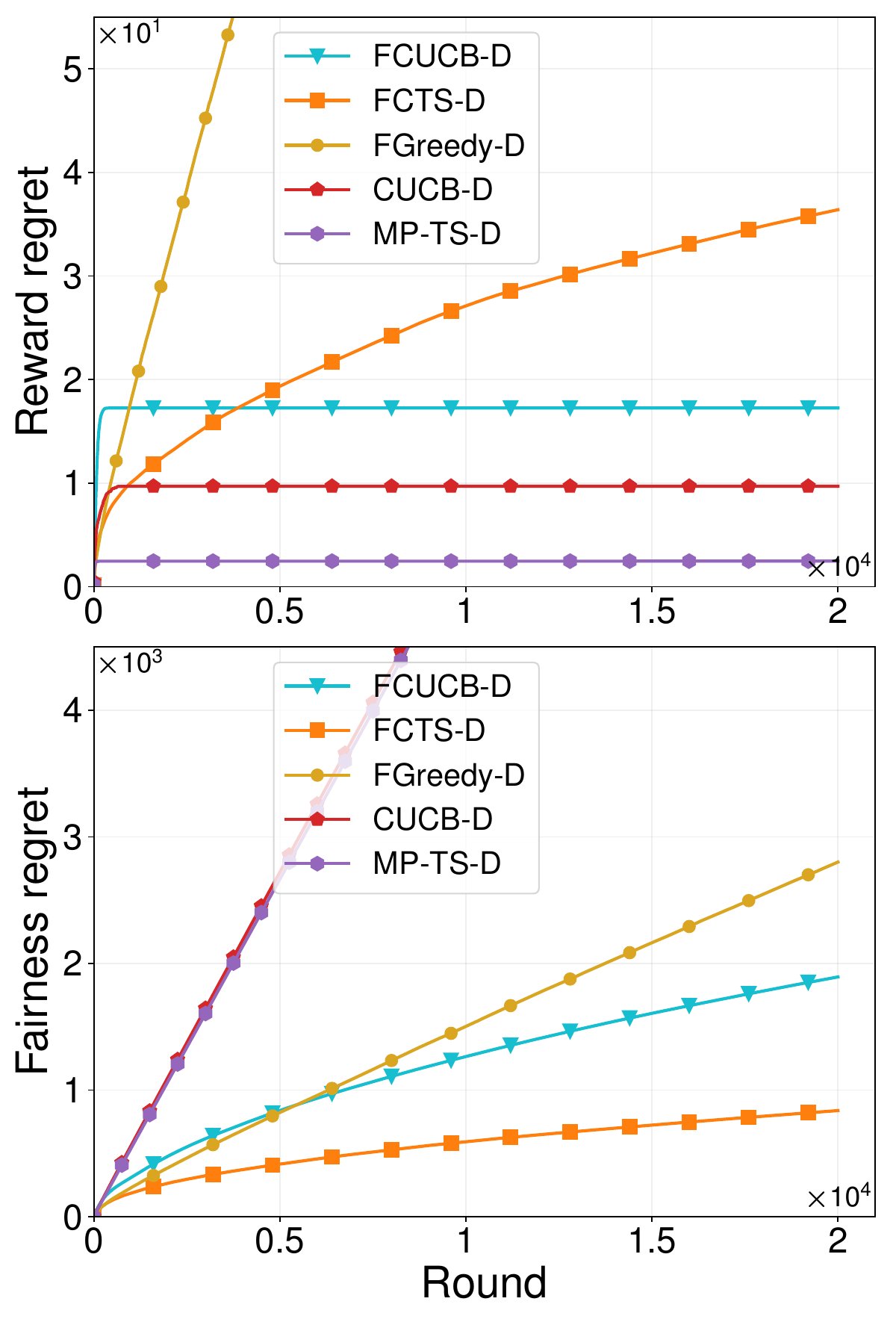}
		\label{fig:pareto_delays_slope_c=4}              
	}
	\subfigure[$c=6$]{
		\centering
		\includegraphics[width=0.23\textwidth]{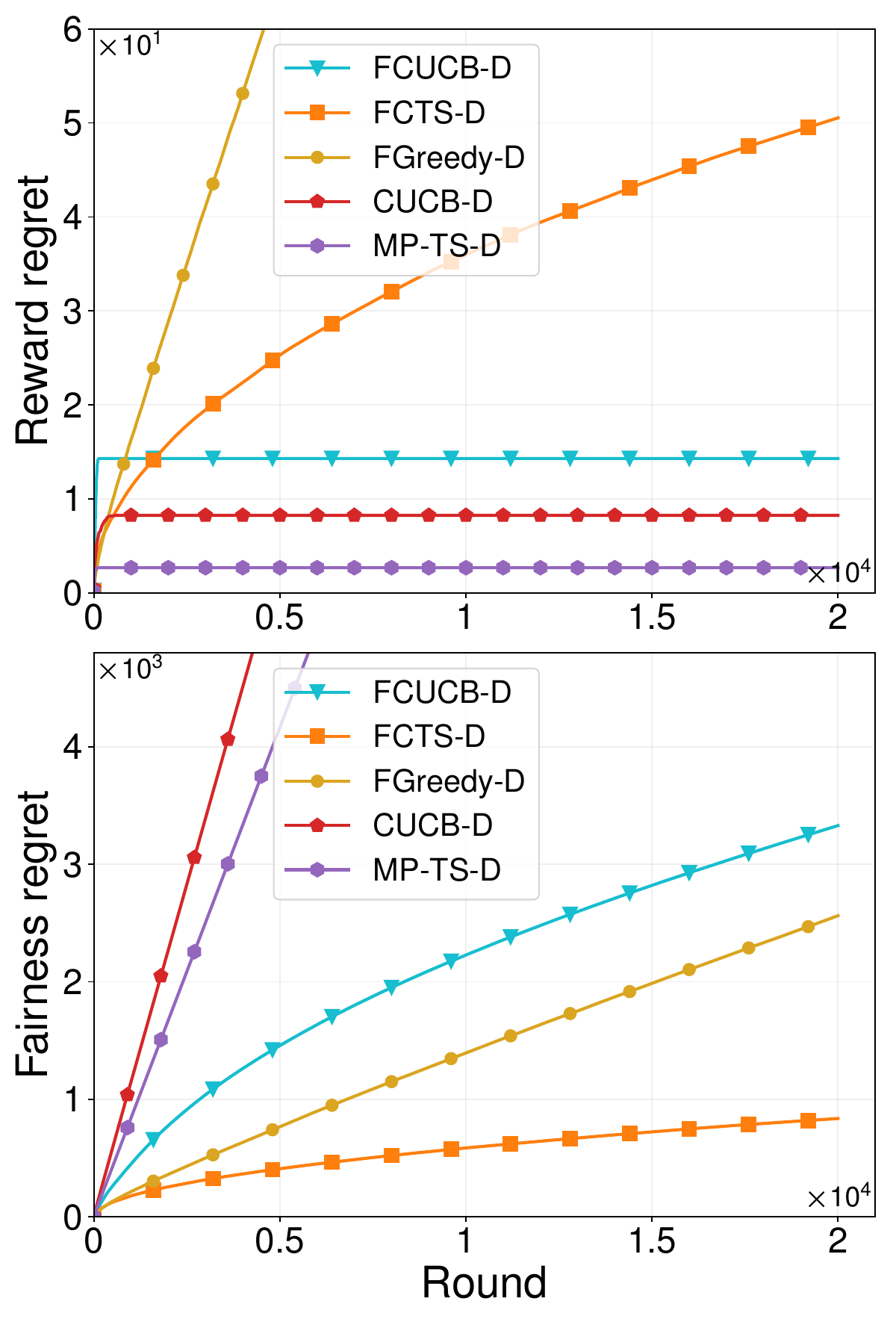}
		\label{fig:pareto_delays_slope_c=6}
	}
 	\subfigure[$c=8$]{
		\centering
		\includegraphics[width=0.23\textwidth]{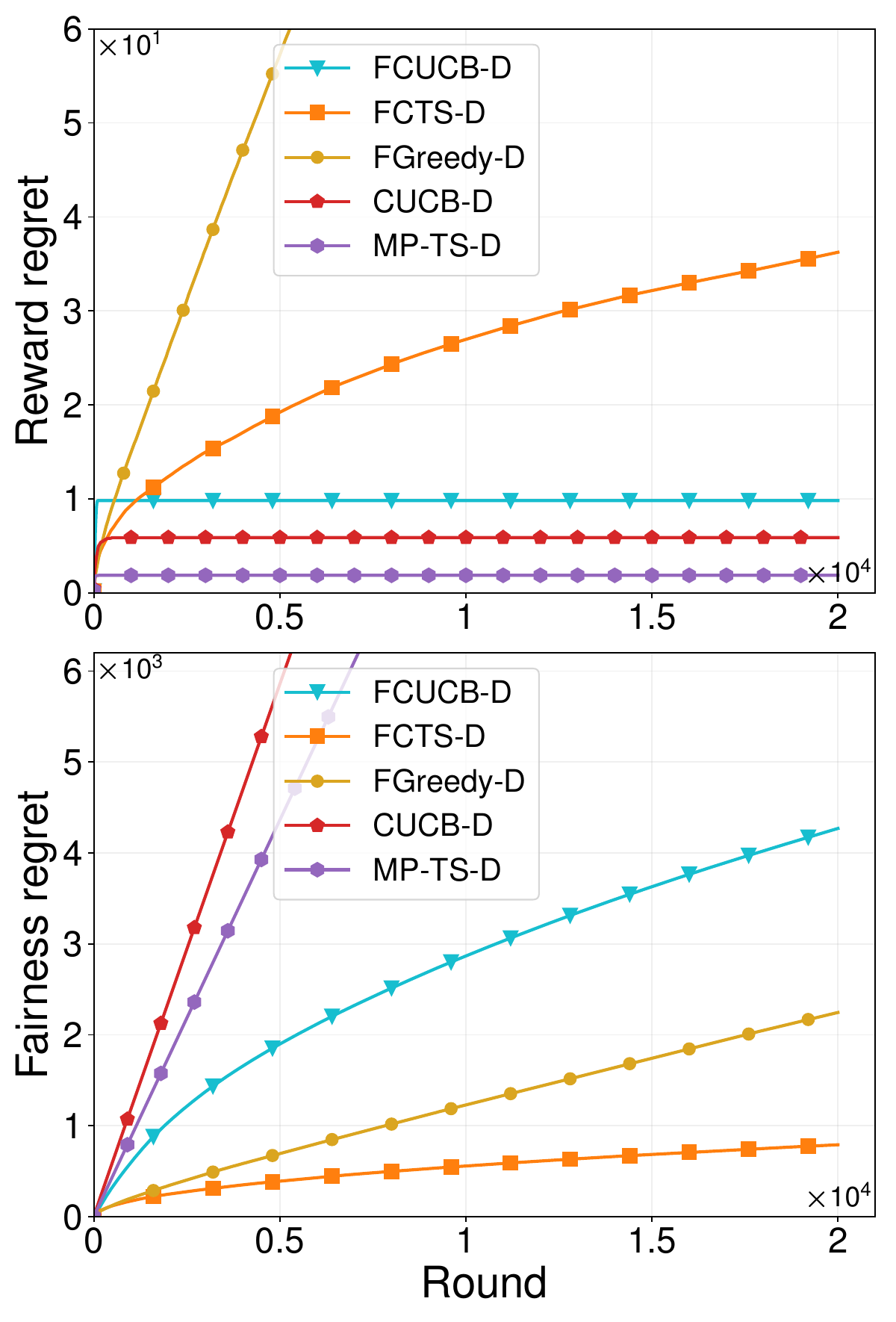}
		\label{fig:pareto_delays_slope_c=8}              
	}
	\caption{Experiment results of the different bandit algorithms using different merit
    functions under $\alpha$-Pareto feedback delays.}
	\label{fig:experiments_on_pareto_delays_slope}
\end{figure*}

\begin{figure*}[!t]
	\centering
	\subfigure[$c=2$]{
	\centering
	\includegraphics[width=0.23\textwidth]{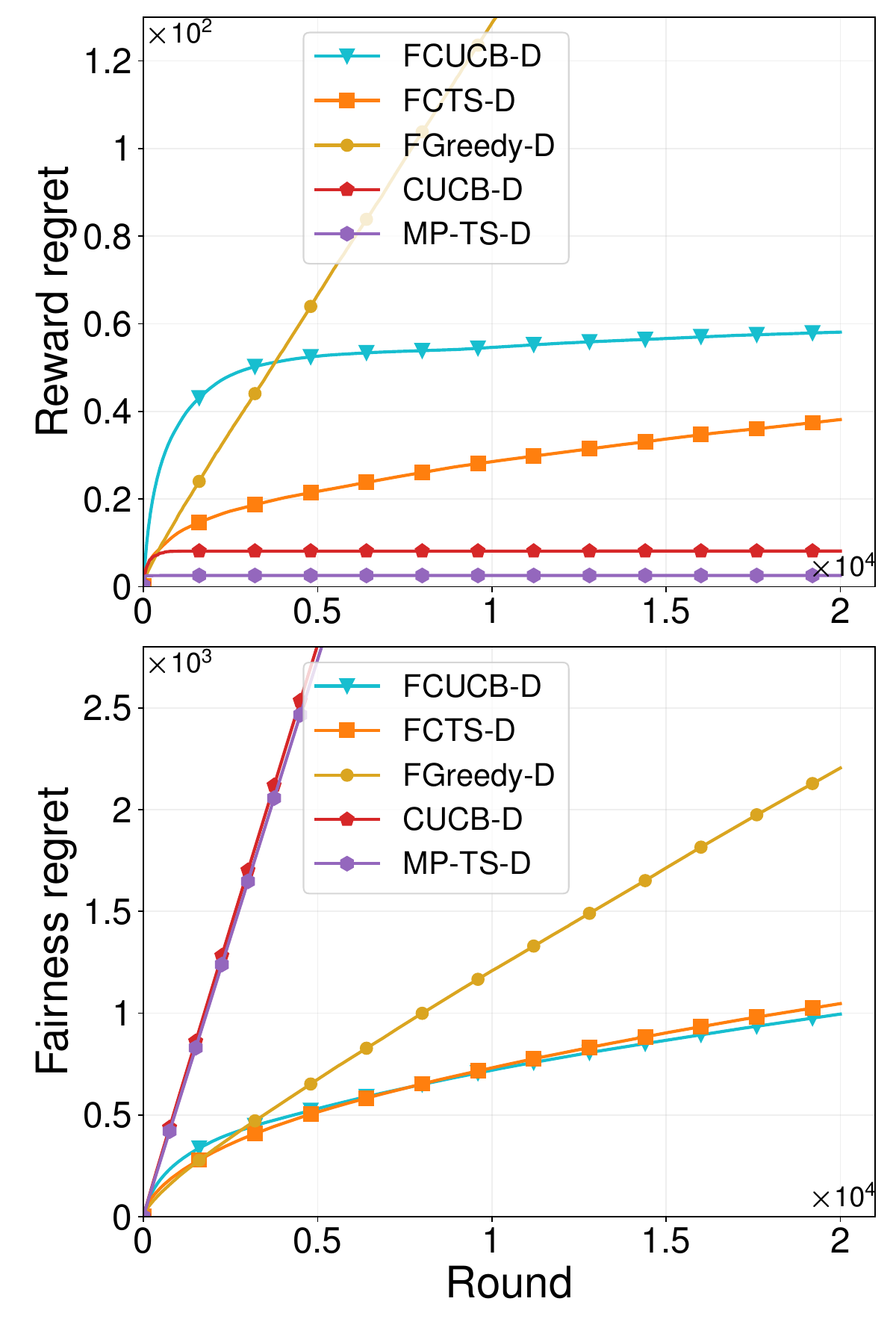}
 	\label{fig:packet_loss_delays_slope_c=2}
	}
	\subfigure[$c=4$]{
		\centering
		\includegraphics[width=0.23\textwidth]{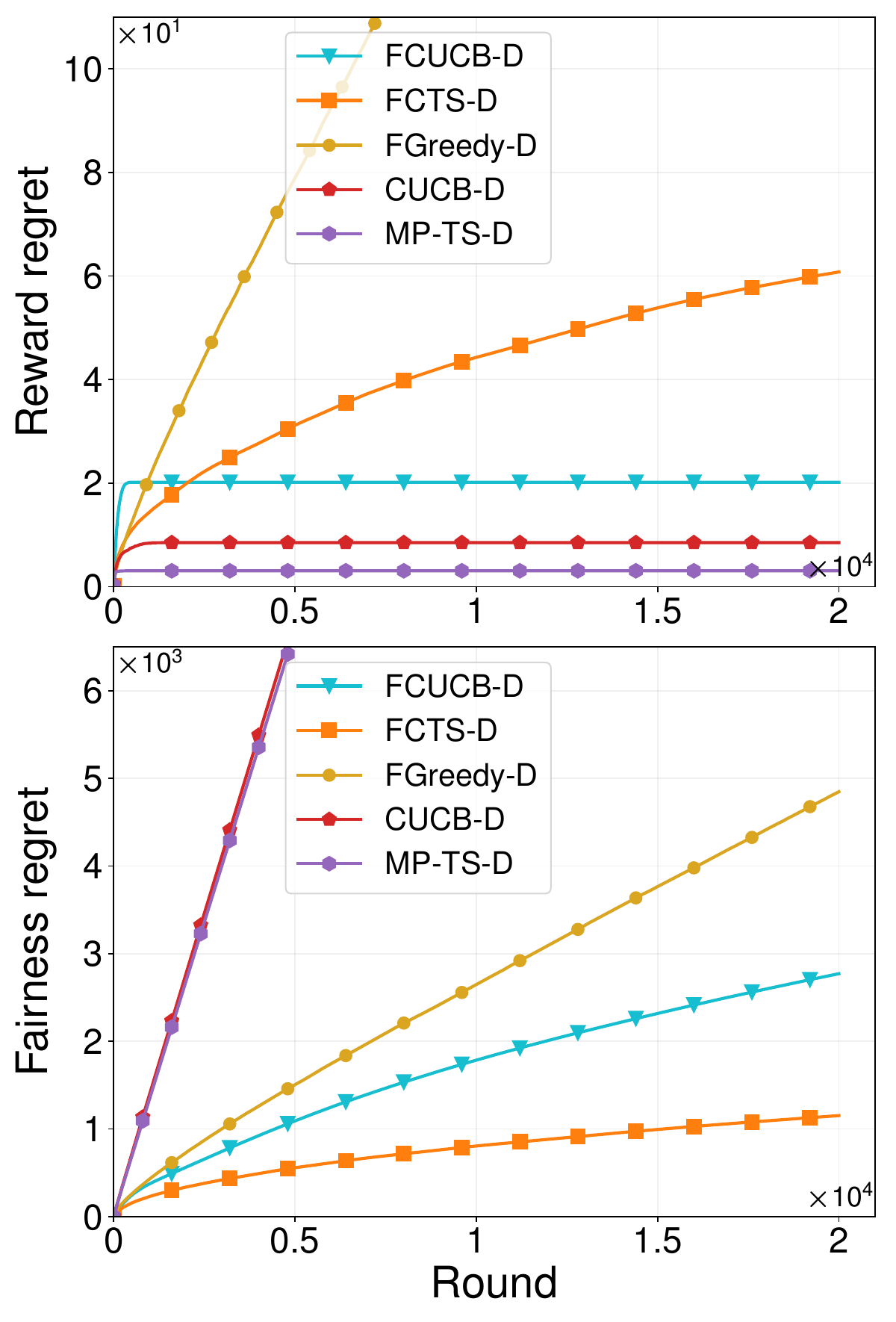}
		\label{fig:packet_loss_delays_slope_c=4}              
	}
	\subfigure[$c=6$]{
		\centering
		\includegraphics[width=0.23\textwidth]{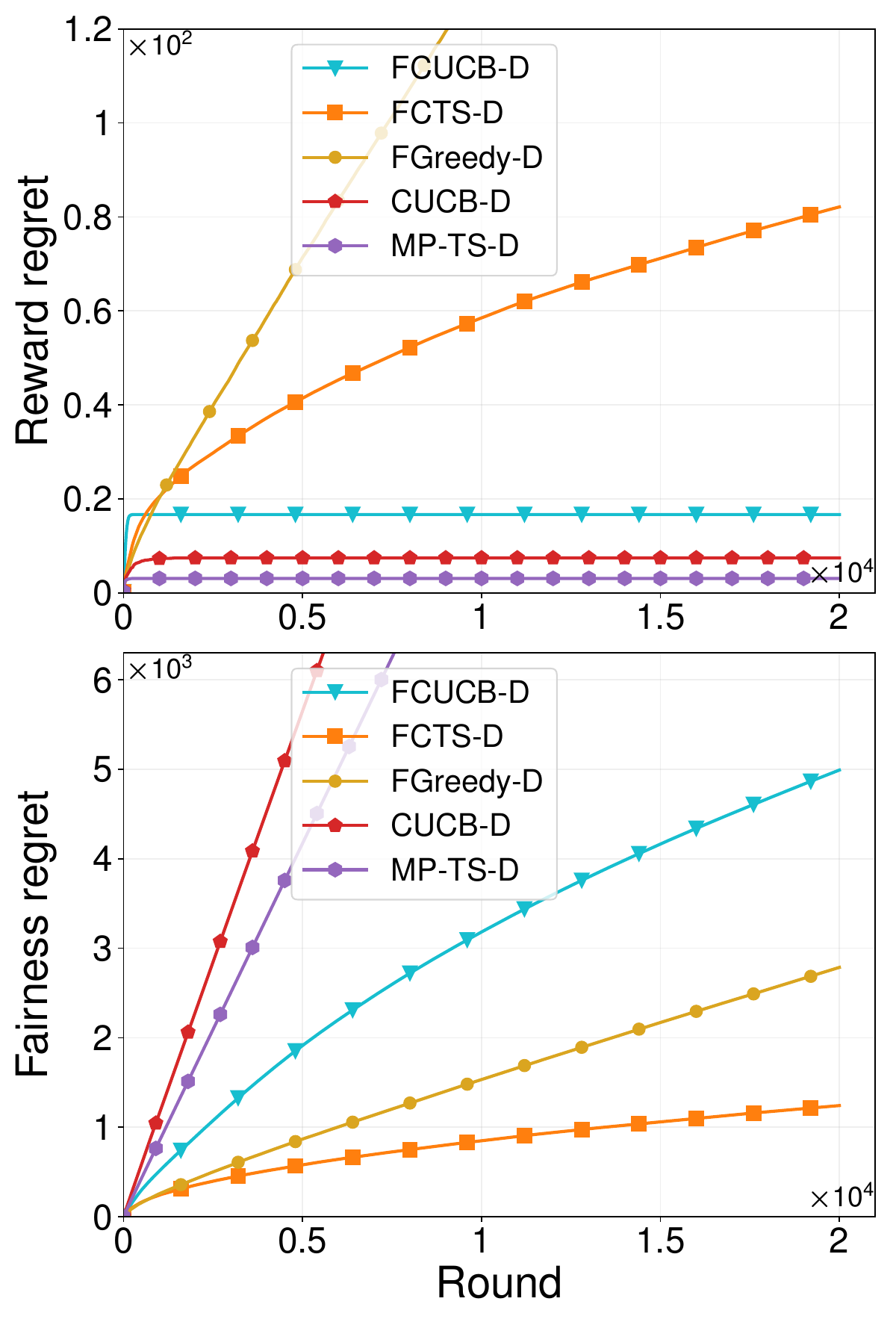}
		\label{fig:packet_loss_delays_slope_c=6}
	}
 	\subfigure[$c=8$]{
		\centering
		\includegraphics[width=0.23\textwidth]{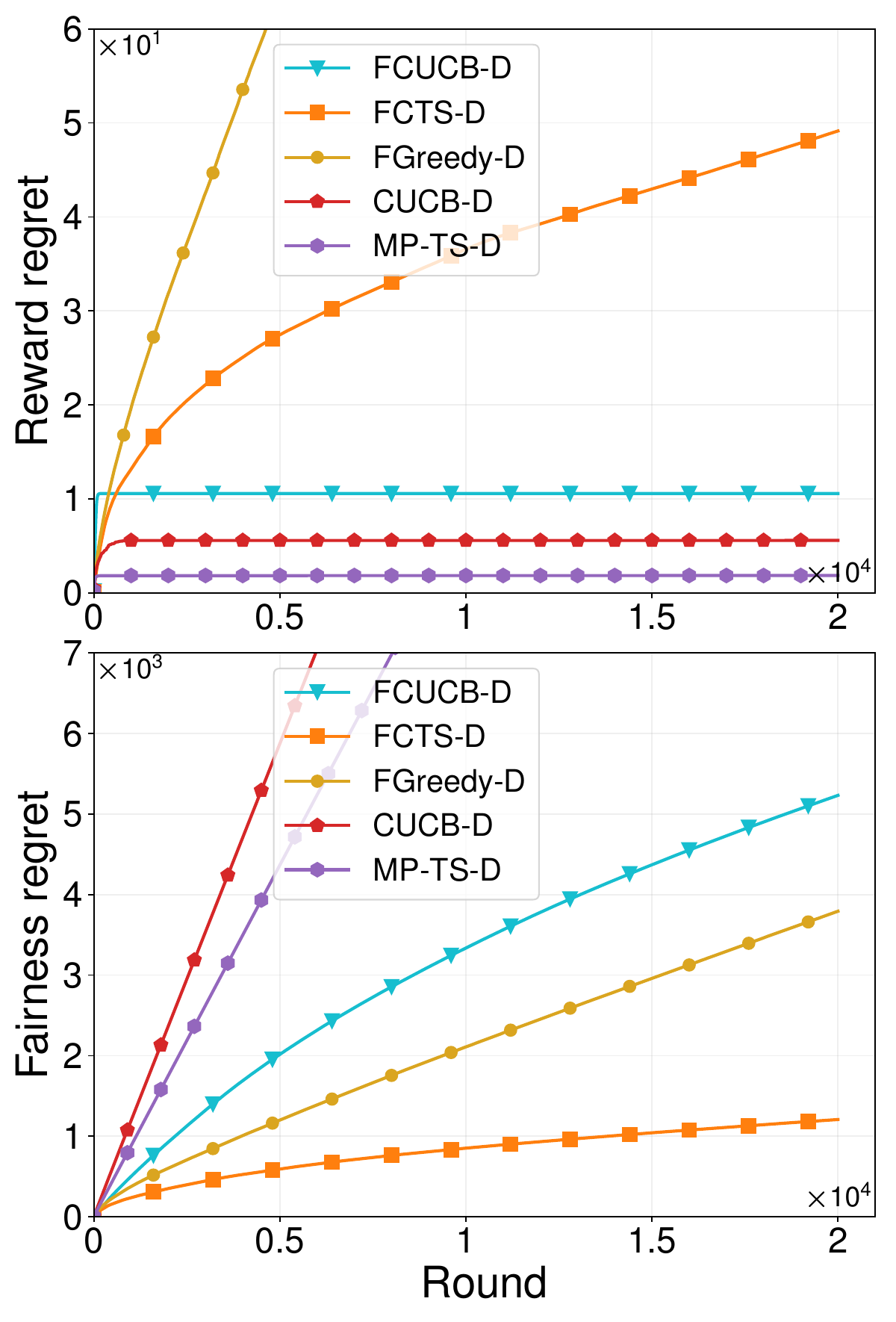}
		\label{fig:packet_loss_delays_slope_c=8}              
	}
  	\caption{Experiment results of the different bandit algorithms
    using different merit functions under packet-loss feedback delays.}
	\label{fig:experiments_on_packet_loss_delays_slope}
\end{figure*}
\begin{figure*}[!t]
	\centering
	\subfigure[$c=2$]{
	\centering
	\includegraphics[width=0.23\textwidth]{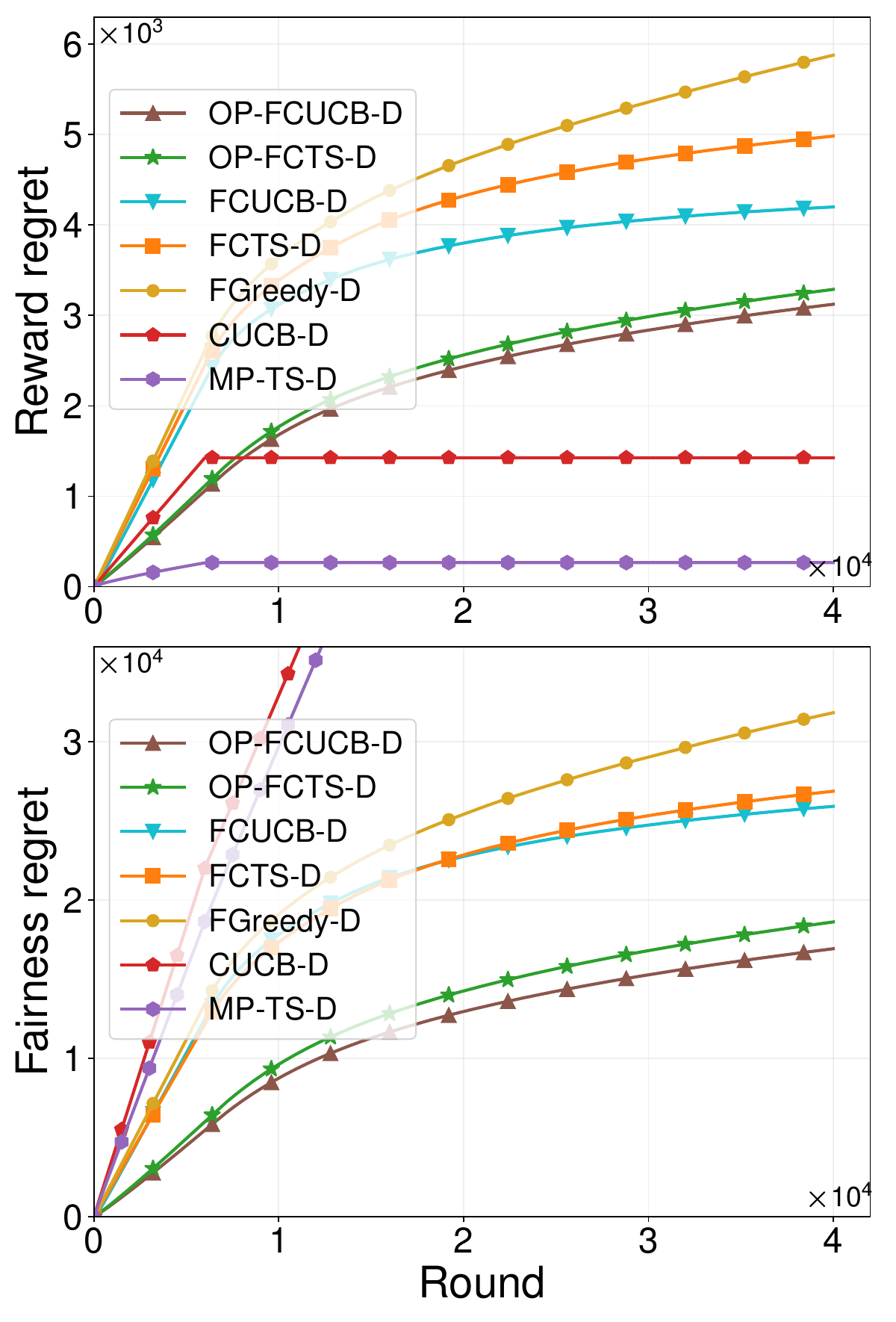}
	\label{fig:biased_delays_slope_c=2}
	}
	\subfigure[$c=4$]{
		\centering
		\includegraphics[width=0.23\textwidth]{Figures_arxiv/biased_delays_slope_c=4.pdf}
		\label{fig:biased_delays_slope_c=4}              
	}
	\subfigure[$c=6$]{
		\centering
		\includegraphics[width=0.23\textwidth]{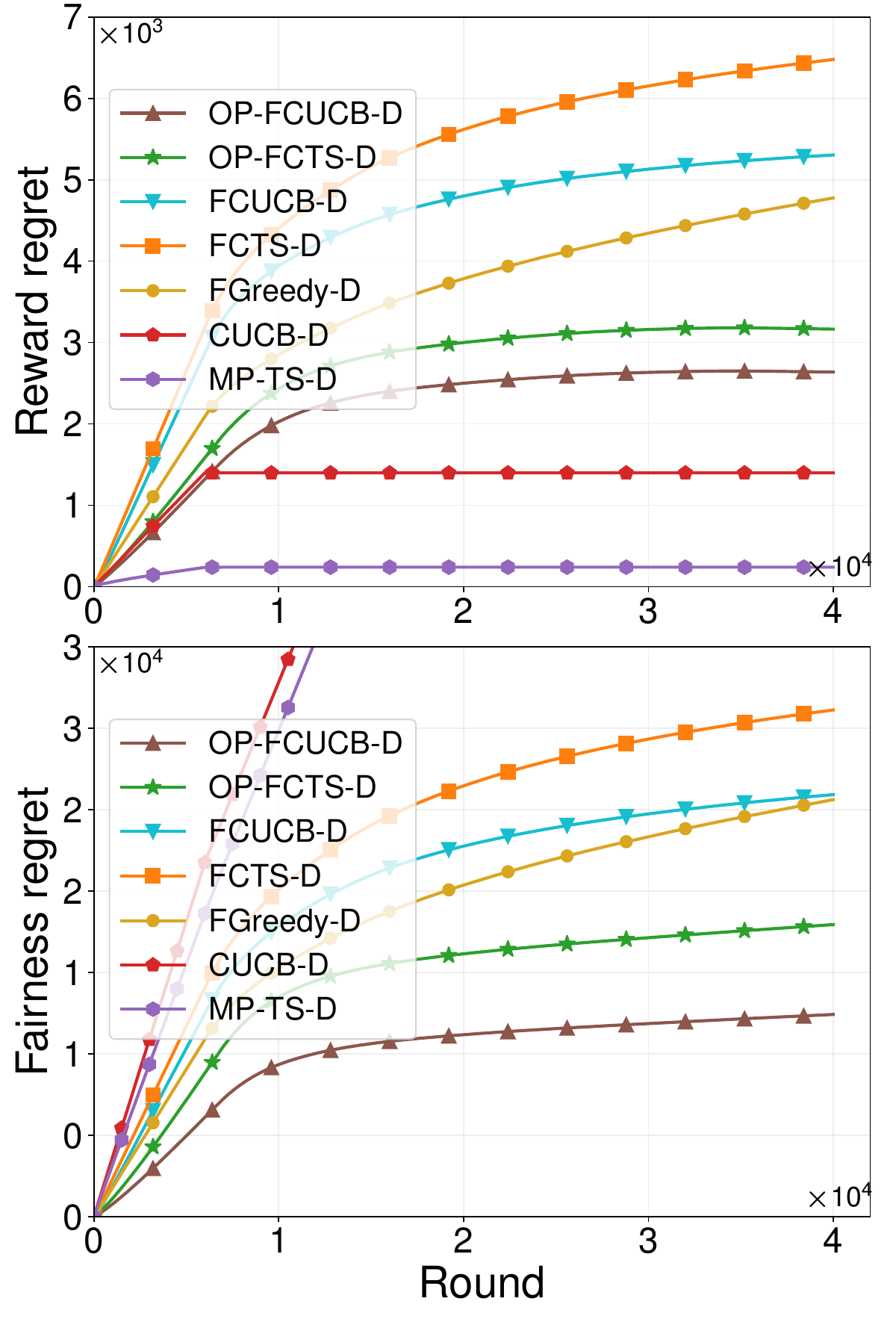}
		\label{fig:biased_delays_slope_c=6}
	}
 	\subfigure[$c=8$]{
		\centering
		\includegraphics[width=0.23\textwidth]{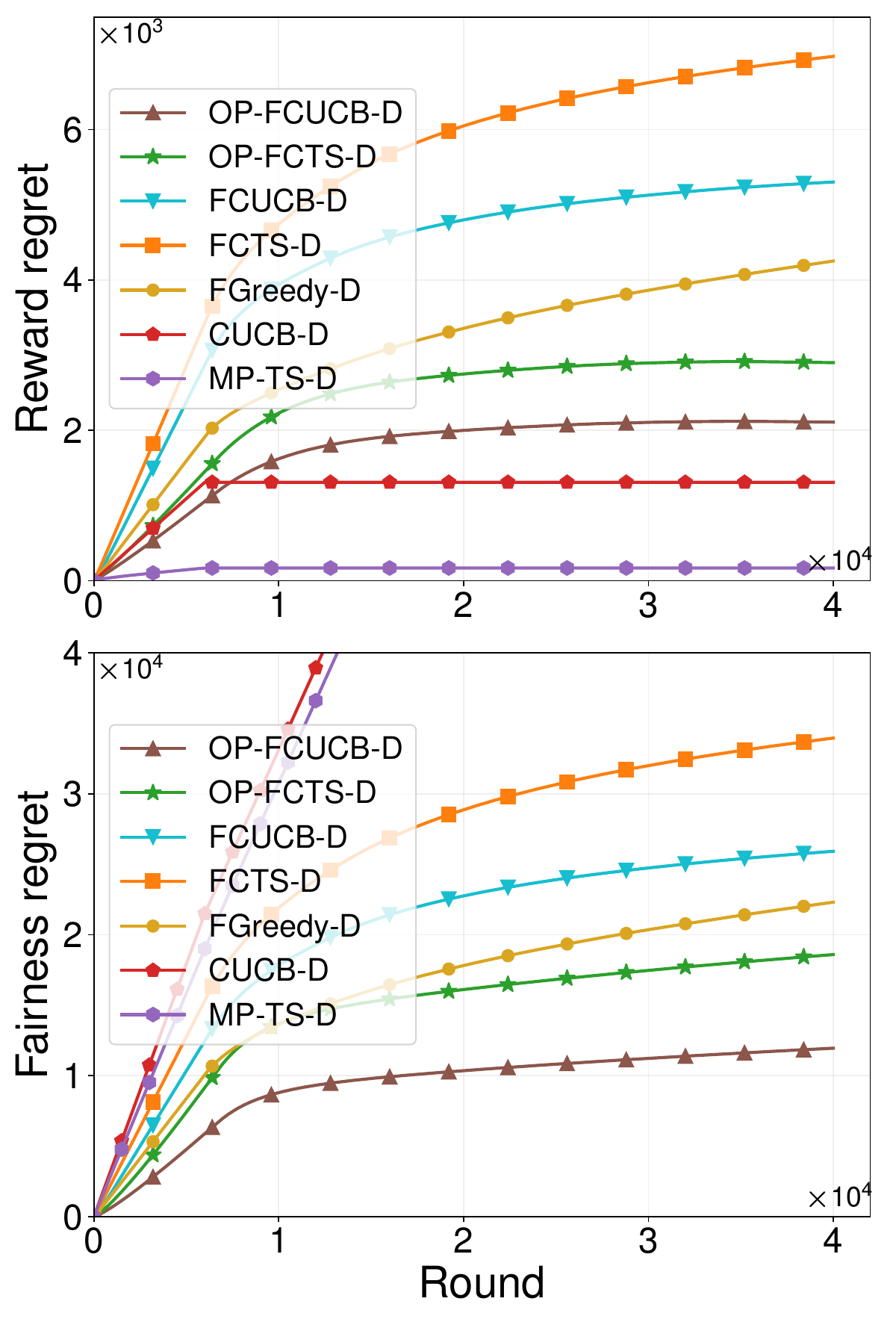}
		\label{fig:biased_delays_slope_c=8}              
	}
	\caption{Experiment results of the different bandit algorithms using different merit functions under biased feedback delays.}
	\label{fig:experiments_on_baised_delay_slope}
\end{figure*}

\begin{figure*}[!t]
\centering
\subfigure[Reward Regret]{
\centering
\includegraphics[width=0.35\textwidth]{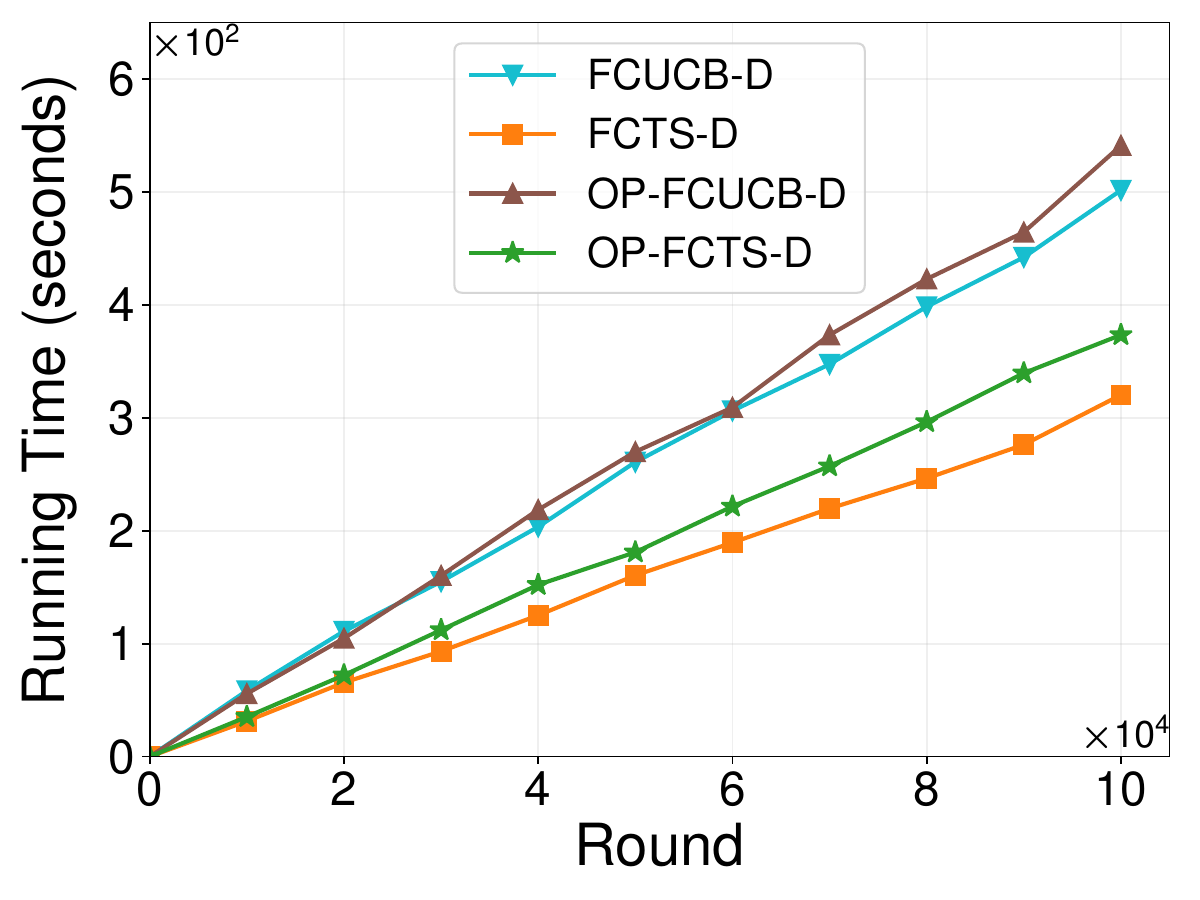}
	}	
	\caption{Running time of the different bandit algorithms.}
    		\label{fig:running-time}
\end{figure*}

\end{document}